\documentclass[colorlinks=true, allcolors=blue]{fairmeta}


\usepackage{amsmath,amsfonts,bm}









\def\eqref#1{equation~\ref{#1}}









\def\1{\bm{1}}










\DeclareMathAlphabet{\mathsfit}{\encodingdefault}{\sfdefault}{m}{sl}
\SetMathAlphabet{\mathsfit}{bold}{\encodingdefault}{\sfdefault}{bx}{n}











\newcommand{\R}{\mathbb{R}}



\usepackage[utf8]{inputenc} 
\usepackage[T1]{fontenc}    
\usepackage{url}            
\usepackage{booktabs}       
\usepackage{amsfonts}       
\usepackage{nicefrac}       
\usepackage{microtype}      
\usepackage[table]{xcolor}         
\usepackage{comment}
\usepackage{csquotes}
\usepackage{makecell}

\usepackage{amsmath}
\usepackage{algorithm}
\usepackage{algpseudocode}
\usepackage{graphicx}
\usepackage{amsthm}
\usepackage{amssymb}
\usepackage{thmtools}
\usepackage{thm-restate}
\usepackage{subcaption}
\usepackage{wrapfig}

\usepackage{enumitem} 
\usepackage{empheq}
\usepackage[most]{tcolorbox} 

\DeclareMathOperator{\diloco}{diloco}
\DeclareMathOperator{\primal}{primal}
\DeclareMathOperator{\modern}{modern}
\DeclareMathOperator{\baseopt}{\texttt{BaseOpt}}
\DeclareMathOperator{\baseoptiter}{\texttt{BaseOptIteration}}

\colorlet{mygray}{gray!50}

\definecolor{colorblue}{HTML}{E7F3FF}
\definecolor{colorred}{HTML}{FDF2F0}
\definecolor{mygray}{gray}{0.6}
\definecolor{colordarkgray}{HTML}{E0E0E0}  
\definecolor{colorlightgray}{HTML}{FFFFFF} 
\definecolor{colorbrown}{HTML}{F8F0E3} 
\definecolor{colorgreen}{HTML}{E8F5E9} 

\newtheorem{thm}{Theorem}
\newtheorem{lem}{Lemma}
\newtheorem{prop}{Proposition}
\newtheorem{corollary}{Corollary}

\usepackage[colorinlistoftodos,bordercolor=orange,backgroundcolor=orange!20,linecolor=orange,textsize=scriptsize]{todonotes}

\title{Smoothing DiLoCo with Primal Averaging for Faster Training of LLMs}

\author[1]{Aaron Defazio}
\author[1]{Konstantin Mishchenko}
\author[1]{Parameswaran Raman}
\author[1]{Hao-Jun Michael Shi}
\author[1]{Lin Xiao}

\affiliation[1]{Meta Superintelligence Labs, Menlo Park, CA 94025, USA \\
\texttt{\{adefazio, konstmish, params, hjmshi, linx\}@meta.com} \\
}


\abstract{
    We propose Generalized Primal Averaging (GPA), an extension of Nesterov's method that unifies and generalizes recent averaging-based optimizers like single-worker DiLoCo and Schedule-Free, within a non-distributed setting. While DiLoCo relies on a memory-intensive two-loop structure to periodically aggregate pseudo-gradients using Nesterov momentum, GPA eliminates this complexity by decoupling Nesterov's interpolation constants to enable smooth iterate averaging at every step. Structurally, GPA resembles Schedule-Free but replaces uniform averaging with exponential moving averaging. Empirically, GPA consistently outperforms single-worker DiLoCo and AdamW with reduced memory overhead. GPA achieves speedups of 8.71\%, 10.13\%, and 9.58\% over the AdamW baseline in terms of steps to reach target validation loss for Llama-160M, 1B, and 8B models, respectively. Similarly, on the ImageNet ViT workload, GPA achieves speedups of 7\% and 25.5\% in the small and large batch settings respectively. Furthermore, we prove that for any base optimizer with $\mathcal{O}(\sqrt{T})$ regret, where $T$ is the number of iterations, GPA matches or exceeds the original convergence guarantees depending on the interpolation constants.
}

\date{\today}
\correspondence{Aaron Defazio: \email{adefazio@meta.com}, Parameswaran Raman: \email{params@meta.com}}

\metadata[Code]{\url{https://github.com/facebookresearch/optimizers/tree/main/gpa}}

\begin{document}

\maketitle

\section{Introduction}
\label{section:intro}

As large language models (LLMs) demonstrate increasingly remarkable capabilities at scale \citep{achiam2023gpt, grattafiori2024llama3herdmodels, liu2024deepseek}, the pre-training phase has become one of the most resource intensive stages in the language model training pipeline. 
This has encouraged the development of training algorithms and optimizers that enhance the efficiency, scalability, and robustness of language model pre-training. 
One significant area of research is the design of training algorithms for scalable distributed learning.
In this area, the DiLoCo algorithm has emerged as the leading practical approach \citep{diloco,liu2024asynchronous,douillard2025streaming,charles2025communication}. 

Despite its practical success, the underlying reasons for DiLoCo's effectiveness remain poorly understood. Importantly, DiLoCo is not limited to distributed training: single-worker DiLoCo outperforms AdamW \emph{even in the non-distributed setting}. \citet{kallusky2025snoo} suggest that this is due to its novel combination of the Nesterov optimizer with the Lookahead method \citep{lookahead}, called Step-$K$ Nesterov. The method accumulates multiple updates from a base optimizer on an inner set of weights, forming what is called a \emph{pseudo-gradient}. It then applies Nesterov momentum to the pseudo-gradient to update an outer set of weights, and subsequently resets the inner weights to match the new values of the outer weights. On a 160 million parameter Llama model, single-worker DiLoCo achieves speedups of up to 6.32\% in terms of steps to reach the final validation loss by AdamW; see Figure~\ref{fig:bar_consolidated_valloss_vs_comm_intervals}.

\begin{figure}[htbp]
    \begin{subfigure}{0.54\textwidth}
        \includegraphics[width=\linewidth]{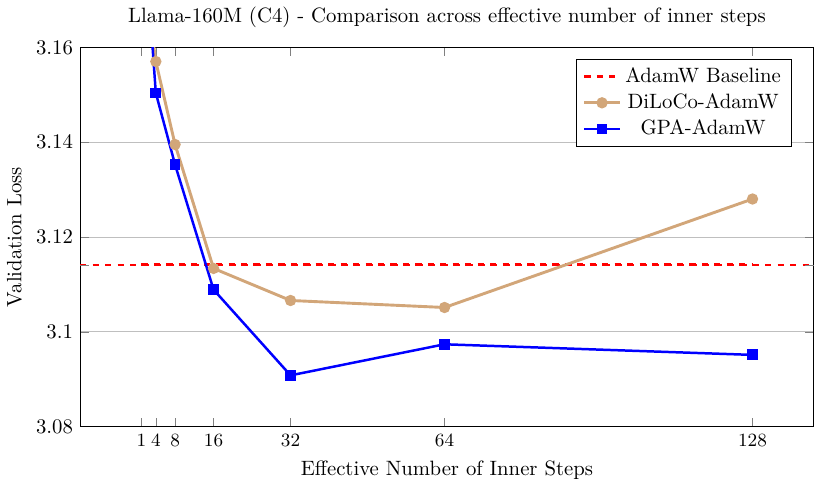}
        \caption{Both GPA and single-worker DiLoCo, when using AdamW as their base optimizer, outperform the tuned AdamW baseline for training a 160M parameter Llama model. Notably, increasing the number of inner steps (up to 64) improves the performance of single-worker DiLoCo. GPA instead updates the parameters at every step using a heuristic to choose interpolation constants that approximately match the number of inner steps for single-worker DiLoCo.}
        \label{fig:consolidated_valloss_vs_comm_intervals}
    \end{subfigure}
    \hfill
    \begin{subfigure}{0.44\textwidth}
        \begin{center}
            \includegraphics[width=\textwidth]{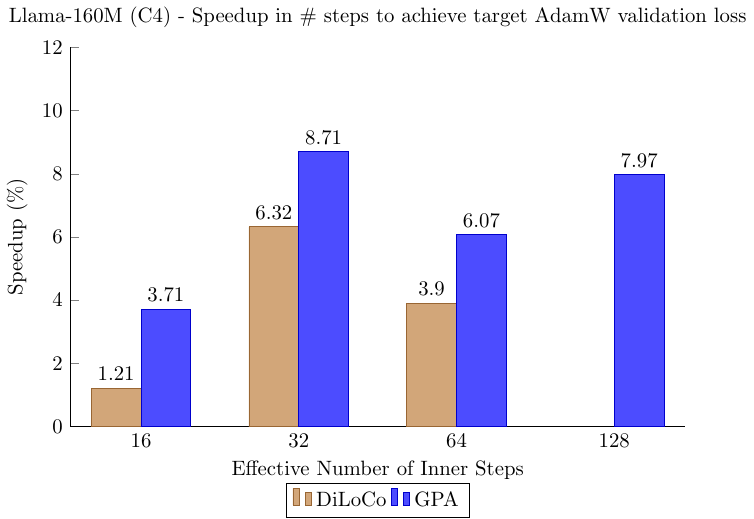}
        \end{center}
        \caption{Speedup achieved by single-worker DiLoCo and GPA measured in terms of reduction in number of steps required to attain the final validation loss obtained by AdamW, across different effective numbers of inner steps. GPA attains the highest speedup of 8.71\% when the effective inner steps is equal to 32. Single-worker DiLoCo does not exhibit any speedup for 128 inner steps.}
        \label{fig:bar_consolidated_valloss_vs_comm_intervals}
    \end{subfigure}
    \caption{Comparison of validation loss and speedup for AdamW, single-worker DiLoCo, and GPA on Llama-160M model. See Appendix~\ref{app:expmt_details_consolidated_speedup_1B} for similar results on Llama-1B.}
    \label{fig:consolidated_plots}
\end{figure}

Intriguingly, DiLoCo's performance initially improves as the number of inner steps increases. With each inner step, DiLoCo's outer weights drift farther from its inner weights, similar to meta-learning optimizers such as Reptile \citep{nichol2018reptile} and First-Order MAML \citep{finn2017model}. As a result, updates to the outer weights occur only at periodic intervals, causing information from the data to be integrated in a discontinuous, choppy manner rather than smoothly at every iteration. This restriction on information flow to the outer weights appears unnecessary from an optimization perspective, yet counterintuitively improves its performance; see Figure~\ref{fig:consolidated_valloss_vs_comm_intervals}.

Concurrently, the Schedule-Free optimizer recently won the AlgoPerf Algorithmic Efficiency challenge self-tuning track \citep{Dahl2023AlgoPerf,schedule-free}. Its core novelty lies in computing gradients at a point that interpolates between the uniform average of past weights and the current weights. 
Empirically, Schedule-Free matches the performance obtained by using learning rate schedules without using any schedule explicitly, while providing stronger theoretical last-iterate convergence guarantees similar to Polyak-Ruppert averaging \citep{ruppert,polyak,polyak1992acceleration}. However, its reliance on uniform averaging limits its flexibility and performance in some settings.

In this paper, we argue that these two lines of work -- DiLoCo and Schedule-Free -- are closely related and can be generalized and improved through a unified framework of \emph{primal averaging}. Specifically, our contributions are as follows:
\begin{itemize}
    \item We propose a novel generalization of Nesterov's method in its primal averaging formulation called \emph{Generalized Primal Averaging} (GPA). The method can be interpreted as a \emph{smoothed version of single-worker DiLoCo} that incrementally averages iterates at every step. It can also be viewed as a subtle change of Schedule-Free that replaces uniform averaging with exponential moving averaging through a decoupled interpolation parameter to improve its practical performance.
    \item In contrast to single-worker DiLoCo, GPA \emph{eliminates the two-loop structure}, thereby requiring only a single additional buffer with one less hyperparameter to tune. Because it incrementally averages iterates at every step, the method consistently exhibits more \emph{stable training behavior} than single-worker DiLoCo.
    \item Our experiments demonstrate that GPA \emph{consistently outperforms} single-worker DiLoCo and AdamW on dense 160 million and 1 billion parameter language models. 
    We validate our results on different modalities through an 8 billion parameter Llama code generation model and through a vision workload using ImageNet ViT in both small- and large-batch settings.
    In particular, on the Llama-160M, 1B, and 8B models, we find that GPA \emph{provides speedups} of 8.71\%, 10.13\%, and 9.58\%, respectively, in terms of steps to reach the baseline validation loss. 
    Likewise, GPA obtains speedups of 7\% and 25.5\% (small and large batch) on the ImageNet ViT workloads.
    \item We provide theoretical justification for GPA by proving convergence guarantees that show improved convergence over the base optimizer on stochastic convex and non-smooth functions under some settings.
\end{itemize}

\section{Background}
\label{section:background}

We frame language model pre-training as the expected risk minimization problem
\begin{equation}
\label{eq:expected_risk}
\min_{x \in \R^n} F(x) = \mathbb{E}_{\xi \sim \mathcal{D}}\left[f(x; \xi)\right],
\end{equation}
where $\xi \sim \mathcal{D}$ is drawn from an underlying stationary data distribution $\mathcal{D}$. We assume that each optimizer step has access to the stochastic minibatch gradient $g(x^{(t)}; \xi^{(t)}) \in \partial f(x^{(t)}; \xi^{(t)})$ evaluated at each iteration $t$ on a minibatch of data $\xi^{(t)}$, over a total of $T$ steps.\footnote{We assume that $f$ is convex for the convergence analysis, but we verify its performance on non-convex, possibly non-smooth functions.}

We also assume that the base optimizer is of the form $x^{(t + 1)} = x^{(t)} + \gamma^{(t)} d^{(t)}$ with learning rate $\gamma^{(t)} > 0$ and search direction $d^{(t)} \in \R^n$. The search direction is most commonly defined as $d^{(t)} = - H^{(t)} m^{(t)}$, where $m^{(t)} \in \R^n$ is a gradient estimator, and $H^{(t)} \in \R^{n \times n}$ is a symmetric positive definite preconditioner matrix. This includes popular methods such as SGD, Adam, Shampoo, SOAP, AdEMAMix, or Muon for different choices of $m^{(t)}$ and $H^{(t)}$ \citep{robbins1951stochastic,kingma2014adam,pmlr-v80-gupta18a,loshchilov2018decoupled,anil2020scalable,shi2023distributed,vyas2024soap,jordan2024muon,pagliardini2025the,eschenhagen2025purifying}.

\subsection{Different Formulations of Nesterov Momentum}

Nesterov momentum has played a critical role in optimization for deep learning \citep{pmlr-v28-sutskever13}. Despite its importance, there is still substantial confusion in the literature regarding Nesterov's formulation, as it can be written in at least seven different ways \citep{adefazio-curvedgeom2019}. These formulations are equivalent in the sense that a direct mapping exists between them, but they may not return the same iterate. 

For instance, Nesterov's method was popularized for deep learning in \emph{Sutskever's formulation} \citep{pmlr-v28-sutskever13}, which presents the algorithm as:
\begin{align}
\label{eq:sutskever_form}
\begin{split}
b^{(t)} & = \mu b^{(t - 1)} - \gamma^{(t)} g(x^{(t)} + \mu b^{(t-1)}; \xi^{(t)}), \\
x^{(t+1)} & =x^{(t)} + b^{(t)},
\end{split}
\end{align}
where $\mu > 0$ is the momentum hyperparameter and $b^{(t)} \in \R^n$ is the momentum buffer initialized at $b^{(0)} = 0$. An alternative formulation, which we call the \emph{modern formulation}, is used by software libraries such as \texttt{PyTorch}\footnote{\href{https://docs.pytorch.org/docs/2.8/generated/torch.optim.SGD.html}{PyTorch SGD documentation}} and \texttt{JAX}\footnote{\href{https://optax.readthedocs.io/en/latest/api/optimizers.html\#optax.sgd}{Optax (JAX) SGD documentation}} due to its ease of use:
\begin{align}
\label{eq:modern_form}
\begin{split}
b^{(t)} & = \mu b^{(t - 1)} + g(x^{(t)}; \xi^{(t)}),\\
x^{(t+1)} & = x^{(t)} - \gamma^{(t)} [\mu b^{(t)} + g(x^{(t)}; \xi^{(t)})].
\end{split}
\end{align}
In both formulations, we maintain a momentum buffer that averages the gradients seen throughout the training process. However, unlike Sutskever's formulation (\Cref{eq:sutskever_form}), the modern formulation (\Cref{eq:modern_form}) uses the iterate $x^{(t)}$ directly for the gradient computation, rather than the ancillary point $x^{(t)} + \mu b^{(t-1)}$, simplifying its practical implementation. If both formulations are run side-by-side with the same seed, they will evaluate gradients at exactly the same points, but their validation losses at iterates $x^{(t)}$ for each method will differ. 

Our approach instead builds upon a third form, which we call the \emph{primal averaging formulation}:
\begin{align}
\label{eq:primal_averaging_form}
\begin{split}
y^{(t)} & = \mu x^{(t)} + (1 - \mu)z^{(t)}, \\
z^{(t+1)} & = z^{(t)} - \gamma^{(t)} g(y^{(t)}; \xi^{(t)}),\\
x^{(t+1)} & = \mu x^{(t)} + \left(1-\mu \right)z^{(t+1)},
\end{split}
\end{align}
with $\mu \in [0, 1)$. The first mention of this three-sequence form that we are aware of is by \citet{lanaccel}, although it was only studied under a time-varying $\mu$. 

Unlike the Sutskever and modern formulations framed in equations \ref{eq:sutskever_form} and \ref{eq:modern_form}, the primal averaging formulation in \Cref{eq:primal_averaging_form} explicitly names two iterate sequences: a sequence where the gradients (or, more generally, the search directions) are computed at, i.e., the \emph{gradient computation sequence} $\{ y^{(t)} \}_{t = 1}^T$, as well as another sequence used for model evaluation that accumulates a running average of updated iterates $\{z^{(t)} \}_{t = 1}^T$, i.e., the \emph{model evaluation sequence} $\{ x^{(t)} \}_{t = 1}^T$. Since $y^{(t)}$ interpolates the smoothed sequence $x^{(t)}$ and unsmoothed sequence $z^{(t)}$, it increases the contribution of the gradient update to $y^{(t)}$ compared to $x^{(t)}$. This explicit formulation is convenient for implementation and theoretical analysis, and naturally leads to a view of acceleration as built upon \emph{iterate averaging}, rather than from the physics-inspired intuition of \emph{gradient averaging} behind momentum that is more commonly introduced.

We summarize the relationship between the modern and primal averaging formulations in Proposition \ref{prop:primal-modern} below.
\begin{prop}
    \label{prop:primal-modern}
    Given fixed learning rates $\gamma_{\primal}, \gamma_{\modern} > 0$, the primal averaging formulation of Nesterov's method (\Cref{eq:primal_averaging_form}) is equivalent to its modern formulation (\Cref{eq:modern_form}) in the sense that 
    \begin{equation}
        \begin{aligned}
            y_{\primal}^{(t)} &= x_{\modern}^{(t)} ~~~ \text{and} \\ ~~~ b_{\modern}^{(t)} &= \frac{1}{\left(1 - \mu \right)\gamma_{\primal}}\left(x_{\primal}^{(t)} - x_{\primal}^{(t + 1)}\right),    
        \end{aligned}
    \end{equation}
    when $\mu_{\primal} = \mu_{\modern} = \mu$ and $\left(1 - \mu \right) \gamma_{\primal} = \gamma_{\modern}$.
\end{prop}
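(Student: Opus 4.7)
The plan is to prove the equivalence by induction on $t$, after first deriving two algebraic identities that collapse the three-sequence primal averaging recursion into an equivalent two-sequence form, which can then be matched term-by-term to the modern formulation.

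First, I would eliminate $z^{(t+1)}$ from the primal averaging updates. Writing $(1-\mu)z^{(t)} = y^{(t)}_{\primal} - \mu x^{(t)}_{\primal}$ from the interpolation line and using $\gamma_{\modern}=(1-\mu)\gamma_{\primal}$, the $z$-update gives
\[(1-\mu) z^{(t+1)} = y^{(t)}_{\primal} - \mu x^{(t)}_{\primal} - \gamma_{\modern}\, g(y^{(t)}_{\primal};\xi^{(t)}),\]
which, substituted into $x^{(t+1)} = \mu x^{(t)} + (1-\mu) z^{(t+1)}$, produces the clean identity $x^{(t+1)}_{\primal} = y^{(t)}_{\primal} - \gamma_{\modern}\, g(y^{(t)}_{\primal};\xi^{(t)})$. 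Next, combining $y^{(t)} - x^{(t)} = (1-\mu)(z^{(t)} - x^{(t)})$ with $x^{(t)} = \mu x^{(t-1)} + (1-\mu) z^{(t)}$ shows that $z^{(t)} - x^{(t)} = \tfrac{\mu}{1-\mu}\bigl(x^{(t)} - x^{(t-1)}\bigr)$, yielding the Nesterov-style extrapolation
\[y^{(t)}_{\primal} = x^{(t)}_{\primal} + \mu\bigl(x^{(t)}_{\primal} - x^{(t-1)}_{\primal}\bigr).\]

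With these identities in hand, I would define $\tilde x^{(t)} := y^{(t)}_{\primal}$ and $\tilde b^{(t)} := \tfrac{1}{\gamma_{\modern}}\bigl(x^{(t)}_{\primal} - x^{(t+1)}_{\primal}\bigr)$ and verify that these candidates satisfy the modern recursion. From the first identity, $\tilde b^{(t)} = \tfrac{1}{\gamma_{\modern}}\bigl(x^{(t)}_{\primal} - y^{(t)}_{\primal}\bigr) + g(y^{(t)}_{\primal};\xi^{(t)})$, and substituting the extrapolation identity in the form $x^{(t)}_{\primal} - y^{(t)}_{\primal} = -\mu \gamma_{\modern} \tilde b^{(t-1)}$ yields the modern momentum update $\tilde b^{(t)} = \mu \tilde b^{(t-1)} + g(\tilde x^{(t)};\xi^{(t)})$. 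A parallel calculation, applying the extrapolation identity at $t+1$ and substituting $x^{(t+1)}_{\primal} - x^{(t)}_{\primal} = -\gamma_{\modern} \tilde b^{(t)}$ together with the first identity, gives $\tilde x^{(t+1)} = \tilde x^{(t)} - \gamma_{\modern}\bigl[\mu \tilde b^{(t)} + g(\tilde x^{(t)};\xi^{(t)})\bigr]$, matching the modern iterate update. Induction on $t$ with matching initialization ($z^{(0)}_{\primal} = x^{(0)}_{\primal} = x^{(0)}_{\modern}$, $b^{(-1)}_{\modern} = 0$, and the convention $x^{(-1)}_{\primal} := x^{(0)}_{\primal}$ so that the first-step extrapolation is inert) then closes the equivalence.

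The algebra itself is routine once the two identities are in place; the main subtlety I anticipate is bookkeeping the base case carefully, since the extrapolation form of $y^{(t)}_{\primal}$ implicitly references $x^{(t-1)}_{\primal}$ and so requires a convention at $t=0$ reconciling $y^{(0)}_{\primal} = x^{(0)}_{\primal}$ (from $z^{(0)}=x^{(0)}$) with the $b^{(-1)}=0$ initialization of the modern form. The only other care needed is ensuring the hyperparameter relation $\gamma_{\modern} = (1-\mu)\gamma_{\primal}$ is applied consistently wherever a learning-rate scaling appears.
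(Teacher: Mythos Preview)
Your proposal is correct and follows essentially the same route as the paper: both proofs hinge on the identities $x_{\primal}^{(t+1)} = y_{\primal}^{(t)} - \gamma_{\modern}\, g(y_{\primal}^{(t)};\xi^{(t)})$ and $y_{\primal}^{(t)} = (1+\mu)\,x_{\primal}^{(t)} - \mu\, x_{\primal}^{(t-1)}$, then close by induction with a matched base case. One minor slip to fix when you write it out: the intermediate claim $x_{\primal}^{(t)} - y_{\primal}^{(t)} = -\mu\gamma_{\modern}\tilde b^{(t-1)}$ has the wrong sign (it should be $+\mu\gamma_{\modern}\tilde b^{(t-1)}$, since $\tilde b^{(t-1)} = \tfrac{1}{\gamma_{\modern}}(x_{\primal}^{(t-1)}-x_{\primal}^{(t)})$), though correcting it still yields the intended $\tilde b^{(t)} = \mu\tilde b^{(t-1)} + g(\tilde x^{(t)};\xi^{(t)})$.
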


The proof of this simple statement is rather technical, so we defer it to Appendix~\ref{app:proofs}. Similar formulations and equivalences can be derived for Polyak momentum \citep{polyak1964some, defazio2020mom, ziyin2020laprop}; see Appendix~\ref{app:polyak}. We summarize these formulations and their differences in \Cref{tab:nesterov_formulations}.

\textbf{Remark.} It is important to acknowledge that the equivalence between the primal averaging and modern formulations of Nesterov momentum holds only when the learning rates are \emph{constant}. When learning rate schedules are introduced, achieving this equivalence would require the momentum parameter to vary with each iteration. Furthermore, the restriction on the choice of $\mu$ differs between the modern and primal averaging formulations. These different interpretations based on \emph{gradient averaging} versus \emph{iterate averaging} produce different perspectives for hyperparameter tuning, which can have a significant impact on the algorithm's practical performance.

\begin{table*}[ht!] 
\centering
\caption{Summary of Nesterov, Schedule-Free, and GPA formulations. Here, $\mu$ is the momentum parameter, $\gamma^{(t)}$ is the learning rate, and $g(\cdot; \xi^{(t)})$ denotes the stochastic gradient. For a summary of Polyak momentum formulations, please refer to Appendix~\ref{app:polyak}.}
\label{tab:nesterov_formulations}
\begin{tabular}{@{}p{3.5cm} p{6.5cm} p{5cm}@{}}
\toprule
\textbf{Formulation} & \textbf{Update Equations} & \textbf{Notes} \\
\midrule
\rowcolor{colorlightgray} \makecell[l]{\textbf{Sutskever (Classical)} \\ \citep{pmlr-v28-sutskever13}} &
$\begin{aligned}
b^{(t)} &= \mu b^{(t-1)} - \gamma^{(t)} g(x^{(t)} + \mu b^{(t-1)}; \xi^{(t)}) \\
x^{(t+1)} &= x^{(t)} + b^{(t)}
\end{aligned}$
& \makecell[l]{Gradient is evaluated at the \\ lookahead point.} \\
\midrule
\rowcolor{colorlightgray} \textbf{Modern (PyTorch/JAX)} &
$\begin{aligned}
b^{(t)} &= \mu b^{(t-1)} + g(x^{(t)}; \xi^{(t)}) \\
x^{(t+1)} &= x^{(t)} - \gamma^{(t)} [\mu b^{(t)} + g(x^{(t)}; \xi^{(t)})]
\end{aligned}$
& \makecell[l]{Gradient is evaluated at the \\ current point.} \\
\midrule
\rowcolor{colorlightgray} \makecell[l]{\textbf{Primal Averaging Variant} \\ \textbf{of Nesterov} \\ \citep{lanaccel}} &
$\begin{aligned}
y^{(t)} &= \mu x^{(t)} + (1-\mu) z^{(t)} \\
z^{(t+1)} &= z^{(t)} - \gamma^{(t)} g(y^{(t)}; \xi^{(t)}) \\
x^{(t+1)} &= \mu x^{(t)} + (1-\mu) z^{(t+1)}
\end{aligned}$
& \makecell[l]{Explicit separation of gradient \\ and model evaluation sequences.} \\
\midrule
\rowcolor{colorlightgray} \makecell[l]{\textbf{Schedule-Free} \\ \citep{schedule-free}} &
$\begin{aligned}
y^{(t)} &= \mu x^{(t)} + (1-\mu) z^{(t)} \\
z^{(t+1)} &= z^{(t)} - \gamma g(y^{(t)}; \xi^{(t)}) \\
x^{(t+1)} &= \frac{t}{t+1} x^{(t)} + \left(1-\frac{t}{t+1}\right) z^{(t+1)}
\end{aligned}$
& \makecell[l]{Uniform averaging; learning rate \\ schedule-free.} \\
\midrule
\rowcolor{colorblue} \textbf{GPA (Ours)} &
$\begin{aligned}
y^{(t)} &= \mu_y x^{(t)} + (1-\mu_y) z^{(t)} \\
z^{(t+1)} &= z^{(t)} - \gamma^{(t)} g(y^{(t)}; \xi^{(t)}) \\
x^{(t+1)} &= \mu_x x^{(t)} + (1-\mu_x) z^{(t+1)}
\end{aligned}$
& \makecell[l]{Decoupled interpolation of gradient \\ and model evaluation sequences; \\ requires learning rate schedule.} \\
\bottomrule
\end{tabular}
\end{table*}

\subsection{Single-Worker DiLoCo and its Weaknesses}

DiLoCo was originally introduced as a distributed algorithm for cross-datacenter training \citep{diloco}. The method computes multiple inner steps of a base optimizer on the \emph{inner weights}, then applies Nesterov (\Cref{eq:modern_form}) on the average \emph{pseudo-gradient}, the difference between the previous and updated inner model weights, to update the \emph{outer weights}. The inner weights are then reset to outer weights. 

DiLoCo requires storing two additional optimizer states of the same shape as the model parameters: the momentum buffer $b^{(t)}$ and the current model parameters $x^{(t)}$ (also known as the \emph{outer weights}). DiLoCo's handling of \emph{fast} inner weights and \emph{slow} outer weights can be interpreted as a modified Lookahead method that applies Nesterov momentum to the outer weight updates \citep{lookahead}. The method was recently analyzed in \citet{khaled2025understanding}, and demonstrated significant compute factor gains in the non-distributed setting in \citet{kallusky2025snoo}. A simplified version of \emph{non-distributed} or \emph{single-worker DiLoCo} with $H$ inner steps of the base optimizer can be described as:
\begin{align}
p^{(t)} & = x^{(t)} - \baseoptiter(x^{(t)}; \{\gamma^{(j)}\}_{j = 1}^H, H) \nonumber\\
b^{(t)} & = \mu b^{(t - 1)} + p^{(t)} \label{eq:diloco}
\\
x^{(t+1)} & = x^{(t)} - \tilde{\gamma} [\mu b^{(t)} + p^{(t)}], \nonumber
\end{align}
where $\tilde{\gamma} > 0$ is the outer learning rate and $\baseoptiter$ applies $H$ inner steps of the base optimizer to the iterate $x^{(t)}$ with inner learning rates $\{\gamma^{(j)}\}_{j=1}^H$. While DiLoCo originally introduced AdamW as the base optimizer, DiLoCo has been generalized to other optimizers such as Muon \citep{therien2025muloco}. A complete description of the algorithm is provided in Appendix~\ref{app:algorithm-details}. As noted in \citet{kallusky2025snoo}, applying Nesterov on the pseudo-gradient with multiple inner steps is capable of surpassing the performance of the base optimizer alone, which explains DiLoCo's ability to match the synchronous baseline, such as AdamW, in the multi-worker setting. 

\textbf{Weaknesses in DiLoCo's hierarchical framework.} 

However, this two-level structure is undesirable. 
From an \emph{algorithmic perspective}, one would prefer to average iterates on-the-fly, as opposed to averaging trajectories that implicitly contain multiple iterations of the base optimizer. 
From the \emph{users' perspective}, the two-level structure introduces an additional copy of the model weights required to compute the pseudo-gradient, and introduces additional hyperparameters to tune, e.g., the inner and outer learning rates, momentum, and number of inner steps. Lastly, from the \emph{distributed training perspective}, DiLoCo couples the number of inner steps as a hyperparameter for both local SGD as well as for the modified Nesterov algorithm, causing the algorithm's performance to  counterintuitively improve as the number of inner steps increases. One would instead expect that communicating more often should always be beneficial. These challenges motivate the development of a new algorithm that removes the two-level structure while offering a separate hyperparameter that can smoothly average the observed iterates at every iteration.

\subsection{Schedule-Free Learning}
In parallel, Schedule-Free learning (SF) \citep{schedule-free} was recently proposed as a wrapper to any base optimizer using a variant of the primal averaging formulation of Nesterov's method (\Cref{eq:primal_averaging_form}) for hyperparameter-free learning:
\begin{align}
\label{eq:sf}
\begin{split}
y^{(t)} & = \mu x^{(t)} + (1 - \mu) z^{(t)}\\
z^{(t+1)} & =z^{(t)} - \gamma g(y^{(t)}; \xi^{(t)})\\
x^{(t+1)} & = \frac{t}{t+1} x^{(t)} + \Bigl(1 - \frac{t}{t+1}\Bigr)z^{(t+1)}.
\end{split}
\end{align}

Originally designed to eliminate the need for manually specified learning rate schedules, Schedule-Free has demonstrated the surprising ability to not only match, but even surpass the practical performance of the original base optimizer. This is done by \emph{decoupling} the momentum hyperparameter used in the $x^{(t)}$ and $y^{(t)}$ sequences, unlike the standard primal averaging formulation of Nesterov (\Cref{eq:primal_averaging_form}). Through the choice of $\mu$, the method interpolates between uniform Polyak-Ruppert averaging and stochastic primal averaging \citep{ruppert,polyak,tao2018}. 

Ignoring the hyperparameter-free learning problem, one could alternatively replace uniform averaging with exponential moving averaging of the iterates, which is commonly used in practice \citep{morales2024exponential}. This alternative suggests a different generalization of Nesterov momentum that may offer the potential flexibility necessary to reproduce DiLoCo's convergence gains without the two-level structure.

\section{Generalized Primal Averaging (GPA)}
\label{section:gpa}

By decoupling the constants for the model evaluation and gradient computation sequences in the primal averaging formulation of Nesterov's method (\Cref{eq:primal_averaging_form}) and leveraging the observation of using exponential moving averaging in lieu of uniform averaging in Schedule-Free (\Cref{eq:sf}), we introduce the \emph{Generalized Primal Averaging} (GPA) framework:

\begin{align}
\label{eq:gpa}
\begin{split}
y^{(t)} & = \mu_y x^{(t)} + (1 - \mu_y) z^{(t)} \\
z^{(t+1)} & =z^{(t)}-\gamma^{(t)} g(y^{(t)}; \xi^{(t)})\\
x^{(t+1)} & = \mu_x x^{(t)} + \left(1 - \mu_x \right) z^{(t+1)}.
\end{split}
\end{align}

Here, $\mu_x \in [0, 1)$ and $\mu_y \in [0, 1]$ are independent hyperparameters that separately control the degree of interpolation used to maintain the model evaluation sequence $x^{(t)}$ and gradient computation sequence $y^{(t)}$. The additional hyperparameter $\mu_x$ serves as a smoothing or exponential moving average parameter that replaces Polyak-Ruppert averaging in Schedule-Free, while $\mu_y$ controls the amount of information flow into $y^{(t)}$. The complete pseudocode for a general base optimizer is provided in Algorithm~\ref{alg:gpa}. 

Unlike the modern formulation of Nesterov momentum (\Cref{eq:modern_form}) or DiLoCo (\Cref{eq:diloco}) built on (pseudo-)gradient averaging, GPA is defined based on the \emph{primal} or \emph{iterate averaging framework}. We argue that this provides a more meaningful characterization of the method. For example, the primal averaging interpretation naturally extends to other search directions by replacing $-g(y^{(t)}; \xi^{(t)})$ with the search direction $d^{(t)}$ evaluated at $y^{(t)}$. This extension is not intuitive from the gradient averaging perspective, as it would translate to averaging search directions (with potentially different, evolving preconditioners) in the momentum buffer.

\textbf{Learning rate schedules.} By replacing Polyak-Ruppert averaging with exponential moving averaging, GPA is not inherently schedule-free and requires the use of a learning rate schedule. To see why, observe that Polyak averaging places increasingly less weight $1 / (t + 1)$ on the most recent iterate $z^{(t + 1)}$, which plays a similar role to learning rate scheduling \citep{sandler2023trainingtrajectories,schedule-free}. GPA instead places a constant weight $\mu_x$ on the most recent iterate $z^{(t + 1)}$ by leveraging an exponential moving average, thereby requiring a learning rate schedule compared to Schedule-Free. This is reflected theoretically in their average-iterate versus last-iterate convergence properties.

\textbf{Degenerate cases.} The choice of $\mu_x$ and $\mu_y$ enables GPA to recover different averaging methods:
\begin{itemize}[topsep=0pt, partopsep=0pt, parsep=0pt]
    \item When $\mu_y = 1$, $x^{(t)} = y^{(t)}$ and we recover stochastic primal averaging, or equivalently, LaProp \citep{defazio2020mom, ziyin2020laprop}; see Appendix~\ref{app:algorithm-details}.
    \item When $\mu_y = 0$, $x^{(t)}$ and $z^{(t)} = y^{(t)}$ become decoupled and we recover exponential moving averaging of the iterates \citep{morales2024exponential}.
    \item When $\mu_x = 0$, $x^{(t)} = y^{(t)} = z^{(t)}$ for any choice of $\mu_y$, and GPA reverts to the base optimizer. 
\end{itemize}

\textbf{Memory-efficient implementation.} When $\mu_y > 0$, GPA can be implemented with only one extra copy of the model weights -- specifically, by storing $y^{(t)}$ and reconstructing $x^{(t)}$ from $y^{(t)}$ and $z^{(t)}$ during evaluation -- unlike DiLoCo, which demands more memory overhead due to the global parameter and momentum buffers. The pseudocode is provided in Appendix~\ref{app:algorithm-details}.

\textbf{Other properties.} Because $\mu_x, \mu_y \in [0, 1]$, GPA preserves modular norm bounds of the model parameters \citep{bernstein2024modular}. More details on this property is provided in Appendix~\ref{app:algorithm-details}. GPA can also be interpreted as a special case of the Triple Momentum framework \citep{7967721}, which utilizes three momentum terms to achieve tighter global linear convergence bounds for deterministic, smooth, strongly convex functions compared to Nesterov's method.

\begin{algorithm}[t]
\begin{algorithmic}[1]
    \Require Initial iterate $x^{(1)}$, learning rate schedule $\gamma^{(t)} > 0$, weight decay $\lambda \geq 0$, interpolation parameters $\mu_x, \mu_y \in [0, 1)$, base optimizer $\baseopt$.
    \State $z^{(1)} = x^{(1)}$
    \For{$t = 1, ..., T$}
        \State $y^{(t)} = \mu_y x^{(t)} + (1 - \mu_y)z^{(t)}$ \Comment{Update gradient computation point $y^{(t)}$.}
        \State $g^{(t)} \in \partial f(y^{(t)}; \xi^{(t)})$ \Comment{Gradient is evaluated at $y^{(t)}$}.
        \State $d^{(t)} = \baseopt(g^{(t)})$ \Comment{Compute base optimizer's search direction.}
        \State $z^{(t+1)} = (1 - \gamma^{(t)} \lambda) z^{(t)} + \gamma^{(t)} d^{(t)}$ \Comment{Update $z^{(t)}$ iterate.}
        \State $x^{(t+1)} = \mu_x x^{(t)} + \left(1 - \mu_x \right)z^{(t+1)}$ \Comment{Update weighted iterate average $x^{(t)}$.}
    \EndFor\\
    \Return $x^{(T)}$
\end{algorithmic}
\caption{\label{alg:gpa}Generalized Primal Averaging (GPA)}
\end{algorithm}

\subsection{Interpreting GPA as Smoothed DiLoCo}
\label{sec:gpa-diloco}

As seen in Figure~\ref{fig:consolidated_valloss_vs_comm_intervals}, increasing the number of inner steps leads to improved performance for single-worker DiLoCo. However, the underlying reasons for this behavior are not understood. By examining DiLoCo from the lens of GPA in \Cref{eq:gpa} and comparing it with the more restrictive Nesterov formulation in \Cref{eq:primal_averaging_form}, we can develop a deeper intuition for DiLoCo's inner workings. 

Suppose that we increase the number of inner steps in DiLoCo and want to maintain the same level of smoothing on the average iterate $x^{(t)}$. One may attempt to increase $\mu$ in Nesterov (\Cref{eq:primal_averaging_form}) to decrease the weight on the current iterate $z^{(t+1)}$. However, since $\mu$ controls both the amount of smoothing in $x^{(t)}$ \emph{and} the amount of interpolation used to update $y^{(t)}$, strictly increasing $\mu$ would \emph{decrease the recency of information from $z^{(t)}$ in $y^{(t)}$} by a factor of $\mu^2$, resulting in significantly different algorithmic behavior. Numerically, we validate that tuning $\mu$ alone in Nesterov's primal averaging formulation is not sufficient to reach the performance of DiLoCo; see Appendix~\ref{app:expmt_details}.

GPA addresses this limitation by \emph{decoupling the two roles of $\mu$ into separate hyperparameters}: $\mu_x$ for the model evaluation sequence and $\mu_y$ for the gradient computation sequence. By controlling these two interpolation constants independently, we can smooth $x^{(t)}$ similarly without changing the amount of information introduced into $y^{(t)}$. This smoothing is depicted in Figure~\ref{fig:visual} on a simple deterministic quadratic problem. For a small number of inner steps, the methods closely align, but for a larger number of inner steps, their behavior diverges.  

\begin{figure}[htbp]
\centering \includegraphics[width=0.45\linewidth]{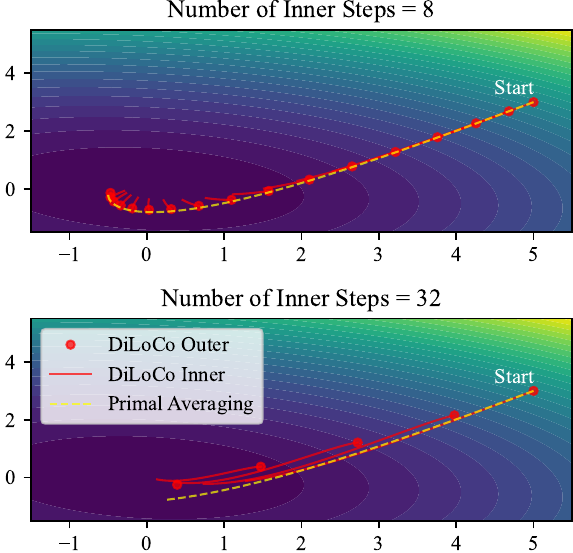}
  \caption{Comparison of DiLoCo and GPA's trajectories on a deterministic quadratic problem. The outer iterates of DiLoCo are shown as red points, and the inner iterates as thin red lines.}
  \label{fig:visual}
\end{figure}

\textbf{Tuning GPA from DiLoCo.} This intuition provides practical guidelines for converting a hyperparameter recipe for DiLoCo to GPA. Given an optimal number of inner steps $H$ and momentum parameter $\mu$ in DiLoCo, we observe for GPA that $x^{(t+H)} = \mu_x^H x^{(t)} + (1-\mu_x)\sum_{k=0}^{H-1} \mu_x^k z^{(t+H-k)}$. Therefore, to match the coefficient in front of $x^{(t)}$ with DiLoCo, one can set $\mu_x = \mu^{1/H}$ while keeping $\mu_y \approx \mu$. With commonly used values $\mu = 0.9$ and $H = 32$, we obtain $\mu_x \approx 0.9967$ and $\mu_y \approx 0.9$. We leverage this heuristic to determine an effective number of inner steps used in Figure~\ref{fig:consolidated_plots}. See Table~\ref{tab:diloco_gpa_mu_x} for exact values mapping inner steps in DiLoCo to GPA coefficient $\mu_{x}$.

\textbf{Tradeoffs with DiLoCo.} GPA not only outperforms DiLoCo, but does so with \emph{fewer hyperparameters} and \emph{lower memory requirements}. While DiLoCo requires four hyperparameters, e.g., the inner and outer learning rate, momentum hyperparameter, and number of inner steps, the memory-efficient implementation of GPA reduces this to just three: the learning rate and two momentum parameters. This simplification is possible because DiLoCo's practical performance is governed by an effective learning rate that couples the effect of the inner and outer learning rates ($\gamma^{(t)}$ and $\tilde{\gamma}$). On the other hand, GPA requires more FLOPs per-iteration, while DiLoCo amortizes its additional compute cost across multiple inner steps (see Appendix~\ref{app:mem_tradeoffs_gpa} for additional details).

\section{Experiments}
\label{section:expts}

In this section, we assess the effectiveness of GPA on both language model pre-training and computer vision workloads. For language modeling, we compare against baselines AdamW, DiLoCo and Schedule-Free (SF), while for computer vision experiments, we compare GPA against AdamW. For DiLoCo, Schedule-Free and GPA methods, we use AdamW as the base optimizer (DiLoCo-AdamW, Schedule-Free and GPA-AdamW, respectively).

\begin{table*}[ht!] 
\centering
\caption{Final validation loss versus effective number of inner steps $H$ for different optimizers on {\bf Llama-160M} and {\bf Llama-1B} models. Highlighted in bold is the lowest validation loss obtained across all inner step configurations $H$.} 
\begin{tabular}{l|ccccc|cccc}
\toprule
\multicolumn{1}{c}{} & \multicolumn{5}{c}{\bf Llama-160M} & \multicolumn{4}{c}{\bf Llama-1B} \\
{\bf Method} &
{$H = 8$} & {$H = 16$} & {$H = 32$} & {$H = 64$} & {$H = 128$} &
{$H = 16$} & {$H = 32$} & {$H = 64$} & {$H = 128$} \\
\midrule
\rowcolor{colorlightgray} AdamW 
& 3.1141 & \textcolor{mygray}{3.1141} & \textcolor{mygray}{3.1141} & \textcolor{mygray}{3.1141} & \textcolor{mygray}{3.1141}
& 2.6749 & \textcolor{mygray}{2.6749} & \textcolor{mygray}{2.6749} & \textcolor{mygray}{2.6749} \\
\rowcolor{colorlightgray} SF-AdamW 
& 3.1089 & \textcolor{mygray}{3.1089} & \textcolor{mygray}{3.1089} & \textcolor{mygray}{3.1051} & \textcolor{mygray}{3.1089}
& \textbf{2.638} & \textcolor{mygray}{2.638} & \textcolor{mygray}{2.638} & \textcolor{mygray}{2.638} \\
\rowcolor{colorlightgray} DiLoCo-AdamW 
& \textcolor{mygray}{3.1395} & \textcolor{mygray}{3.1133} & \textcolor{mygray}{3.1066} & 3.1051 & \textcolor{mygray}{3.128}
& \textcolor{mygray}{2.6737} & \textcolor{mygray}{2.6577} & \textcolor{mygray}{2.6572} & 2.6558 
\\
\rowcolor{colorblue} GPA-AdamW 
& \textcolor{mygray}{3.1353} & \textcolor{mygray}{3.1089} & \textbf{3.0908} & \textcolor{mygray}{3.0974} & \textcolor{mygray}{3.0951}
& \textcolor{mygray}{2.6639} & \textcolor{mygray}{2.6602} & 2.645 & \textcolor{mygray}{2.6553} \\
\bottomrule
\end{tabular}
\label{tab:consolidated_valloss_vs_comm_intervals}
\end{table*}

\begin{figure*}[ht] 
    \begin{subfigure}{0.49\textwidth}
        \includegraphics[width=\linewidth]{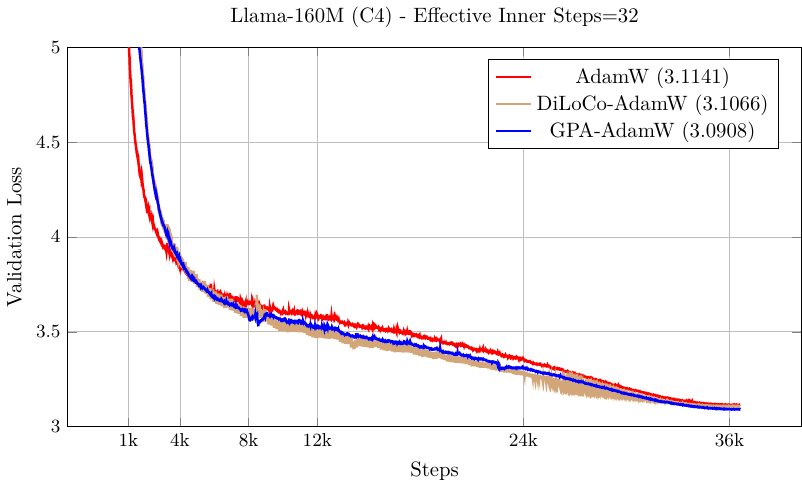}
        \caption{Comparison of AdamW, DiLoCo, and GPA with a fixed (effective) number of inner steps ($H = 32$).}
        \label{fig:comm_int32_valloss_vs_steps}
    \end{subfigure}
    \qquad
    \begin{subfigure}{0.49\textwidth}
        \includegraphics[width=\linewidth]{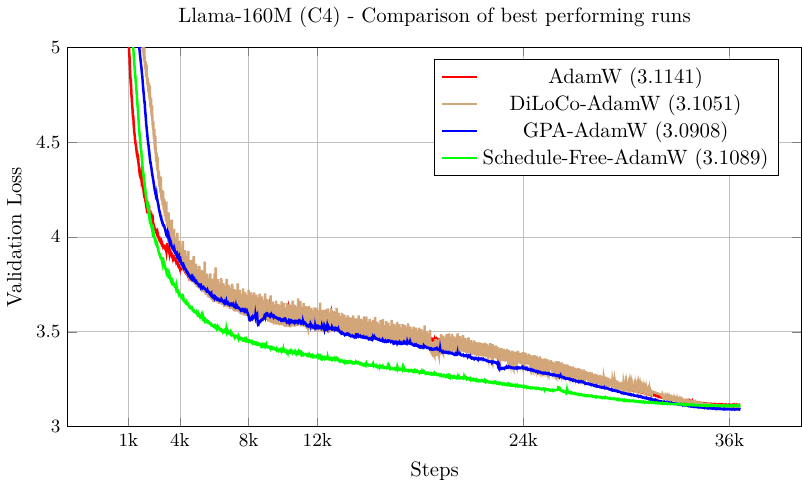}
        \caption{Comparison of AdamW, DiLoCo, GPA and Schedule-Free with optimal hyperparameters.}
        \label{fig:best_runs_valloss_vs_steps}
    \end{subfigure}
    \caption{Comparison of the validation loss against the number of steps for different optimizers on the Llama-160M workload.}
    \label{fig:comm_optimizers}
\end{figure*}

\subsection{Language Model Pre-Training}
We conduct experiments on multiple scales of  Llama models: {\bf 160 million} and {\bf 1 billion} parameters. These 160M and 1B models are pre-trained on the C4 dataset from scratch \citep{2019t5} using a token budget of roughly 9.6 billion and 50 billion tokens, respectively \citep{hoffmann2022training}. All of our small experiments are conducted on two nodes equipped with two GB200 GPUs (184 GB of memory) while the larger scale model experiments utilize four nodes (with a total of 8 GPUs). Comprehensive details on batch size, sequence length, and hyperparameter sweeps can be found in Appendix~\ref{app:expmt_details}. Note that both the Llama-160M and 1B experiments are performed in an overtrained setting.

\textbf{Performance across number of inner steps.} 
In Table \ref{tab:consolidated_valloss_vs_comm_intervals}, we provide the final validation loss for each method for different effective number of inner steps. 
Consistent with Figure~\ref{fig:consolidated_valloss_vs_comm_intervals}, GPA-AdamW outperforms both DiLoCo-AdamW and AdamW across all settings of the effective number of inner steps.

\textbf{Convergence behavior.}
Figure~\ref{fig:comm_int32_valloss_vs_steps} shows the validation loss curves on Llama-160M for AdamW, DiLoCo-AdamW, and GPA-AdamW for the case where the number of inner steps is 32. In this case, $\mu_{x}$ has been tuned to match the number of inner steps; see Table~\ref{tab:diloco_gpa_mu_x} in Appendix~\ref{app:expmt_details} for details. In Figure~\ref{fig:best_runs_valloss_vs_steps}, we compare GPA against the baselines (also including Schedule-Free-AdamW) by choosing the best performing runs over all hyperparameter choices including the effective number of inner steps. GPA-AdamW converges faster than both DiLoCo and AdamW throughout the entire training run. The training curves for GPA-AdamW are also noticeably smoother and more stable compared to the other methods. We observe that GPA-AdamW can handle higher learning rates compared to DiLoCo.

\subsection{Code Generation Model Pre-Training} 

We also evaluate GPA on an 8 billion parameter Llama model designed for code generation; see Appendix~\ref{app:expmt_details_hps_llama_8b} for more details. We find that GPA with any $\mu_y\in\{0.7, 0.8, 0.8\}$ outperforms AdamW throughout the course of training and achieves a better final validation loss (see Figure~\ref{fig:llama3_8b_code}, which plots GPA with $\mu_y=0.7$).

\begin{figure}[hbt!]
    \centering
    \includegraphics[width=0.6\linewidth]{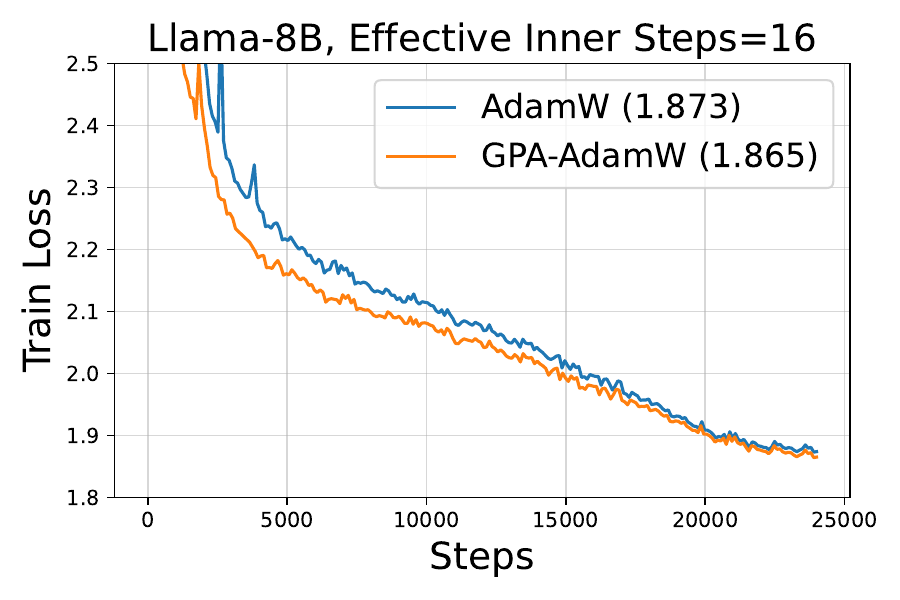}
    \caption{Comparison of AdamW and GPA on the Llama-8B code generation model using 100 billion tokens.}
    \label{fig:llama3_8b_code}
\end{figure}

\subsection{Vision Transformer Model Training}

To validate our method on a computer vision task, we train a ViT-S/16 model from \texttt{timm} on ImageNet with data augmentations from the repository. We train this under two batch size settings: (1) a small-batch setup with a batch size of 4,096 for 300 epochs; and (2) a large-batch setup with a batch size of 16,384 for 300 epochs. We tuned the methods separately in both settings, using the average over 2 random seeds to select the best hyperparameters, then ran the best-performing selection on 12 random seeds in total. For all methods, we used gradient clipping with norm 1 and a linear learning rate warmup over the first 5 epochs. We annealed the learning rate with cosine decay to $\times 0.001$ of the peak learning rate for all methods except Schedule-Free, which used constant learning rate after the warmup. Our results are that GPA outperforms AdamW by a clear margin in terms of validation accuracy throughout the course of training (see Figures~\ref{fig:vit_experiments}). For further details on our hyperparameter tuning, see Appendix~\ref{app:expmt_details}.

\begin{figure}[hbt!]
    \centering
    \includegraphics[width=0.48\linewidth]{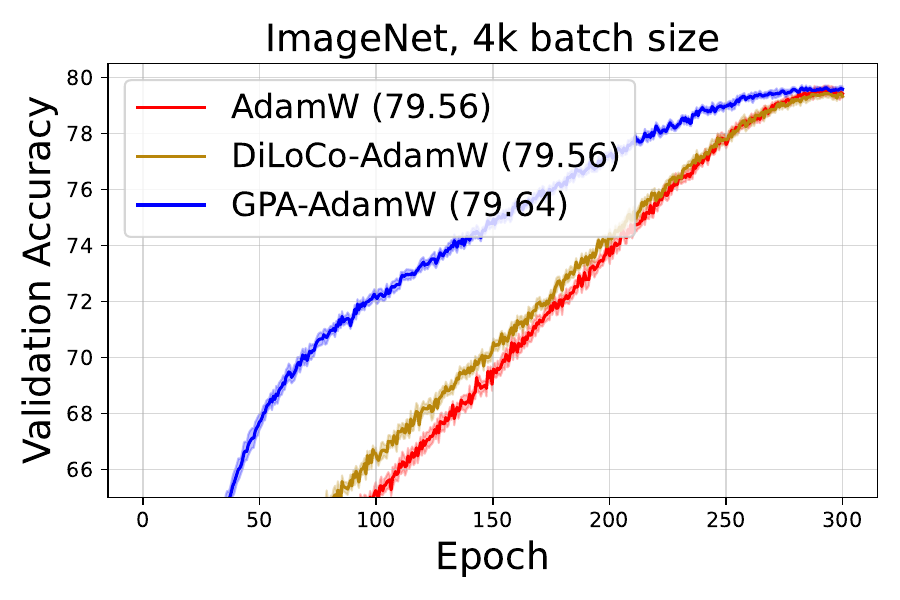}
    \includegraphics[width=0.48\linewidth]{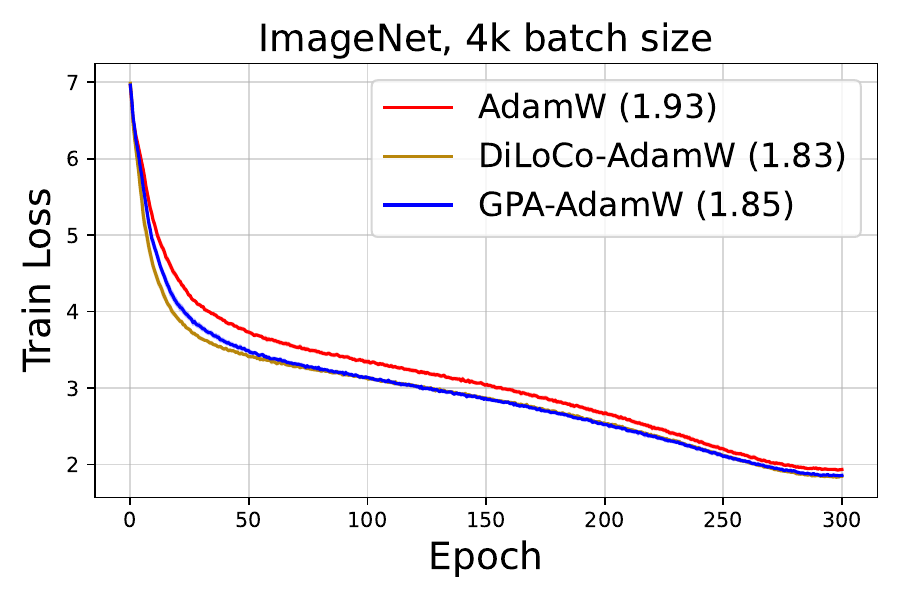}
    \caption{Comparison of AdamW, DiLoCo, and GPA on ImageNet ViT-S/16 from \texttt{timm} with data augmentations using a {\bf batch size of 4,096 samples}.}
    \label{fig:vit_experiments}
\end{figure}

\section{Convergence Theory}
\label{sec:theory}

Using the theoretical developments underpinning Schedule-Free learning, we can derive a convergence bound for GPA given any base optimizer that has a regret bound, using the framework of online-to-batch conversion \citep{cesa2004generalization}. We will use the Bregman divergence of $F$ defined as $B_F(a,b) = F(a)-F(b) - \langle \nabla F(b), a-b\rangle$ for $a, b \in \R^n$.

\begin{thm}
\label{thm:genbregman-gpa}
Let $F$ be a convex function and assume that there exists a minimizer $x_*$ that minimizes $F$. Let $\xi^{(1)},\dots,\xi^{(T)}$ be a sequence of i.i.d. random variables. Suppose that we are given arbitrary updates $z^{(1)},\dots,z^{(T)}$ from a base optimizer within the Generalized Primal Averaging framework (Equation~\ref{eq:gpa}). Then for $\mu_x, \mu_y \in [0, 1)$ and average iterate $\bar{x}^{(T)} = \frac{1}{T} \sum_{t=1}^{T} x^{(t)}$, we have the bound
\begin{align*}
\mathbb{E}[F(\bar{x}^{(T)})-F(x_{*})] & \leq \frac{1}{T}\sum_{t=1}^{T}\mathbb{E}[\langle\nabla F(y^{(t)}),z^{(t)}-x_{*}\rangle]  + \frac{\mu_x}{1-\mu_x}\frac{1}{T}\mathbb{E}\left[F(x^{(1)})-F(x_*)\right]\\
 & \qquad -\frac{1}{1-\mu_y}\frac{1}{T}\sum_{t=1}^{T} \mathbb{E}[B_{F}(y^{(t)},x^{(t)})] - \frac{\mu_y}{1-\mu_y}\frac{1}{T}\sum_{t=1}^{T}\mathbb{E}[B_{F}(x^{(t)},y^{(t)})] \\
 & \qquad -\frac{\mu_x}{1-\mu_x}\frac{1}{T}\sum_{t=1}^{T} \mathbb{E}[B_{F}(x^{(t-1)},x^{(t)})] .
\end{align*}
\end{thm}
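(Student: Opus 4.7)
The plan is to apply Jensen's inequality to reduce the claim to bounding $\frac{1}{T}\sum_{t=1}^T \mathbb{E}[F(x^{(t)}) - F(x_*)]$, and then derive the bound via a telescoping argument on the $x$-recursion combined with a three-point Bregman identity that exploits the collinearity of the iterates $x^{(t)}, y^{(t)}, z^{(t)}$.

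First, I would use the convex-combination identity $F(c) = \lambda F(a) + (1-\lambda) F(b) - \lambda B_F(a,c) - (1-\lambda) B_F(b,c)$ for $c = \lambda a + (1-\lambda) b$, applied to $x^{(t)} = \mu_x x^{(t-1)} + (1-\mu_x) z^{(t)}$. Letting $A^{(t)} := F(x^{(t)}) - F(x_*)$ and $C^{(t)} := F(z^{(t)}) - F(x_*)$, this yields the exact recursion
\begin{equation*}
A^{(t)} - \mu_x A^{(t-1)} = (1-\mu_x) C^{(t)} - \mu_x B_F(x^{(t-1)}, x^{(t)}) - (1-\mu_x) B_F(z^{(t)}, x^{(t)}).
\end{equation*}
Summing from $t=2$ to $T$, dropping the nonpositive $-\mu_x A^{(T)}$ term, and using $z^{(1)} = x^{(1)}$ (so $C^{(1)} = A^{(1)}$ and $B_F(z^{(1)}, x^{(1)}) = 0$), I obtain a bound of the form $\sum_{t=1}^T A^{(t)} \leq \tfrac{\mu_x}{1-\mu_x} A^{(1)} + \sum_{t=1}^T [C^{(t)} - B_F(z^{(t)}, x^{(t)})] - \tfrac{\mu_x}{1-\mu_x} \sum_{t=2}^T B_F(x^{(t-1)}, x^{(t)})$, which already reproduces the $\tfrac{\mu_x}{1-\mu_x}$-coefficient terms in the target.

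The remaining task is to bound $C^{(t)} - B_F(z^{(t)}, x^{(t)})$ by $\langle \nabla F(y^{(t)}), z^{(t)} - x_* \rangle - \tfrac{1}{1-\mu_y} B_F(y^{(t)}, x^{(t)}) - \tfrac{\mu_y}{1-\mu_y} B_F(x^{(t)}, y^{(t)})$. The standard identity $F(a) - F(b) = \langle \nabla F(c), a-b \rangle + B_F(a,c) - B_F(b,c)$ with $(a,b,c) = (z^{(t)}, x_*, y^{(t)})$, together with $-B_F(x_*, y^{(t)}) \leq 0$, converts $C^{(t)}$ into $\langle \nabla F(y^{(t)}), z^{(t)} - x_* \rangle + B_F(z^{(t)}, y^{(t)})$. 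The heart of the proof is then the exact identity
\begin{equation*}
B_F(z^{(t)}, x^{(t)}) = B_F(z^{(t)}, y^{(t)}) + \frac{1}{1-\mu_y} B_F(y^{(t)}, x^{(t)}) + \frac{\mu_y}{1-\mu_y} B_F(x^{(t)}, y^{(t)}),
\end{equation*}
which exploits the primal averaging structure: since $y^{(t)} = \mu_y x^{(t)} + (1-\mu_y) z^{(t)}$, the three iterates are collinear with $z^{(t)} - y^{(t)} = \tfrac{\mu_y}{1-\mu_y}(y^{(t)} - x^{(t)})$. Applying the standard three-point Bregman decomposition $B_F(a,c) = B_F(a,b) + B_F(b,c) + \langle \nabla F(b) - \nabla F(c), a - b \rangle$ with $(a,b,c) = (z^{(t)}, y^{(t)}, x^{(t)})$, the cross-term becomes a scalar multiple of $\langle \nabla F(y^{(t)}) - \nabla F(x^{(t)}), y^{(t)} - x^{(t)} \rangle = B_F(y^{(t)}, x^{(t)}) + B_F(x^{(t)}, y^{(t)})$ by the Bregman symmetry relation, and tracking constants yields the identity exactly.

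The main obstacle is the three-point identity above: the specific coefficients $\tfrac{1}{1-\mu_y}$ and $\tfrac{\mu_y}{1-\mu_y}$ in the target bound are opaque until one recognizes the collinearity of $x^{(t)}, y^{(t)}, z^{(t)}$ and combines the three-point Bregman decomposition with the Bregman symmetry relation. With this identity in hand, the remainder is routine: substitute, divide by $T$, take expectations, and apply Jensen's inequality $\mathbb{E}[F(\bar{x}^{(T)}) - F(x_*)] \leq \tfrac{1}{T}\sum_t \mathbb{E}[F(x^{(t)}) - F(x_*)]$ to recover the stated bound verbatim (adopting the convention $B_F(x^{(0)}, x^{(1)}) = 0$ so the final Bregman sum can be written as running from $t=1$ to $T$).
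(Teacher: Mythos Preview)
Your proof is correct. Both arguments exploit the same two collinearity relations ($x^{(t)}$ on the segment $[x^{(t-1)},z^{(t)}]$ and $y^{(t)}$ on $[x^{(t)},z^{(t)}]$) and telescope over the $x$-recursion, but the organization differs. The paper first introduces an auxiliary exponential-weight lemma expressing the EMA $x^{(t)}$ as a weighted average with weights $w^{(t)}\propto \mu_x^{-t}$, then manipulates $w^{(1:t)}F(x^{(t)})-w^{(1:t-1)}F(x^{(t-1)})$ and expands $\langle \nabla F(x^{(t)}), z^{(t)}-x^{(t)}\rangle$ through $y^{(t)}$ via a chain of inner-product identities before dividing by $w^{(t)}$ and summing. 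Your route is more direct: the convex-combination identity on $x^{(t)}=\mu_x x^{(t-1)}+(1-\mu_x)z^{(t)}$ yields the recursion without any auxiliary weights, and the three-point Bregman identity
\[
B_F(z^{(t)},x^{(t)}) = B_F(z^{(t)},y^{(t)}) + \tfrac{1}{1-\mu_y}B_F(y^{(t)},x^{(t)}) + \tfrac{\mu_y}{1-\mu_y}B_F(x^{(t)},y^{(t)})
\]
packages the $y$-step in one line. Both arrive at the identical bound; your version is slightly more self-contained, while the paper's version makes the online-to-batch lineage from the Schedule-Free analysis more visible.
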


\begin{corollary}
    Assume that the base optimizer has the regret guarantee $\sum_{t=1}^{T}\mathbb{E}[\langle\nabla F(y^{(t)}),z^{(t)}-x_{*}\rangle] = \mathcal{O}(\sqrt{T})$. Then:
    \begin{align*}
    \mathbb{E}[F(\bar{x}^{(T)})-F(x_{*})]
    = \mathcal{O} \left( \frac{1}{\sqrt{T}} \right).
\end{align*}
\end{corollary}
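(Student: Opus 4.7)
The plan is to adapt the online-to-batch conversion framework used for Schedule-Free, leaning crucially on a Bregman identity that lets unwanted divergence terms cancel. First I would apply convexity of $F$ to $\bar{x}^{(T)}=\frac{1}{T}\sum_{t=1}^{T}x^{(t)}$ to reduce the goal to bounding $\frac{1}{T}\sum_{t=1}^T a^{(t)}$, where $a^{(t)}:=F(x^{(t)})-F(x_*)$. Since $x^{(t)}=\mu_x x^{(t-1)} + (1-\mu_x) z^{(t)}$, expanding $F(x^{(t-1)})$ and $F(z^{(t)})$ around $x^{(t)}$ using the definition of $B_F$ yields the exact identity
\begin{align*}
a^{(t)} = \mu_x a^{(t-1)} + (1-\mu_x)\bigl(F(z^{(t)})-F(x_*)\bigr) - \mu_x B_F(x^{(t-1)}, x^{(t)}) - (1-\mu_x) B_F(z^{(t)}, x^{(t)}).
\end{align*}

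Next I would control $F(z^{(t)})-F(x_*)$ by inserting $y^{(t)}$ as a pivot: convexity of $F$ at $y^{(t)}$ gives $F(y^{(t)})-F(x_*) \leq \langle \nabla F(y^{(t)}), y^{(t)}-x_*\rangle$, and combining with $F(z^{(t)}) = F(y^{(t)}) + \langle \nabla F(y^{(t)}), z^{(t)}-y^{(t)}\rangle + B_F(z^{(t)}, y^{(t)})$ produces $F(z^{(t)})-F(x_*) \leq \langle \nabla F(y^{(t)}), z^{(t)}-x_*\rangle + B_F(z^{(t)}, y^{(t)})$. This introduces the regret term promised by the base optimizer but leaves a stray positive $B_F(z^{(t)}, y^{(t)})$ that would ruin the bound if we stopped here.

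The main obstacle, and the key step, is to rewrite $B_F(z^{(t)}, y^{(t)})$ so it merges with the existing $-(1-\mu_x) B_F(z^{(t)}, x^{(t)})$ term. I would derive the identity
\begin{align*}
(1-\mu_y)\,B_F(z^{(t)}, y^{(t)}) = (1-\mu_y)\,B_F(z^{(t)}, x^{(t)}) - B_F(y^{(t)}, x^{(t)}) - \mu_y\,B_F(x^{(t)}, y^{(t)})
\end{align*}
by equating the two natural expansions of $F(y^{(t)})$: the convex-interpolation identity coming from $y^{(t)}=\mu_y x^{(t)}+(1-\mu_y) z^{(t)}$, and the Taylor-with-Bregman expansion around $x^{(t)}$ using $y^{(t)}-x^{(t)}=(1-\mu_y)(z^{(t)}-x^{(t)})$ together with the analogous expansion of $F(z^{(t)})$ around $x^{(t)}$. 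Substituting this into the bound on $F(z^{(t)})-F(x_*)$ and then back into the $a^{(t)}$ recursion, the $(1-\mu_x) B_F(z^{(t)}, x^{(t)})$ terms cancel exactly, yielding
\begin{align*}
a^{(t)} \leq \mu_x a^{(t-1)} + (1-\mu_x)\langle \nabla F(y^{(t)}), z^{(t)}-x_*\rangle - \tfrac{1-\mu_x}{1-\mu_y} B_F(y^{(t)}, x^{(t)}) - \tfrac{(1-\mu_x)\mu_y}{1-\mu_y} B_F(x^{(t)}, y^{(t)}) - \mu_x B_F(x^{(t-1)}, x^{(t)}).
\end{align*}

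Finally, I would sum this recursion over $t=2,\dots,T$, telescope the $\mu_x a^{(t-1)}$ term, drop the negative boundary contribution $-\mu_x a^{(T)}/(1-\mu_x)$, and divide by $1-\mu_x$. The leading factor $a^{(1)}/(1-\mu_x)$ splits as $a^{(1)} + \frac{\mu_x}{1-\mu_x}a^{(1)}$; the first piece is absorbed into the $t=1$ regret term using the initialization $z^{(1)}=x^{(1)}$, which forces $y^{(1)}=x^{(1)}$ and therefore $a^{(1)}\leq \langle \nabla F(y^{(1)}), z^{(1)}-x_*\rangle$ by convexity while simultaneously making all $t=1$ Bregman divergences vanish (so the sums can be started at $t=1$ without changing their value, with the convention $B_F(x^{(0)},x^{(1)})=0$). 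Dividing by $T$, taking expectations, and invoking $F(\bar{x}^{(T)}) \leq \frac{1}{T}\sum_t F(x^{(t)})$ then yields the stated bound. Everything past the key Bregman identity is routine algebra; that identity is the single non-obvious ingredient.
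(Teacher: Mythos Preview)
Your proposal is correct and in fact reproduces the full inequality of Theorem~\ref{thm:genbregman-gpa}, from which the corollary follows immediately by dropping the nonpositive Bregman terms and noting that the regret term is $\mathcal{O}(1/\sqrt{T})$ while the $\frac{\mu_x}{1-\mu_x}\frac{1}{T}(F(x^{(1)})-F(x_*))$ term is $\mathcal{O}(1/T)$. The paper's proof of the corollary itself is just this last step, leaning on Theorem~\ref{thm:genbregman-gpa}.

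Your derivation of the theorem-level bound takes a somewhat different route from the paper. The paper works with the telescoping quantity $w^{(1:t)}F(x^{(t)}) - w^{(1:t-1)}F(x^{(t-1)}) - w^{(t)}F(x_*)$ using the exponentially increasing weights $w^{(t)}=(1-\mu_x)\mu_x^{-t+1}$ of Lemma~\ref{lem:ema}, expands $\langle \nabla F(x^{(t)}), z^{(t)}-x^{(t)}\rangle$ via the relation $z^{(t)}-y^{(t)}=\frac{\mu_y}{1-\mu_y}(y^{(t)}-x^{(t)})$, and then invokes the three-point identity $\langle \nabla F(x)-\nabla F(y), y-x\rangle = -B_F(x,y)-B_F(y,x)$. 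You instead write a direct one-step recursion $a^{(t)}\le \mu_x a^{(t-1)}+\cdots$, pivot through $F(z^{(t)})$, and use the convex-interpolation identity $(1-\mu_y)B_F(z^{(t)},y^{(t)})=(1-\mu_y)B_F(z^{(t)},x^{(t)})-B_F(y^{(t)},x^{(t)})-\mu_y B_F(x^{(t)},y^{(t)})$ to cancel the stray $B_F(z^{(t)},x^{(t)})$. The two arguments are algebraically equivalent rearrangements yielding the identical final bound; your recursion avoids the auxiliary weight machinery, while the paper's version avoids having to treat $t=1$ as a special boundary case via the initialization $z^{(1)}=x^{(1)}=y^{(1)}$.
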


{\bf Remarks on Theorem \ref{thm:genbregman-gpa}}:
\begin{itemize}
    \item The first term on the right-hand side of the regret bound is the average regret of the base optimizer. This term captures the convergence rate from the base optimizer.
    \item The second term has a positive term, which decays at a rate of $1/T$, which is typically faster than the decay of the term in the first row.
    \item All remaining Bregman divergence terms are negative, and so are potentially beneficial. If $\mu_x$ and $\mu_y$ are chosen such that the negative terms dominate the positive second term, then GPA will converge faster than the base optimizer. 
    \item The same terms appear in the convergence guarantees for Schedule-Free methods, and can explain when they may work better. For strongly convex problems, such Bregman divergences were used to get $\mathcal{O}(1/T)$ convergence.
    \item Unlike the guarantees for Schedule-Free, our convergence bound is for the average iterate. For best performance, a learning rate schedule should be used and the last iterate returned \citep{defazio2023when}.
    \item Our bound indicates that GPA will be faster than the base optimizer when the objective function varies nonlinearly between consecutive iterates and between $x^{(t)}$ and $y^{(t)}$. 
\end{itemize}

\section{Conclusion}
GPA generalizes Nesterov momentum by decoupling interpolation constants for gradient computation and model evaluation. On both small and large-scale language models, GPA outperforms single-worker DiLoCo while eliminating its complex two-loop structure, simplifying hyperparameter tuning, and reducing memory overhead.

Future research should validate GPA across diverse architectures and modalities, exploring compatibility with optimizers like Shampoo, SOAP, and Muon, as well as hyperparameter transfer techniques like $\mu$P~\citep{yang2021tensor,yang2022tensor}. Our current convergence analysis is also limited to the convex setting; further research is required to fully characterize GPA’s advantages over base optimizers both in the convex and non-convex settings.

Finally, GPA’s decoupling of parameters also enables new avenues for distributed training. In DiLoCo, the number of inner steps serves as a coupled hyperparameter for both Step-$K$ Nesterov and local SGD, leading to the undesirable finding that increasing the number of inner steps can improve convergence -- contrary to standard local SGD intuition. By introducing a tunable, continuous smoothing parameter that is independent of the number of inner steps in Step-$K$ Nesterov, GPA provides a new foundation for re-designing algorithms for cross-regional training.

\subsection*{Acknowledgements}
We thank Anna Cai, Jiaming Cui, Runa Eschenhagen, Shagun Gupta, Yuchen Hao, Chien-Chin Huang, Minhui Huang, Dzmitry Huba, Vladimir Ivanov, Dominik Kallusky, Tsung-Hsien Lee, Vishal Nandavanam, Shangfu Peng, Vinay Rao, Junjie Wang, Gavin Zhang, Iris Zhang, Xin Zhang, and Chuanhao Zhuge for their discussions on the algorithm, support on experimentation in TorchTitan, and detailed feedback on the manuscript. We also thank Kristin Lauter, Maxim Naumov, Sandeep Parab, Joel Pobar, and Chunqiang Tang for their managerial support of this work. Finally, we thank Derek Shao, Emrah Seker and Rodrigo Paim from the Meta Superintelligence Labs Infrastructure team for supporting us with torchtitan conda packages in a timely manner to enable faster experimentation.

\bibliographystyle{assets/plainnat}
\bibliography{main}

\clearpage
\newpage
\beginappendix

\section{LLM Usage}
\label{app:llm-usage}
We used an internal AI assistant for revising the grammar and wording in the paper, and used Gemini Pro 2.5 to verify our proofs.

\section{Formulations of Polyak Momentum}
\label{app:polyak}

Similar to Nesterov momentum, classical or Polyak momentum also have different formulations that are commonly used in the community. The most commonly implemented formulation (which we call the \emph{modern formulation}) is given as:
\begin{align}
\label{eq:modern_form_polyak}
\begin{split}
b^{(t)} & = \mu b^{(t - 1)} + g(x^{(t)}; \xi^{(t)}),\\
x^{(t+1)} & = x^{(t)} - \gamma^{(t)} b^{(t)}.
\end{split}
\end{align}
The method accumulates a momentum buffer similar to Nesterov's modern formulation (\eqref{eq:modern_form}), but only updates the weights using $b^{(t)}$ as opposed to $\mu b^{(t)} + g(x^{(t)}; \xi^{(t)})$.

This formulation can be re-written in the \emph{heavy ball formulation}
\begin{equation}
\label{eq:heavy_ball_form_polyak}
x^{(t+1)} = x^{(t)} - \gamma^{(t)} b^{(t)} + \mu (x^{(t)} - x^{(t - 1)}),
\end{equation}
which is also equivalent to the \emph{primal averaging formulation} 
\citep{defazio2020mom}
\begin{align}
\label{eq:primal_averaging_form_polyak}
\begin{split}
z^{(t+1)} & = z^{(t)} - \gamma^{(t)} g(x^{(t)}; \xi^{(t)}),\\
x^{(t+1)} & = \mu x^{(t)} + \left(1-\mu \right)z^{(t+1)}.
\end{split}
\end{align}
{\bf Remarks.}
\begin{itemize}
    \item The LaProp algorithm \citep{ziyin2020laprop} uses the heavy ball formulation to motivate the generalization of momentum to preconditioned gradient methods by replacing the gradient $g(x^{(t)}; \xi^{(t)})$ with the search direction $d^{(t)}$ in \eqref{eq:modern_form_polyak}.
    \item The primal averaging formulations for Polyak momentum (\eqref{eq:primal_averaging_form_polyak}) and Nesterov momentum (\eqref{eq:primal_averaging_form}) differ in their inclusion of the $y^{(t)}$ interpolated sequence, which determines where the gradient is evaluated. This is also reflected in Sutskever's formulation (\eqref{eq:sutskever_form}).
    \item Polyak momentum can therefore be recovered by setting $\mu_y = 1$ in GPA (\eqref{eq:gpa}).
\end{itemize}

\begin{table}[hbt!]
\centering
\small
\caption{Summary of Polyak momentum formulations. Here, $\mu$ is the momentum parameter, $\gamma^{(t)}$ is the learning rate, and $g(\cdot; \xi^{(t)})$ denotes the stochastic gradient.}
\label{tab:momentum_formulations}
\begin{tabular}{p{3.5cm} p{6.5cm}}
\toprule
\textbf{Formulation} & \textbf{Update Equations} \\
\midrule
\makecell[l]{\textbf{Heavy Ball} \\ \citep{polyak1964some}} &
$x^{(t+1)} = x^{(t)} - \gamma^{(t)} b^{(t)} + \mu (x^{(t)} - x^{(t - 1)})$ \\
\midrule
\textbf{Modern (PyTorch/JAX)} &
$\begin{aligned}
b^{(t)} &= \mu b^{(t-1)} + g(x^{(t)}; \xi^{(t)}) \\
x^{(t+1)} &= x^{(t)} - \gamma^{(t)} b^{(t)}
\end{aligned}$ \\
\midrule
\makecell[l]{\textbf{Primal Averaging} \\ \citep{tao2018}} &
$\begin{aligned}
z^{(t+1)} &= z^{(t)} - \gamma^{(t)} g(x^{(t)}; \xi^{(t)}) \\
x^{(t+1)} &= \mu x^{(t)} + (1-\mu) z^{(t+1)}
\end{aligned}$ \\
\bottomrule
\end{tabular}
\end{table}

\section{Algorithmic Details}
\label{app:algorithm-details}

\subsection{Pseudocode for Single-Worker DiLoCo / Step-$K$ Nesterov}

We provide a complete description of non-distributed or single-worker DiLoCo (also known as Step-$K$ Nesterov Outer Optimizer) in Algorithm~\ref{alg:diloco}. 

\begin{algorithm}[H] 
\begin{algorithmic}[1]
    \Require Initial iterate $x^{(1)}$, inner learning rate schedule $\gamma^{(t)} > 0$, constant outer learning rate $\tilde{\gamma} > 0$, weight decay $\lambda \geq 0$, momentum parameter $\mu \in [0, 1)$, base optimizer $\baseopt$.
    \State $\tilde{x}^{(1)} = x^{(1)}$ \Comment{Initialize slow model weights.}
    \State $b^{(0)} = 0 \in \R^n$ \Comment{Initialize momentum buffer.}
    \For{step $t = 1, ..., T$}
        \State Sample mini-batch $\xi^{(t)}$
        \State $g^{(t)} \in \partial f(x^{(t)}; \xi^{(t)})$
        \State $d^{(t)} = \baseopt(g^{(t)})$ \Comment{Computes base optimizer's search direction.}
        \State $x^{(t+1)} = (1 - \gamma^{(t)} \lambda) x^{(t)} + \gamma^{(t)} d^{(t)}$ \Comment{Updates inner model weights (with weight decay).}
        \If{$t \text{ mod } H = 0$}
            \State $p^{(t)} = \tilde{x}^{(t)} - x^{(t+1)}$ \Comment{Pseudo-gradient computation.}
            \State $b^{(t+1)} = \mu b^{(t)} + p^{(t)}$ \Comment{Accumulates outer momentum.}
            \State $\tilde{x}^{(t + 1)} = \tilde{x}^{(t)} - \tilde{\gamma} \left[\mu b^{(t)} + p^{(t)} \right]$ \Comment{Nesterov-style parameter update.}
            \State $x^{(t+1)} = \tilde{x}^{(t+1)}$ \Comment{Re-initialize inner model weights.}
        \Else 
            \State $\tilde{x}^{(t + 1)} = \tilde{x}^{(t)}$
            \State $b^{(t + 1)} = b^{(t)}$
        \EndIf
    \EndFor\\
    \Return $\tilde{x}^{(T)}$
\end{algorithmic}
\caption{\label{alg:diloco} Single-Worker DiLoCo / Step-$K$ Nesterov}
\end{algorithm}

\subsection{Memory-Efficient Formulation of Generalized Primal Averaging}

The implementation of the original formulation of GPA in \eqref{eq:gpa} requires storing two additional copies of the model's parameters during the optimizer step. This is because the gradient computation occurs on the $y^{(t)}$ sequence, which is computed from the two other sequences $x^{(t)}$ and $z^{(t)}$. To avoid this additional model copy, we can store $y^{(t)}$ instead, and recover $x^{(t)}$ from $y^{(t)}$ and $z^{(t)}$ during evaluation time.

To see how this can be done, we define the \emph{memory-efficient formulation} of GPA as:
\begin{align}
\label{eq:memory-efficient}
\begin{split}
x^{(t)} & = \frac{1}{\mu_y} y^{(t)} + \left(1 - \frac{1}{\mu_y} \right) z^{(t)}, \\
y^{(t+1)} & = \mu_x y^{(t)} + (1 - \mu_x) z^{(t)} - (1 - \mu_x \mu_y) \gamma^{(t)} g(y^{(t)}; \xi^{(t)}), \\
z^{(t+1)} & =z^{(t)}-\gamma^{(t)} g(y^{(t)}; \xi^{(t)}). \\
\end{split}
\end{align}
This reformulation is valid only when $\mu_y > 0$. In the $y^{(t)}$ update, the first term can be interpreted as interpolating $y^{(t)}$ towards $z^{(t)}$. The second term is a correction term that applies a dampened update on $y^{(t)}$.

Note that this formulation does not require the computation of $x^{(t)}$ except when necessary. Therefore, our implementation enables a training and evaluation mode similar to neural network modules like batch normalization that enables us to compute $x^{(t)}$ from $y^{(t)}$ and vice-versa. Specifically, when switching from training to evaluation mode, we can compute $x^{(t)}$ from $y^{(t)}$ and $z^{(t)}$ by:
\begin{equation*}
    x^{(t)} = \frac{1}{\mu_y} y^{(t)} + \left(1 - \frac{1}{\mu_y} \right) z^{(t)}.
\end{equation*}
Similarly, when switching from evaluation to training mode, we can recover $y^{(t)}$ from $x^{(t)}$ and $z^{(t)}$ by:
\begin{equation*}
    y^{(t)} = \mu_y x^{(t)} + (1 - \mu_y) z^{(t)}.
\end{equation*}
A proof of the equivalence of these two formulations is provided in Appendix~\ref{app:proofs}. The complete pseudocode for arbitrary base optimizers are provided in Algorithm~\ref{alg:gpa-memory-efficient}.

\begin{algorithm}[H] 
\begin{algorithmic}[1]
    \Require Initial iterate $y^{(1)}$, learning rate schedule $\gamma^{(t)} > 0$, weight decay $\lambda \geq 0$, interpolation parameters $\mu_x, \mu_y \in [0, 1)$, base optimizer $\baseopt$.
    \State $z^{(1)} = y^{(1)}$
    \For{$t = 1, ..., T$}
        \State $g^{(t)} \in \partial f(y^{(t)}; \xi^{(t)})$
        \State $d^{(t)} = \baseopt(g^{(t)})$ 
        \State $y^{(t)} = \mu_x y^{(t)} + (1 - \mu_x) z^{(t)} + \gamma^{(t)} (1 - \mu_x \mu_y) (d^{(t)} + \lambda z^{(t)})$
        \State $z^{(t+1)} = (1 - \gamma^{(t)} \lambda) z^{(t)} - \gamma^{(t)} d^{(t)}$
    \EndFor\\
    \Return $x^{(T)} = \frac{1}{\mu_y} y^{(T)} + \left(1 - \frac{1}{\mu_y} \right) z^{(T)}$
\end{algorithmic}
\caption{\label{alg:gpa-memory-efficient}Memory-Efficient Generalized Primal Averaging (GPA)}
\end{algorithm}

\subsection{Memory Tradeoffs between DiLoCo and GPA}
\label{app:mem_tradeoffs_gpa}

The original GPA formulation stores two additional iterates, namely $x^{(t)}$, $y^{(t)}$ and $z^{(t)}$, incurring the same cost as DiLoCo. Our \emph{memory-efficient} implementation of GPA in \Cref{alg:gpa-memory-efficient}, stores $y^{(t)}$ in lieu of $x^{(t)}$ and $z^{(t)}$ and computes the iterate $x^{(t)}$ on-the-fly from $y^{(t)}$ and $z^{(t)}$. We present these tradeoffs in \Cref{tab:memory-compute-requirements}. It is worth noting that neither of the GPA formulations requires storing the momentum buffer $b^{(t)}$.

\begin{table}[H]
    \centering
    \caption{\textbf{Comparison of memory and compute requirements for AdamW, DiLoCo, and GPA variants}. While GPA-AdamW matches DiLoCo's memory overhead, the memory-efficient variant of GPA (GPA-M) reduces additional model copies to three without compromising model quality and convergence performance.}
    \begin{tabular}{l l l l}
    \hline
    Method & Optimizer States & Additional Copies & Additional FLOPs per  step\\
    \hline
    \rowcolor{colorlightgray} AdamW & $m^{(t)}$, $v^{(t)}$ & 2 & -- \\
    \rowcolor{colorlightgray} DiLoCo-AdamW & $m^{(t)}$, $v^{(t)}$, $\tilde{x}^{(t)}$, $b^{(t)}$ & 4 & $\mathcal{O}(n / H)$\\
    \rowcolor{colorblue} GPA-AdamW & $m^{(t)}$, $v^{(t)}$, $y^{(t)}$, $z^{(t)}$ & 4 & $\mathcal{O}(n)$ \\
    \rowcolor{colorblue} \textbf{GPA-M-AdamW} & $m^{(t)}$, $v^{(t)}$, \textbf{$z^{(t)}$} & \textbf{3} & \textbf{$\mathcal{O}(n)$} \\
    \hline
    \end{tabular}
    \label{tab:memory-compute-requirements}
\end{table}

\subsection{Compatibility with Modular Norm Theory}

Recent work on Muon and similar methods has built on modular norm theory, which suggests that the design of optimization methods for deep learning should constrain the modular norm of the model parameters in order to enable hyperparameter transferability and bounded Lipschitz constants \citep{large2024scalable,jordan2024muon,pethick2025training}. Here, we argue that GPA, by definition, preserves these norm constraints. 

To see this, assume that $d^{(t)}$ is the search direction for a single parameter that is constrained with respect to some norm, i.e., $\| d^{(t)} \| \leq M$ for some constant $M \geq 0$. (Typically, we assume that it is the RMS-to-RMS norm or similar.) We can preserve these norm constraints on the iterates produced by GPA since:
\begin{align*}
\|y^{(t)}\| & \leq \mu_y \|x^{(t)}\| + (1 - \mu_y) \|z^{(t)}\|\\
\|z^{(t+1)}\| & \leq (1 - \lambda \gamma^{(t)})\|z^{(t)}\| + \gamma^{(t)} \|d^{(t)}\|\\
\|x^{(t+1)}\| & \leq \mu_x \|x^{(t)}\| + \left(1-\mu_x \right) \|z^{(t+1)}\|.
\end{align*}
Since $\mu_x, \mu_y \in [0, 1]$, we can see that if $\max\left\{\|x^{(t)}\|, \|y^{(t)}\|, \|z^{(t)}\| \right\} \leq M'$ for $M' \geq 0$, then 
$$\max\left\{\|x^{(t + 1)}\|, \|y^{(t + 1)}\|, \|z^{(t + 1)}\| \right\} \leq (1 - \lambda \gamma^{(t)}) M' + \gamma^{(t)} M,$$ 
which is the same bound that we would obtain for the base optimizer.

\section{Proofs}
\label{app:proofs}

\subsection{Equivalence Between Nesterov's Formulations}

\begin{prop}
    Given fixed learning rates $\gamma_{\primal}, \gamma_{\modern} > 0$, Nesterov's primal averaging formulation (\eqref{eq:primal_averaging_form}) is equivalent to Nesterov's modern formulation (\eqref{eq:modern_form}) in the sense that 
    \begin{equation}
        \label{eq:nesterov-equalities}
        y_{\primal}^{(t)} = x_{\modern}^{(t)} ~~~ \text{and} ~~~ b_{\modern}^{(t)}=\frac{1}{\left(1 - \mu \right)\gamma_{\primal}}\left(x_{\primal}^{(t)} - x_{\primal}^{(t + 1)}\right),
    \end{equation}
    when $\mu_{\primal} = \mu_{\modern} = \mu$ and $\left(1 - \mu \right) \gamma_{\primal} = \gamma_{\modern}$.
\end{prop}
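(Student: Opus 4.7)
The plan is to prove both equalities in \eqref{eq:nesterov-equalities} simultaneously by induction on $t$, with the two identities feeding into each other at every step. For the base case, I would require consistent initialization of both schemes: take $z_{\primal}^{(1)} = x_{\primal}^{(1)} = x_{\modern}^{(1)}$ and $b_{\modern}^{(0)} = 0$. Then $y_{\primal}^{(1)} = \mu x^{(1)} + (1-\mu)z^{(1)} = x^{(1)} = x_{\modern}^{(1)}$ follows immediately, and the momentum-buffer identity can be anchored at $t=0$ with the convention $x_{\primal}^{(0)} := x_{\primal}^{(1)}$ so that $b_{\modern}^{(0)} = 0$ is consistent.

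For the inductive step, the key algebraic observation is that the primal update $x^{(t+1)} = \mu x^{(t)} + (1-\mu) z^{(t+1)}$ gives $x^{(t)} - x^{(t+1)} = (1-\mu)(x^{(t)} - z^{(t+1)})$, so the claimed momentum-buffer identity at step $t$ is equivalent to the simpler statement $b_{\modern}^{(t)} = \gamma_{\primal}^{-1}(x_{\primal}^{(t)} - z_{\primal}^{(t+1)})$. To show that this satisfies the modern recursion $b^{(t)} = \mu b^{(t-1)} + g(x_{\modern}^{(t)}; \xi^{(t)})$, I would split $x^{(t)} - z^{(t+1)} = (x^{(t)} - z^{(t)}) + \gamma_{\primal} g(y^{(t)}; \xi^{(t)})$ using the $z$-recursion, and simplify the first summand via the auxiliary identity $x^{(t)} - z^{(t)} = \mu(x^{(t-1)} - z^{(t)})$ (a direct consequence of $x^{(t)} = \mu x^{(t-1)} + (1-\mu) z^{(t)}$). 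The inductive hypothesis $b^{(t-1)} = \gamma_{\primal}^{-1}(x^{(t-1)} - z^{(t)})$ then yields the modern recursion, with the gradient evaluated at $y_{\primal}^{(t)} = x_{\modern}^{(t)}$.

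To propagate the first identity to $t+1$, I would compute $y_{\primal}^{(t+1)} - y_{\primal}^{(t)}$ directly from the primal definitions as $\mu(x^{(t+1)} - x^{(t)}) + (1-\mu)(z^{(t+1)} - z^{(t)})$, and substitute $x^{(t+1)} - x^{(t)} = (1-\mu)(z^{(t+1)} - x^{(t)})$ together with $z^{(t+1)} - z^{(t)} = -\gamma_{\primal} g(y^{(t)}; \xi^{(t)})$ to obtain $-(1-\mu)\bigl[\mu(x^{(t)} - z^{(t+1)}) + \gamma_{\primal} g(y^{(t)}; \xi^{(t)})\bigr]$. Using the $b$-identity just proved and the rescaling $\gamma_{\modern} = (1-\mu)\gamma_{\primal}$, this matches exactly the modern update $-\gamma_{\modern}[\mu b_{\modern}^{(t)} + g(x_{\modern}^{(t)}; \xi^{(t)})]$, closing the induction. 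The main obstacle here is purely bookkeeping: one has to carefully track how each factor of $(1-\mu)$ emerges from the interpolation and interacts with the learning-rate rescaling, and keep the indexing of the momentum buffer $b^{(t-1)}$ versus $b^{(t)}$ aligned with the iterate indices. There is no deep conceptual difficulty -- just an algebraic unfolding -- which is presumably why the authors defer the proof to the appendix.
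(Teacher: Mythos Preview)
Your proposal is correct and follows the same overall strategy as the paper: induction on $t$ with the two identities in \eqref{eq:nesterov-equalities} proved jointly. The algebraic route differs slightly. The paper keeps the buffer identity in its stated form and, for the $y$-step, derives the intermediate extrapolation expression $y_{\primal}^{(t+1)} = (1+\mu)x_{\primal}^{(t+1)} - \mu x_{\primal}^{(t)}$, then shows separately that the modern update lands at the same point. You instead first simplify the buffer identity to $b_{\modern}^{(t)} = \gamma_{\primal}^{-1}(x_{\primal}^{(t)} - z_{\primal}^{(t+1)})$ and rely on the auxiliary relation $x^{(t)} - z^{(t)} = \mu(x^{(t-1)} - z^{(t)})$, which makes the momentum recursion pop out in one line; you then propagate the $y$-identity by computing the \emph{increment} $y^{(t+1)} - y^{(t)}$ directly. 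Your version is marginally more streamlined because the rewrite in terms of $x - z$ absorbs one factor of $(1-\mu)$ up front, whereas the paper's route has the advantage of surfacing the familiar extrapolation form of Nesterov's method as a byproduct. Both are equally valid; the difference is purely cosmetic bookkeeping, as you anticipated.
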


\begin{proof}
    We can prove this by induction. For simplicity of notation, we will use $x_m = x_{\modern}$ and $x_p = x_{\primal}$ and similar for all variables.
    
    For the base case, note that the initializations $z_p^{(1)} = x_p^{(1)} = x_m^{(1)}$ are equal. Therefore, 
    \begin{equation}
        \label{eq:y-primal-init}
        y_p^{(1)} = \mu x_p^{(1)} + (1 - \mu) z_p^{(1)} = x_m^{(1)},
    \end{equation}
    as desired. In addition, since $b_m^{(1)} = \mu b_m^{(0)} + g(x_m^{(1)}; \xi^{(1)}) = g(x_m^{(1)})$, we can see that:
    \begin{align*}
        x_p^{(1)} - x_p^{(2)} & = (1 - \mu) x_p^{(1)} - (1 - \mu) z_p^{(1)} \\
        & = (1 - \mu) (x_p^{(1)} - z_p^{(2)}) \\
        & = (1 - \mu) (x_p^{(1)} - z_p^{(1)} + \gamma_p g(y_p^{(1)}; \xi^{(1)})) \\
        & = (1 - \mu) \gamma_p g(y_p^{(1)}; \xi^{(1)}).
    \end{align*}
    The base case for the momentum buffer $b_m^{(1)}$ follows from rearranging the equation with \eqref{eq:y-primal-init} and observing that $b_m^{(1)} = \mu b_m^{(0)} + g(x_m^{(1)}; \xi^{(1)}) = g(x_m^{(1)}; \xi^{(1)})$.

    For the inductive step, assume that \eqref{eq:nesterov-equalities} holds for $t$. Then from the inductive hypothesis, we can show that:
    \begin{align}
        x_m^{(t + 1)} & = x_m^{(t)} - \gamma_m [\mu b_m^{(t)} + g(x_m^{(t)}; \xi^{(t)})] \nonumber \\
        & = y_p^{(t)} - (1 - \mu) \gamma_p \left[\mu \left( \frac{1}{(1 - \mu) \gamma_p} (x_p^{(t)} - x_p^{(t + 1)}) \right) + g(y_p^{(t)}; \xi^{(t)}) \right] \nonumber \\
        & = y_p^{(t)} - \mu (x_p^{(t)} - x_p^{(t + 1)}) - (1 - \mu) \gamma g(y_p^{(t)}; \xi^{(t)}). \label{eq:x_m-1}
    \end{align}
    From the primal averaging form in \eqref{eq:primal_averaging_form}, we can derive that:
    \begin{align}
        x_p^{(t + 1)} & = \mu x_p^{(t)} + (1 - \mu) z_p^{(t + 1)} \nonumber \\
        & = \mu x_p^{(t)} + (1 - \mu) (z_p^{(t)} - \gamma_p g(y_p^{(t)}; \xi^{(t)}) \nonumber \\
        & = y_p^{(t)} - (1 - \mu) \gamma_p g(y_p^{(t)}; \xi^{(t)}). \label{eq:x_p-1}
    \end{align}
    Rearranging \eqref{eq:x_p-1}, we get that:
    \begin{equation}
        \label{eq:y_p-x_p^(t+1)}
        y_p^{(t)} - x_p^{(t + 1)} = (1 - \mu) \gamma_p g(y_p^{(t)}; \xi^{(t)}).
    \end{equation}
    Plugging in \eqref{eq:y_p-x_p^(t+1)} into \eqref{eq:x_m-1}, we obtain:
    \begin{equation}
        \label{eq:x_m-2}
        x_m^{(t + 1)} = y_p^{(t)} - \mu (x_p^{(t)} - x_p^{(t + 1)}) - (y_p^{(t)} - x_p^{(t + 1)})
        = (1 + \mu) x_p^{(t + 1)} - \mu x_p^{(t)}.
    \end{equation}
    Finally, since $x_p^{(t + 1)} = \mu x_p^{(t)} + (1 - \mu) z_p^{(t)}$, $(1 - \mu) z_p^{(t + 1)} = x_p^{(t + 1)} - \mu x_p^{(t)}$. Therefore, to see $x_m^{(t + 1)}$'s equivalence to $y_p^{(t + 1)}$,
    \begin{align}
        y_p^{(t + 1)} & = \mu x_p^{(t + 1)} + (1 - \mu) z_p^{(t + 1)} \nonumber \\
        & = \mu x_p^{(t + 1)} + x_p^{(t + 1)} - \mu x_p^{(t)} \nonumber \\
        & = (1 + \mu) x_p^{(t + 1)} - \mu x_p^{(t)}. \label{eq:y_p-1}
    \end{align}
    Combining equations \ref{eq:x_m-2} and \ref{eq:y_p-1} gives the result.

    To prove that $b_m^{(t + 1)} = \frac{1}{(1 - \mu) \gamma_p} (x_p^{(t + 1)} - x_p^{(t + 2)})$, note that:
    \begin{equation}
        \label{eq:b_m-1}
        b_m^{(t + 1)} = \mu b_m^{(t)} + g(x_m^{(t + 1)}; \xi^{(t + 1)}) = \frac{\mu}{(1 - \mu) \gamma_p} (x_p^{(t)} - x_p^{(t + 1)}) + g(y_p^{(t + 1)}; \xi^{(t + 1)}).
    \end{equation}
    To get an expression for $x_p^{(t + 1)} - x_p^{(t + 2)}$, note that:
    \begin{align}
        x_p^{(t + 2)} & = \mu x_p^{(t + 1)} + (1 - \mu) (z_p^{(t + 1)} - \gamma_p g(y_p^{(t + 1)}; \xi^{(t + 1)})) \nonumber \\
        & = (\mu x_p^{(t + 1)} + (1 - \mu) z_p^{(t + 1)}) - (1 - \mu) \gamma_p g(y_p^{(t + 1)}; \xi^{(t + 1)}) \nonumber \\
        & = y_p^{(t + 1)} - (1 - \mu) \gamma_p g(y_p^{(t + 1)}; \xi^{(t + 1)}) \nonumber \\
        & = ((1 + \mu) x_p^{(t + 1)} - \mu x_p^{(t)}) - (1 - \mu) \gamma_p g(y_p^{(t + 1)}; \xi^{(t + 1)}) , \label{eq:x_p-2}
    \end{align}
    where \eqref{eq:x_p-2} follows from \eqref{eq:y_p-1}. Therefore, plugging-in \eqref{eq:x_p-2} into $x_p^{(t + 1)} - x_p^{(t + 2)}$ gives:
    \begin{equation}
        x_p^{(t + 1)} - x_p^{(t + 2)} = - \mu (x_p^{(t + 1)} - x_p^{(t)}) + (1 - \mu) \gamma_p g(y_p^{(t + 1)}; \xi^{(t + 1)}).
    \end{equation}
    The result follows from expanding \eqref{eq:b_m-1} as:
    \begin{align*}
        b_m^{(t + 1)} & = \frac{1}{(1 - \mu) \gamma_p} \left[- \mu (x_p^{(t + 1)} - x_p^{(t)}) + (1 - \mu) \gamma_p g(y_p^{(t + 1)}; \xi^{(t + 1)}) \right] \\
        & = \frac{1}{(1 - \mu) \gamma_p} (x_p^{(t + 1)} - x_p^{(t + 2)}).
    \end{align*}
\end{proof}

\subsection{Equivalence Between Generalized Primal Averaging Formulations}

\begin{prop}
    Let $\mu_y > 0$. Then GPA (\eqref{eq:gpa}) is equivalent to the memory-efficient formulation (\eqref{eq:memory-efficient}).
\end{prop}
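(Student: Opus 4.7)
The plan is to prove equivalence by showing that the two formulations generate the same sequences $\{x^{(t)}, y^{(t)}, z^{(t)}\}$ when started from the same initialization. The first identity in \eqref{eq:memory-efficient}, namely $x^{(t)} = \tfrac{1}{\mu_y} y^{(t)} + (1 - \tfrac{1}{\mu_y}) z^{(t)}$, is simply the algebraic solution of the first line of \eqref{eq:gpa} for $x^{(t)}$ in terms of $y^{(t)}$ and $z^{(t)}$, which is legitimate precisely because $\mu_y > 0$. The $z$-update is syntactically identical in both formulations, so the only nontrivial task is to derive the recursion for $y^{(t+1)}$ in \eqref{eq:memory-efficient} from the three lines of \eqref{eq:gpa}.

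First I would substitute the $x^{(t+1)}$-update from \eqref{eq:gpa} into the defining relation $y^{(t+1)} = \mu_y x^{(t+1)} + (1-\mu_y) z^{(t+1)}$ to obtain the intermediate identity
\begin{equation*}
y^{(t+1)} = \mu_x \mu_y\, x^{(t)} + (1 - \mu_x \mu_y)\, z^{(t+1)},
\end{equation*}
which isolates the old state $x^{(t)}$ against the new iterate $z^{(t+1)}$. Next, I would eliminate $x^{(t)}$ using the inverse relation above and replace $z^{(t+1)}$ by $z^{(t)} - \gamma^{(t)} g(y^{(t)}; \xi^{(t)})$. Collecting terms, the coefficient of $y^{(t)}$ becomes $\mu_x$, the coefficient of $z^{(t)}$ collapses to $\mu_x(\mu_y - 1) + (1 - \mu_x \mu_y) = 1 - \mu_x$, and the coefficient of $g(y^{(t)}; \xi^{(t)})$ is $-(1 - \mu_x \mu_y)\gamma^{(t)}$, reproducing the $y^{(t+1)}$ line of \eqref{eq:memory-efficient} exactly. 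An induction on $t$, using matching initializations $z^{(1)} = x^{(1)}$ (so that $y^{(1)} = x^{(1)}$ as well), then closes the equivalence in both directions.

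The main obstacle is purely bookkeeping, concentrated in verifying that the two $z^{(t)}$ coefficients cancel to give $1-\mu_x$; this is the only step where an error in simplification would break the correspondence. The hypothesis $\mu_y > 0$ is used in exactly one place, to invert the affine map from $x^{(t)}$ to $y^{(t)}$; when $\mu_y = 0$, the sequence $x^{(t)}$ is no longer recoverable from $(y^{(t)}, z^{(t)})$ alone, which is why the proposition must exclude this degenerate case.
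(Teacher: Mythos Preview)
Your proposal is correct and follows essentially the same approach as the paper's proof: both first invert the $y^{(t)}$ interpolation to recover $x^{(t)}$ (using $\mu_y>0$), then establish the intermediate identity $y^{(t+1)} = \mu_x\mu_y\, x^{(t)} + (1-\mu_x\mu_y)\, z^{(t+1)}$, and finally substitute the $x^{(t)}$ formula and the $z$-update to collapse the coefficients. Your derivation of the intermediate identity is in fact slightly more direct than the paper's (which routes through two expressions for $\mu_y x^{(t+1)}$), but the argument is otherwise identical.
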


\begin{proof}
Note that it is sufficient to show that:
\begin{align}
    x^{(t)} & = \frac{1}{\mu_y} y^{(t)} + \left(1 - \frac{1}{\mu_y} \right) z^{(t)}, \label{eq:memory-efficient-x} \\
    y^{(t + 1)} & = \mu_x y^{(t)} + (1 - \mu_x) z^{(t)} - (1 - \mu_x \mu_y) \gamma^{(t)} g(y^{(t)}; \xi^{(t)}). \label{eq:memory-efficient-y}
\end{align}
To prove \eqref{eq:memory-efficient-x}, note that we can re-write $x^{(t)}$ as a function of $y^{(t)}$ and $z^{(t)}$, i.e., since
\begin{equation*}
    y^{(t)} = \mu_y x^{(t)} + (1 - \mu_y) z^{(t)}
\end{equation*}
and $\mu_y > 0$, we have that
\begin{equation*}
    x^{(t)} = \frac{1}{\mu_y} y^{(t)} + \frac{1}{\mu_y} (\mu_y - 1) z^{(t)} = \frac{1}{\mu_y} y^{(t)} + \left(1 - \frac{1}{\mu_y} \right) z^{(t)}.
\end{equation*}

To prove \eqref{eq:memory-efficient-x}, we can re-write \eqref{eq:memory-efficient-x} as
\begin{equation}
    \label{eq:mu-y-x-1}
    \mu_y x^{(t + 1)} = \mu_y z^{(t + 1)} + (y^{(t + 1)} - z^{(t + 1)}) = y^{(t + 1)} - (1 - \mu_y) z^{(t + 1)}.
\end{equation}
Similarly, by plugging in the original $x^{(t + 1)}$ update, i.e., $x^{(t + 1)} = \mu_x x^{(t)} + (1 - \mu_x) z^{(t)}$, we also have:
\begin{equation}
    \label{eq:mu-y-x-2}
    \mu_y x^{(t + 1)} = \mu_y (\mu_x x^{(t)} + (1 - \mu_x) z^{(t)}) = \mu_x \mu_y x^{(t)} + (1 - \mu_x) \mu_y z^{(t + 1)}.
\end{equation}
Combining these two equalities in equations \ref{eq:mu-y-x-1} and \ref{eq:mu-y-x-2} and rearranging, we get:
\begin{equation}
    \label{eq:y_interp_x_z}
    y^{(t + 1)} = \mu_x \mu_y x^{(t)} + (1 - \mu_x \mu_y) z^{(t + 1)}.
\end{equation}
Plugging-in \eqref{eq:memory-efficient-x} and the update $z^{(t + 1)} = z^{(t)} - \gamma^{(t)} g(y^{(t)}; \xi^{(t)})$ from \eqref{eq:gpa} into \eqref{eq:y_interp_x_z}, we obtain:
\begin{align*}
    y^{(t + 1)} & = \mu_x \mu_y \left(\frac{1}{\mu_y} y^{(t)} + \left(1 - \frac{1}{\mu_y} \right) z^{(t)} \right) + (1 - \mu_x \mu_y) (z^{(t)} - \gamma^{(t)} g(y^{(t)}; \xi^{(t)})) \\
    & = \mu_x y^{(t)} + (1 - \mu_x) z^{(t)} - (1 - \mu_x \mu_y) \gamma^{(t)} g(y^{(t)}; \xi^{(t)}),
\end{align*}
as desired.
\end{proof}

\subsection{Convergence Bounds Based On Online-to-Batch Theory}

Our proofs similarly rely on the online-to-batch conversion theory used in \cite{schedule-free}.

\begin{lem}
\label{lem:ema}
Suppose we define $w^{(t)}$ as the weighting:
\[
w^{(t)}=\begin{cases}
1 & \mbox{if } t = 1,\\
\left(1 - \mu_x\right)\mu_x^{-t+1} & \mbox{if } t > 1.
\end{cases}
\]
Then the model evaluation sequence $x^{(t)}$
is equivalent to the weighted average:
\[
x^{(t + 1)} = \frac{\sum_{i=1}^{t}w^{(i)}}{\sum_{i=1}^{t+1}w^{(i)}} x^{(t)} + \frac{w^{(t+1)}}{\sum_{i=1}^{(t+1)} w^{(i)}} z^{(t+1)} = \frac{w^{(1:t)}}{w^{(1:t+1)}} x^{(t)} + \frac{w^{(t + 1)}}{w^{(1:t+1)}} z^{(t + 1)},
\]
with
\[
w^{(1:t)}=\sum_{s=1}^{t} w^{(s)}=\mu_x^{-t+1}.
\]
Furthermore, $x^{(t)}$ can be expressed as the closed form expression:
\[
x^{(t)} = \mu_x^{t-1}\sum_{s=1}^{t} w^{(s)} z^{(s)}.
\]
\end{lem}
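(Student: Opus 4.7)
The plan is to prove the three assertions in the stated order, since each builds directly on the previous one and the whole lemma is essentially a bookkeeping exercise that unpacks the exponential-moving-average recursion into explicit weights.

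First I would evaluate the partial sum $w^{(1:t)} = \sum_{s=1}^t w^{(s)}$ in closed form. Splitting off the boundary term $w^{(1)} = 1$ leaves the geometric tail $\sum_{s=2}^t (1-\mu_x)\mu_x^{-s+1}$, which can be summed using the standard formula for a finite geometric series to yield $1 + (\mu_x^{-t+1} - 1) = \mu_x^{-t+1}$. This establishes the second claim and also explains the \emph{ad hoc} definition $w^{(1)}=1$: it is precisely what is needed to make the partial sums collapse to a clean power of $\mu_x^{-1}$.

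Next, I would verify the weighted-average recurrence. Using the closed form just obtained, the two ratios simplify as $w^{(1:t)}/w^{(1:t+1)} = \mu_x^{-t+1}/\mu_x^{-t} = \mu_x$ and $w^{(t+1)}/w^{(1:t+1)} = (1-\mu_x)\mu_x^{-t}/\mu_x^{-t} = 1-\mu_x$. The claimed recurrence then collapses into $x^{(t+1)} = \mu_x x^{(t)} + (1-\mu_x) z^{(t+1)}$, which is exactly the GPA update in \eqref{eq:gpa}, so the equivalence is immediate.

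Finally, I would prove the closed-form representation $x^{(t)} = \mu_x^{t-1}\sum_{s=1}^t w^{(s)} z^{(s)}$ by induction on $t$. The base case $t=1$ is just $x^{(1)} = z^{(1)}$, which holds by initialization. For the inductive step, substituting the hypothesis into the GPA update gives
\[
x^{(t+1)} = \mu_x^{t}\sum_{s=1}^{t} w^{(s)} z^{(s)} + (1-\mu_x)z^{(t+1)},
\]
and one only needs to observe that $\mu_x^{t} w^{(t+1)} = \mu_x^{t}(1-\mu_x)\mu_x^{-t} = 1-\mu_x$, so the trailing term slides into the sum and yields $\mu_x^{t}\sum_{s=1}^{t+1} w^{(s)} z^{(s)}$, closing the induction. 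There is no real obstacle here; the only subtlety is to treat the boundary weight $w^{(1)}$ separately from the pattern $(1-\mu_x)\mu_x^{-t+1}$, since it is this asymmetry that powers both the telescoping of $w^{(1:t)}$ and the clean merging step in the induction.
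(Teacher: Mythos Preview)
Your proposal is correct. The paper itself does not supply a proof of this lemma; it is stated as a routine bookkeeping fact and then invoked in the proof of Theorem~\ref{thm:genbregman-gpa}. Your three-step argument---geometric summation for $w^{(1:t)}$, reduction of the weighted-average recurrence to the GPA update, and induction for the closed form---is exactly the natural way to fill in the details, and every step checks out. The only comment is that your remark about the ``asymmetry'' of $w^{(1)}$ is apt: the boundary weight is chosen so that $w^{(1:t)}$ telescopes to $\mu_x^{-t+1}$, which is what makes the ratios $w^{(1:t)}/w^{(1:t+1)}=\mu_x$ and $w^{(t+1)}/w^{(1:t+1)}=1-\mu_x$ collapse to the GPA coefficients.
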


\begin{thm}
Let $F$ be a convex function, and assume that there exists a minimizer $x_*$ that minimizes $F$. Let $\xi^{(1)},\dots,\xi^{(T)}$ be a sequence of i.i.d. random variables. Suppose that we are given arbitrary updates $z^{(1)},\dots,z^{(T)}$ from a base optimizer within the Generalized Primal Averaging framework (Equation~\ref{eq:gpa}). Then for $\mu_x, \mu_y \in [0, 1)$ and average iterate $\bar{x}^{(T)} = \frac{1}{T} \sum_{t=1}^{T} x^{(t)}$, we have the bound
\begin{align*}
\mathbb{E}[F(\bar{x}^{(T)})-F(x_{*})] & \leq \frac{1}{T}\sum_{t=1}^{T}\mathbb{E}[\langle\nabla F(y^{(t)}),z^{(t)}-x_{*}\rangle]\\
 & \qquad +\frac{\mu_x}{1-\mu_x}\frac{1}{T}\mathbb{E}\left[F(x^{(1)})-F(x_*)\right]\\
 & \qquad -\frac{1}{1-\mu_y}\frac{1}{T}\sum_{t=1}^{T}\mathbb{E}[B_{F}(y^{(t)},x^{(t)})] -\frac{\mu_y}{1-\mu_y}\frac{1}{T}\sum_{t=1}^{T} \mathbb{E}[B_{F}(x^{(t)},y^{(t)})] \\
 & \qquad -\frac{\mu_x}{1-\mu_x}\frac{1}{T}\sum_{t=1}^{T}\mathbb{E}[B_{F}(x^{(t-1)},x^{(t)})].
\end{align*}
\end{thm}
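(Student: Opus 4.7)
The plan is to prove the theorem by reducing to a deterministic per-iteration identity for $\Phi^{(t)} = F(x^{(t)}) - F(x_*)$, then telescoping, and finally applying Jensen's inequality to pass from the uniform average of $F(x^{(t)})$ to $F(\bar{x}^{(T)})$. Since $F$ is convex, $F(\bar{x}^{(T)}) - F(x_*) \leq \tfrac{1}{T}\sum_{t=1}^T \Phi^{(t)}$ pointwise, so the task is to bound $\sum_t \Phi^{(t)}$. Taking expectations commutes with every step of the argument, so stochasticity of the $\xi^{(t)}$ only enters at the end.

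The core step is deriving a one-step identity for $\Phi^{(t)}$ that exposes the regret term $\langle \nabla F(y^{(t)}), z^{(t)} - x_* \rangle$. I would do this by equating two expansions of $F(z^{(t)}) - F(x_*)$: the Bregman identity $F(z^{(t)}) - F(x_*) = \langle \nabla F(y^{(t)}), z^{(t)} - x_* \rangle + B_F(z^{(t)}, y^{(t)}) - B_F(x_*, y^{(t)})$, and the convex-combination identity $\mu_y F(x^{(t)}) + (1-\mu_y) F(z^{(t)}) = F(y^{(t)}) + \mu_y B_F(x^{(t)}, y^{(t)}) + (1-\mu_y) B_F(z^{(t)}, y^{(t)})$ solved for $F(z^{(t)})$. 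The common $B_F(z^{(t)}, y^{(t)})$ terms cancel. Substituting $F(y^{(t)}) - F(x^{(t)}) = \langle \nabla F(x^{(t)}), y^{(t)} - x^{(t)} \rangle + B_F(y^{(t)}, x^{(t)})$ and using the algebraic relation $y^{(t)} - x^{(t)} = (1-\mu_y)(z^{(t)} - x^{(t)})$ yields the equality
\begin{equation*}
\Phi^{(t)} = \langle \nabla F(y^{(t)}), z^{(t)} - x_* \rangle - \langle \nabla F(x^{(t)}), z^{(t)} - x^{(t)} \rangle - \tfrac{1}{1-\mu_y} B_F(y^{(t)}, x^{(t)}) - \tfrac{\mu_y}{1-\mu_y} B_F(x^{(t)}, y^{(t)}) - B_F(x_*, y^{(t)}),
\end{equation*}
whose last term is non-positive and can be dropped. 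The coefficients $\tfrac{1}{1-\mu_y}$ and $\tfrac{\mu_y}{1-\mu_y}$ appearing in the theorem fall out of this step immediately.

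The extraneous inner product $-\langle \nabla F(x^{(t)}), z^{(t)} - x^{(t)} \rangle$ is absorbed using the outer recursion, since $x^{(t)} = \mu_x x^{(t-1)} + (1-\mu_x) z^{(t)}$ gives $z^{(t)} - x^{(t)} = \tfrac{\mu_x}{1-\mu_x}(x^{(t)} - x^{(t-1)})$; combined with $\langle \nabla F(x^{(t)}), x^{(t)} - x^{(t-1)} \rangle = \Phi^{(t)} - \Phi^{(t-1)} + B_F(x^{(t-1)}, x^{(t)})$, it becomes $\tfrac{\mu_x}{1-\mu_x}[\Phi^{(t)} - \Phi^{(t-1)} + B_F(x^{(t-1)}, x^{(t)})]$. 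Collecting the $\Phi^{(t)}$ terms produces the telescoping inequality
\begin{equation*}
\tfrac{1}{1-\mu_x}\Phi^{(t)} - \tfrac{\mu_x}{1-\mu_x}\Phi^{(t-1)} \leq \langle \nabla F(y^{(t)}), z^{(t)} - x_* \rangle - \tfrac{\mu_x}{1-\mu_x} B_F(x^{(t-1)}, x^{(t)}) - \tfrac{1}{1-\mu_y} B_F(y^{(t)}, x^{(t)}) - \tfrac{\mu_y}{1-\mu_y} B_F(x^{(t)}, y^{(t)}),
\end{equation*}
with the convention $x^{(0)} = x^{(1)}$ (consistent with the algorithm's initialization $z^{(1)} = x^{(1)}$, so that $B_F(x^{(0)}, x^{(1)}) = 0$). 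Summing over $t=1, \dots, T$ collapses the left side to $\sum_t \Phi^{(t)} + \tfrac{\mu_x}{1-\mu_x}(\Phi^{(T)} - \Phi^{(1)})$; discarding the non-negative $\tfrac{\mu_x}{1-\mu_x}\Phi^{(T)}$, dividing by $T$, applying Jensen's inequality, and taking expectations delivers the claimed bound.

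The main obstacle will be producing the per-step equality for $\Phi^{(t)}$ with exactly the right Bregman divergences: most naive expansions introduce spurious $B_F(z^{(t)}, y^{(t)})$ or $B_F(z^{(t)}, x^{(t)})$ contributions that do not cancel and leave behind coefficients like $\tfrac{1}{\mu_y}$ that blow up as $\mu_y \to 0$. The essential trick is to expand $F(z^{(t)}) - F(x_*)$ along the two paths described above so that $B_F(z^{(t)}, y^{(t)})$ cancels, and to convert $F(y^{(t)}) - F(x^{(t)})$ using Bregman divergence centered at $x^{(t)}$ (rather than $y^{(t)}$) so that the asymmetric divergence $B_F(y^{(t)}, x^{(t)})$ enters with the correct sign and coefficient.
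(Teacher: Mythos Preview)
Your proposal is correct and follows essentially the same approach as the paper: derive a per-step inequality relating $F(x^{(t)})-F(x_*)$ to the regret term $\langle\nabla F(y^{(t)}),z^{(t)}-x_*\rangle$ plus Bregman corrections via the geometric relations $y^{(t)}-x^{(t)}=(1-\mu_y)(z^{(t)}-x^{(t)})$ and $z^{(t)}-x^{(t)}=\tfrac{\mu_x}{1-\mu_x}(x^{(t)}-x^{(t-1)})$, telescope, drop $\tfrac{\mu_x}{1-\mu_x}\Phi^{(T)}\ge 0$, and apply Jensen. The only organizational differences are that the paper routes the argument through an exponentially increasing weight sequence $w^{(t)}$ (with $w^{(1:t-1)}/w^{(t)}=\mu_x/(1-\mu_x)$) inherited from the Schedule-Free online-to-batch machinery, and obtains the $\mu_y$-Bregman terms by directly splitting $\langle\nabla F(x^{(t)}),z^{(t)}-x^{(t)}\rangle$ through $y^{(t)}$, whereas you bypass the weights and obtain the same terms by equating two expansions of $F(z^{(t)})-F(x_*)$; both paths produce the identical telescoping inequality.
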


\begin{proof}
We start with the same analysis as in the Schedule-Free work \citep{schedule-free}. Notice that by definition of $x^{(t)}$, it holds $w^{(1:t-1)}(x^{(t)} - x^{(t-1)}) = w^{(t)}(z^{(t)} - x^{(t)})$. Therefore,
\begin{align*}
    w^{(1:t)}F(x^{(t)}) &- w^{(1:t-1)}F(x^{(t-1)}) -w^{(t)}F(x_{*}) \\
    &= w^{(1:t-1)}(F(x^{(t)}) - F(x^{(t-1)})) + w^{(t)} (F(x^{(t)}) - F(x_*)) \\
    &= w^{(1:t-1)}(\langle \nabla F(x^{(t)}), x^{(t)} - x^{(t-1)}\rangle - B_F(x^{(t-1)}, x^{(t)})) + w^{(t)} (F(x^{(t)}) - F(x_*)) \\
    &= w^{(t)}\langle \nabla F(x^{(t)}), z^{(t)} - x^{(t)}\rangle - w^{(1:t-1)} B_F(x^{(t-1)}, x^{(t)}) + w^{(t)} (F(x^{(t)}) - F(x_*)).
\end{align*}
Next, we observe that by definition of $y^{(t)}$, it holds $z^{(t)} - y^{(t)} = \frac{\mu_y}{1 - \mu_y}(y^{(t)} - x^{(t)})$, and, thus,
\begin{align*}
    \langle &\nabla  F(x^{(t)}), z^{(t)} - x^{(t)}\rangle \\
    &= \langle \nabla F(x^{(t)}) - \nabla F(y^{(t)}), z^{(t)} - y^{(t)}\rangle + \langle \nabla F(y^{(t)}), z^{(t)} - y^{(t)}\rangle \\
    &\quad + \langle \nabla F(x^{(t)}), y^{(t)} - x^{(t)}\rangle \\
    &= \frac{\mu_y}{1 - \mu_y}\langle \nabla F(x^{(t)}) - \nabla F(y^{(t)}), y^{(t)} - x^{(t)}\rangle + F(x_{*}) - F(y^{(t)}) - B_F(x_{*}, y^{(t)}) + \langle \nabla F(y^{(t)}), z^{(t)} - x_*\rangle \\
    &\quad + F(y^{(t)}) - F(x^{(t)}) - B_F(y^{(t)}, x^{(t)}) \\
    &\le -\frac{\mu_y}{1 - \mu_y}(B_F(x^{(t)}, y^{(t)}) + B_F(y^{(t)}, x^{(t)})) + F(x_{*}) - F(x^{(t)}) - B_F(y^{(t)}, x^{(t)}) + \langle \nabla F(y^{(t)}), z^{(t)} - x_*\rangle \\
    &= -\frac{\mu_y}{1 - \mu_y}B_F(x^{(t)}, y^{(t)}) - \frac{1}{1 - \mu_y} B_F(y^{(t)}, x^{(t)}) + F(x_{*}) - F(x^{(t)}) + \langle \nabla F(y^{(t)}), z^{(t)} - x_*\rangle,
\end{align*}
where the inequality step used $-B_F(x_*, y^{(t)})\le 0$, which follows from convexity of $F$. 
Plugging this back, we obtain
\begin{align}
    w^{(1:t)}F(x^{(t)}) &- w^{(1:t-1)}F(x^{(t-1)}) -w^{(t)}F(x_{*}) \notag\\
    &\le -w^{(t)}\frac{\mu_y}{1 - \mu_y}B_F(x^{(t)}, y^{(t)}) - \frac{w^{(t)}}{1 - \mu_y} B_F(y^{(t)}, x^{(t)}) + w^{(t)}( F(x_{*}) - F(x^{(t)}) ) \notag\\
    &\quad + w^{(t)} \langle \nabla F(y^{(t)}), z^{(t)} - x_*\rangle - w^{(1:t-1)} B_F(x^{(t-1)}, x^{(t)}) + w^{(t)} (F(x^{(t)}) - F(x_*)) \notag\\
    & = w^{(t)} \langle\nabla F(y^{(t)}),z^{(t)} - x_{*}\rangle
    - \frac{w^{(t)}}{1 - \mu_y} B_{F}(y^{(t)}, x^{(t)}) \notag \\
    & \quad - \frac{w^{(t)} \mu_y}{1 - \mu_y} B_{F}(x^{(t)}, y^{(t)}) 
    - w^{(1:t-1)}B_{F}(x^{(t-1)},x^{(t)}). \label{eq:onlinetobatch}
\end{align}

We may adapt this bound to our setting by using an exponentially increasing
weighting sequence, given by Lemma \ref{lem:ema}. Using those weights,
we have simplified expressions for the following quantities:
\begin{align*}
\frac{w^{(1:t)}}{w^{(t)}} & = \frac{\mu_x^{-t+1}}{\left(1-\mu_x\right)\mu_x^{-t+1}}=\frac{1}{1-\mu_x}, \\
\frac{w^{(1:t-1)}}{w^{(t)}} & = \frac{\mu_x^{-(t-1)+1}}{\left(1-\mu_x\right)\mu_x^{-t+1}}=\frac{\mu_x}{1-\mu_x},
\end{align*}
with a special case for the first iterate $\frac{w^{(1:1)}}{w^{(1)}} = 1$ and $\frac{w^{(1:t-1)}}{w^{(1)}} = 0$.

To obtain an average regret bound, we divide Equation \ref{eq:onlinetobatch}
by $w^{(t)}$, take expectation, and sum from $1$ to $T$. The left-hand side is a telescoping
sum, which we can simplify as follows:
\begin{align*}
 &\sum_{t=1}^{T}\left[\frac{w^{(1:t)}}{w^{(t)}}\mathbb{E}[F(x^{(t)})] -\frac{w^{(1:t-1)}}{w^{(t)}}\mathbb{E}[F(x^{(t-1)})]\right]-TF(x_{*}) \\
 & =F(x^{(1)})-\frac{w^{(1:1)}}{w^{(2)}}F(x^{(1)}) + \frac{1}{1-\mu_x}\sum_{t=2}^{T}\mathbb{E}[F(x^{(t)})] -\frac{\mu_x}{1-\mu_x}\sum_{t=2}^{T-1}\mathbb{E}[F(x^{(t)})] -TF(x_{*})\\
 & =F(x^{(1)})-\frac{1}{\left(1-\mu_x\right)\mu_x^{-1}}F(x^{(1)})+\frac{1}{1-\mu_x}\mathbb{E}[F(x^{(T)})] + \sum_{t=2}^{T-1}\left(\frac{1}{1-\mu_x}-\frac{\mu_x}{1-\mu_x}\right)\mathbb{E}[F(x^{(t)})] - TF(x_{*})\\
 & =F(x^{(1)})-\frac{\mu_x}{1-\mu_x}F(x^{(1)})+\frac{1}{1-\mu_x}\mathbb{E}[F(x^{(T)})] + \sum_{t=2}^{T-1}\left(\frac{1}{1-\mu_x}-\frac{\mu_x}{1-\mu_x}\right)\mathbb{E}[F(x^{(t)})] - TF(x_{*})\\
 & =-\frac{\mu_x}{1-\mu_x}F(x^{(1)})+\frac{\mu_x}{1-\mu_x}\mathbb{E}[F(x^{(T)})] + \sum_{t=1}^{T}\mathbb{E}[F(x^{(t)})] - TF(x_{*}).
\end{align*}
Plugging-in this simplified expression, moving the extra $F(x^{(1)})-F(x^{(t)})$
term to the right-hand side, and simplifying gives:
\begin{align*}
\sum_{t=1}^{T}\mathbb{E}\left[F(x^{(t)})-F(x_{*})\right] & \leq\sum_{t=1}^{T}\mathbb{E}[\langle\nabla F(y^{(t)}),z^{(t)}-x_{*}\rangle] +\frac{\mu_x}{1-\mu_x}\mathbb{E}\left[F(x^{(1)})-F(x^{(T)})\right]\\
 & -\frac{1}{1-\mu_y}\sum_{t=1}^{T}\mathbb{E}[B_{F}(y^{(t)},x^{(t)})] -\frac{\mu_y}{1-\mu_y}\sum_{t=1}^{T}\mathbb{E}[B_{F}(x^{(t)},y^{(t)})]\\
 & -\frac{\mu_x}{1-\mu_x}\sum_{t=1}^{T}\mathbb{E}[B_{F}(x^{(t-1)},x^{(t)})].
\end{align*}
We get a bound on the average iterate $\bar{x}_{T}=\sum_{t=1}^{T}x^{(t)}$
by dividing by $T$ and applying Jensen's inequality:
\begin{align*}
\mathbb{E}[F(\bar{x}_{T})-F(x_{*})] & \leq\frac{1}{T}\mathbb{E}\sum_{t=1}^{T}\langle\nabla F(y^{(t)}),z^{(t)}-x_{*}\rangle +\frac{\mu_x}{1-\mu_x}\frac{1}{T}\mathbb{E}\left[F(x^{(1)})-F(x^{(T)})\right]\\
 & -\frac{1}{1-\mu_y}\frac{1}{T}\mathbb{E}\sum_{t=1}^{T}B_{F}(y^{(t)},x^{(t)})-\frac{\mu_y}{1-\mu_y}\frac{1}{T}\mathbb{E}\sum_{t=1}^{T}B_{F}(x^{(t)},y^{(t)})\\
 & -\frac{\mu_x}{1-\mu_x}\frac{1}{T}\mathbb{E}\sum_{t=1}^{T}B_{F}(x^{(t-1)},x^{(t)}).
\end{align*}
Finally, we use $F(x_*)\le F(x^{(T)})$ to get the claimed bound.
\end{proof}

\begin{corollary}
    Assume that the base optimizer has regret guarantees $\sum_{t=1}^{T}\mathbb{E}[\langle\nabla F(y^{(t)}),z^{(t)}-x_{*}\rangle] = \mathcal{O}(\sqrt{T})$. Then:
    \begin{align*}
    \mathbb{E}[F(\bar{x}^{(T)})-F(x_{*})]
    = \mathcal{O} \left( \frac{1}{\sqrt{T}} \right).
\end{align*}
\end{corollary}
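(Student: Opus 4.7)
The plan is to derive the corollary directly from Theorem~\ref{thm:genbregman-gpa} by carefully accounting for the asymptotic order of each term on the right-hand side of its bound. The key observation is that among the five terms in the theorem's bound, three are negative Bregman-divergence averages, one is a divided-by-$T$ constant, and only the first (the average regret of the base optimizer) depends nontrivially on the base optimizer's behavior.

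First, I would invoke convexity of $F$ to note that for all $a, b \in \mathbb{R}^n$, $B_F(a,b) \ge 0$. Consequently, each of the three terms
\[
-\frac{1}{1-\mu_y}\frac{1}{T}\sum_{t=1}^{T}\mathbb{E}[B_F(y^{(t)},x^{(t)})], \quad -\frac{\mu_y}{1-\mu_y}\frac{1}{T}\sum_{t=1}^{T}\mathbb{E}[B_F(x^{(t)},y^{(t)})], \quad -\frac{\mu_x}{1-\mu_x}\frac{1}{T}\sum_{t=1}^{T}\mathbb{E}[B_F(x^{(t-1)},x^{(t)})]
\]
is non-positive (the leading scalars $\tfrac{1}{1-\mu_y}, \tfrac{\mu_y}{1-\mu_y}, \tfrac{\mu_x}{1-\mu_x}$ are all finite and non-negative because $\mu_x, \mu_y \in [0,1)$). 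Hence, they can be upper-bounded by zero and dropped from the inequality.

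Next, I would handle the remaining two terms. The second term $\tfrac{\mu_x}{1-\mu_x}\tfrac{1}{T}\mathbb{E}[F(x^{(1)})-F(x_*)]$ is a deterministic constant (once the initialization is fixed) divided by $T$, so it is $\mathcal{O}(1/T)$. The first term, $\tfrac{1}{T}\sum_{t=1}^T \mathbb{E}[\langle\nabla F(y^{(t)}),z^{(t)}-x_*\rangle]$, is by hypothesis of order $\mathcal{O}(\sqrt{T})/T = \mathcal{O}(1/\sqrt{T})$. Summing, we obtain $\mathbb{E}[F(\bar{x}^{(T)}) - F(x_*)] = \mathcal{O}(1/\sqrt{T}) + \mathcal{O}(1/T) = \mathcal{O}(1/\sqrt{T})$, which is exactly the claim.

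There is no real obstacle here, as this is a routine asymptotic consolidation of Theorem~\ref{thm:genbregman-gpa}. The only subtle point worth flagging is that the multiplicative constants $\tfrac{1}{1-\mu_x}$ and $\tfrac{1}{1-\mu_y}$ blow up as $\mu_x, \mu_y \to 1$; this does not affect the asymptotic rate for any \emph{fixed} pair $(\mu_x,\mu_y)$ in $[0,1)$, but it does mean that the hidden constant in the $\mathcal{O}(\cdot)$ notation degrades near the boundary. One could optionally make this dependence explicit in the constant.
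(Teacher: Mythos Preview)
Your proposal is correct and follows essentially the same approach as the paper: drop the three non-positive Bregman divergence terms, observe the initialization term is $\mathcal{O}(1/T)$, and apply the regret assumption to conclude the average-regret term is $\mathcal{O}(1/\sqrt{T})$. Your added remarks on why the Bregman terms are non-negative and on the degradation of the hidden constant as $\mu_x,\mu_y\to 1$ are not in the paper's proof but are accurate and useful observations.
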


\begin{proof}
    Note that we can upper bound the inequality in Theorem~\ref{thm:genbregman-gpa} by ignoring the negative Bregman divergence terms, i.e., 
    $$\mathbb{E}[F(\bar{x}^{(T)})-F(x_{*})] \leq \frac{1}{T}\sum_{t=1}^{T}\mathbb{E}[\langle\nabla F(y^{(t)}),z^{(t)}-x_{*}\rangle] +\frac{\mu_x}{1-\mu_x}\frac{1}{T}\mathbb{E}\left[F(x^{(1)})-F(x_*)\right].$$
    The result follows from noting that the first term is $\mathcal{O}(1 / \sqrt{T})$ and the second term is $\mathcal{O}(1 / T)$.
\end{proof}

\section{Experimental Details}
\label{app:expmt_details}

\subsection{Comparison Between GPA and Nesterov}
\label{app:expmt_details_gpacoeffs}

In order to validate that DiLoCo's performance can only be matched or improved upon with decoupled interpolation constants in GPA, we test the case where $\mu_x = \mu_y$, which corresponds to Nesterov's primal averaging formulation in \Cref{eq:primal_averaging_form}. Here, we apply the same heuristic for $\mu_x = \mu^{1 / H}$ also to $\mu_y$. In \Cref{fig:gpacoeffs_comm_int8}, we observe that coupling the interpolation constants is sub-optimal, and decoupling these coefficients is necessary for the best performance from GPA. This matches the poor performance of single-worker DiLoCo with a single inner step.

\begin{figure}[H]
\centering \includegraphics[width=0.55\linewidth]{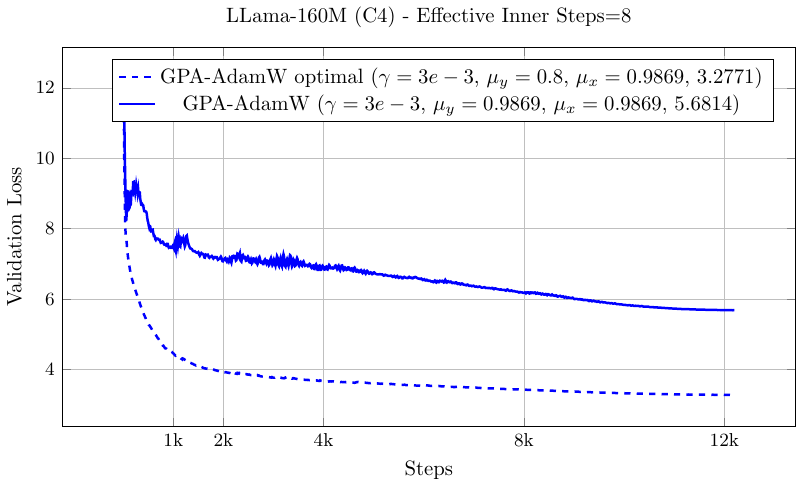}
  \caption{Comparison between Nesterov's primal averaging formulation with coupled constants $\mu_x = \mu_y$ and GPA with decoupled constants.}
  \label{fig:gpacoeffs_comm_int8}
\end{figure}


\subsection{Comparison of methods across number of inner steps for Llama-1B model} 
\label{app:expmt_details_consolidated_speedup_1B}

Figure~\ref{fig:consolidated_valloss_vs_comm_intervals_1B} and Figure~\ref{fig:bar_consolidated_valloss_vs_comm_intervals_1B} show the performance of the three methods on Llama-1B model as the effective number of inner steps is varied. Consistent with earlier findings on LLama-160M, GPA obtains speedups of upto 10.13\% over AdamW.

\begin{figure} [H] 
    \begin{subfigure}{0.54\textwidth}
        \includegraphics[width=\linewidth]{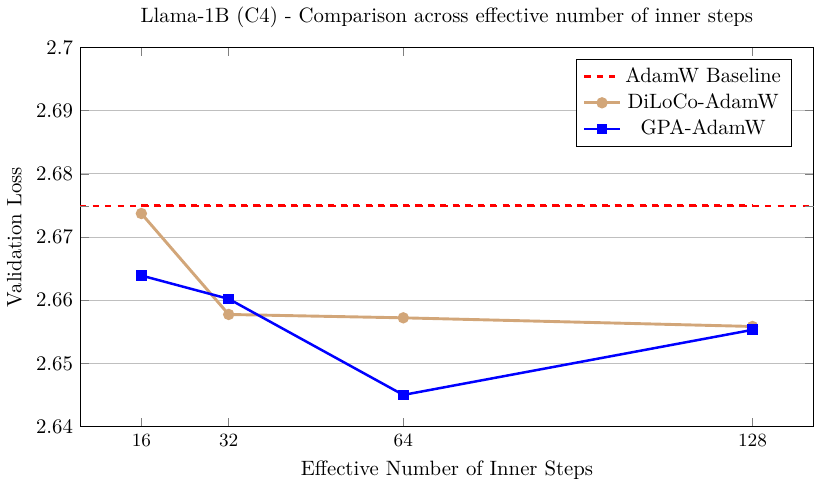}
        \caption{Both GPA and single-worker DiLoCo, when using AdamW as their base optimizer, outperform the tuned AdamW baseline for training a 1B parameter Llama model.}
        \label{fig:consolidated_valloss_vs_comm_intervals_1B}
    \end{subfigure}
    \quad
    \begin{subfigure}{0.44\textwidth}
        \begin{center}
            \includegraphics[width=\textwidth]{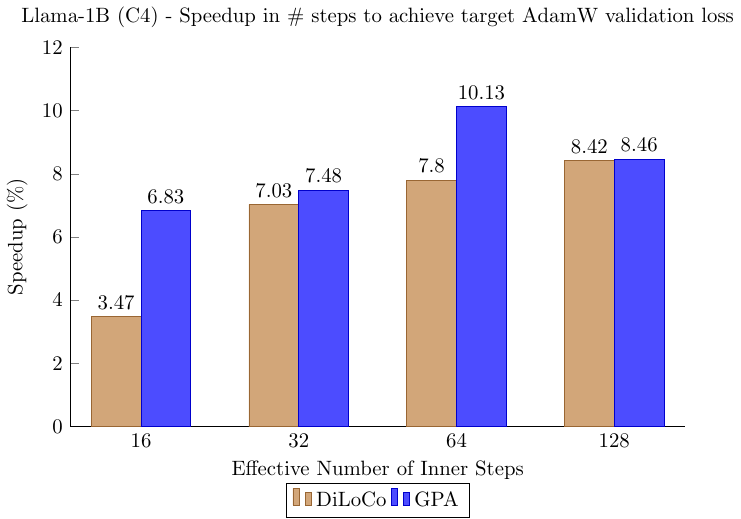}
        \end{center}
        \caption{Speedup achieved by single-worker DiLoCo and GPA measured in terms of reduction in number of steps required to attain the final validation loss obtained by AdamW, across different effective numbers of inner steps. GPA attains the highest speedup of 10.13\% when the effective inner steps is equal to 64.}
        \label{fig:bar_consolidated_valloss_vs_comm_intervals_1B}
    \end{subfigure}
    \caption{Comparison of validation loss and speedup for AdamW, single-worker DiLoCo, and GPA.}
    \label{fig:consolidated_plots_1B}
\end{figure}

\subsection{Additional Validation Loss Curves for Llama-160M and 1B models} 
\label{app:expmt_details_addnlplots}

In Figure~\ref{fig:comm_int8_valloss_vs_steps} and Figure~\ref{fig:comm_int64_128_valloss_vs_steps}, we provide additional validation loss curves for Llama-160M and Llama-1B models, where the effective number of inner steps equals 64 and 128, respectively. Consistent with earlier findings, we find that GPA outperforms baselines in all the cases, except when the number of inner steps equal to 128 for 1B model, where DiLoCo performs better.

\begin{figure}[H] 
  \includegraphics[width=0.49\linewidth]{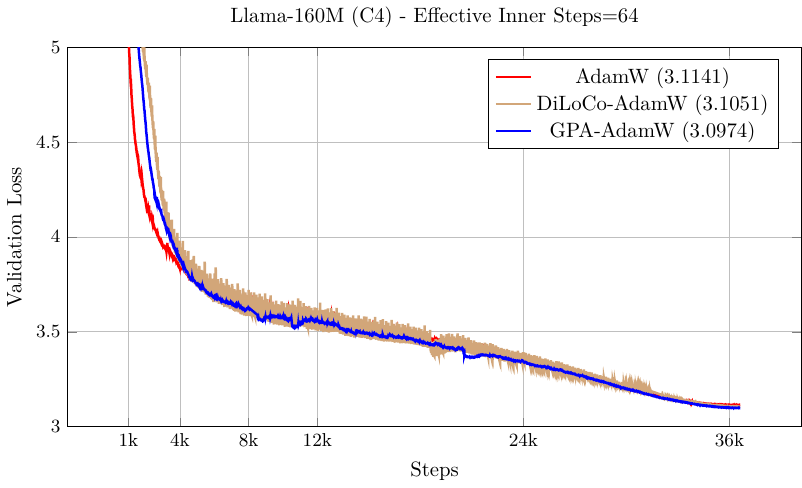}
  \qquad
  \includegraphics[width=0.49\linewidth]{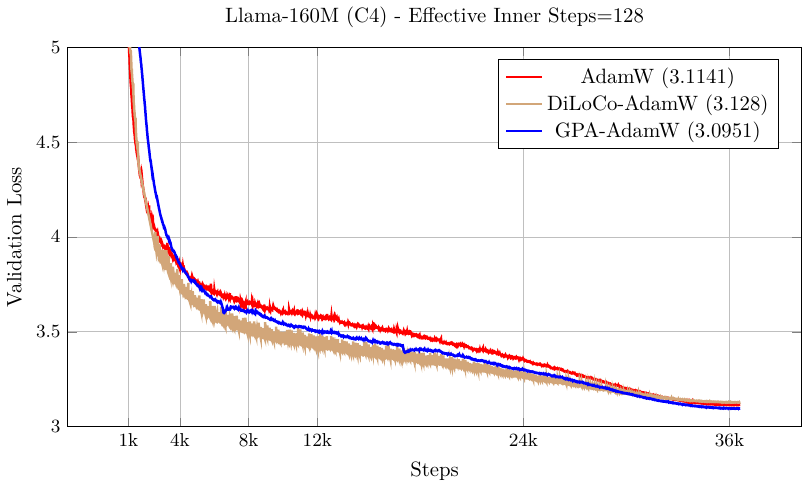}
  \caption{Validation loss versus steps for GPA, DiLoCo and AdamW when the effective number of inner steps equals $H = 64$ (left) and $H = 128$ (right).}
  \label{fig:comm_int8_valloss_vs_steps}
\end{figure}

\begin{figure}[H] 
  \includegraphics[width=0.49\linewidth]{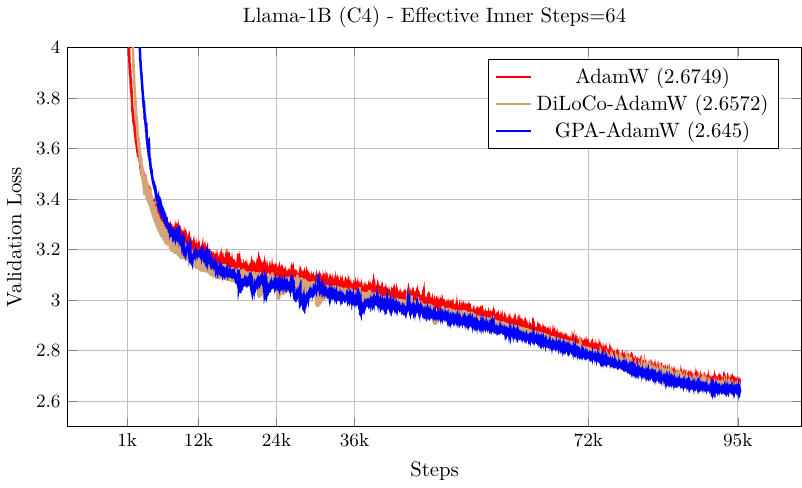}
  \qquad
  \includegraphics[width=0.49\linewidth]{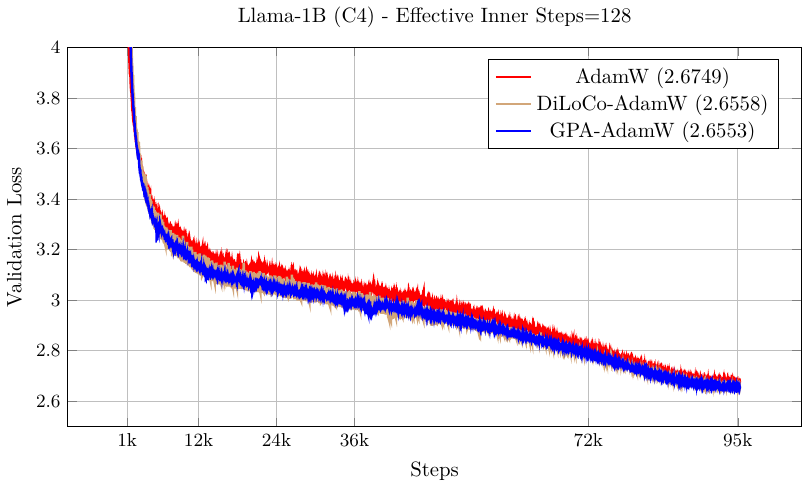}
  \caption{Validation loss versus steps for GPA, DiLoCo and AdamW when the effective number of inner steps equals $H = 64$ (left) and $H = 128$ (right).}
  \label{fig:comm_int64_128_valloss_vs_steps}
\end{figure}

\subsection{Additional Validation Loss Curves for ImageNet ViT model training}
\label{app:expts_vit_additional}
\begin{figure}[hbt!]
    \centering
    \includegraphics[width=0.48\linewidth]{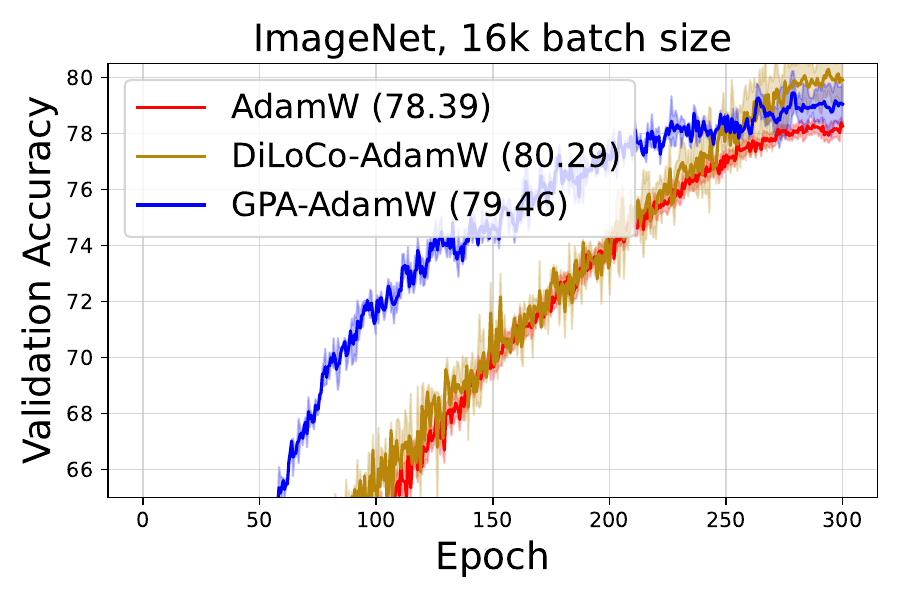}
    \quad
    \includegraphics[width=0.48\linewidth]{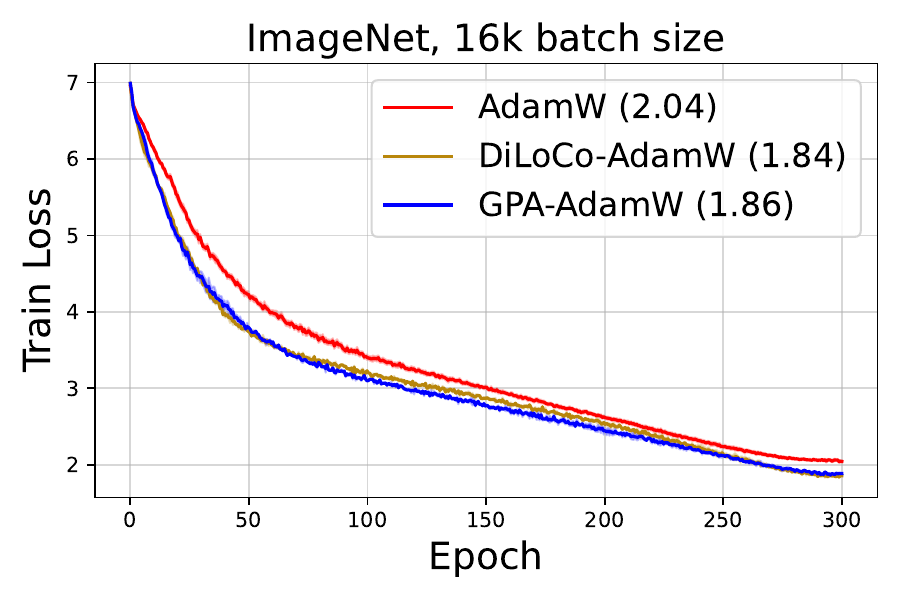}
    \caption{Comparison of AdamW and GPA on ImageNet ViT-S/16 from \texttt{timm} with data augmentations using a {\bf batch size of 16,384 samples}.}
    \label{fig:vit_experiments_16k}
\end{figure}

\subsection{Hyperparameter Sweeps for Llama-160M}
\label{app:expmt_details_hps}

\noindent {\bf Training setup.} We evaluate AdamW, DiLoCo-AdamW, and GPA-AdamW by pre-training the 160 million parameter Llama 3 model on the C4 dataset from scratch \citep{2019t5}. We follow an overtrained setup of 9.6 billion tokens, which is roughly three times the Chinchilla-optimal token budget \citep{hoffmann2022training}. All of our experiments are conducted on a single machine equipped with two GB200 GPUs (184GB memory). We used a batch size of 128 sequences with a sequence length of 2048 tokens, resulting in a total batch size of about 262,144 tokens. A summary of the hyperparameter sweeps is provided in Table~\ref{tab:hyperparameter_sweeps}.

{\bf Hyperparameter tuning strategy.}
\begin{itemize}
    \item {\bf AdamW}: We found that we could significantly improve the performance of AdamW by tuning the beta hyperparameters. In our study, we tune $(\beta_1, \beta_2)$ on a fine granular grid in the range of $(0.9, 0.999)$ and $\epsilon = 10^{-8}$, and sweep the learning rate from $7\cdot 10^{-4}$ to $1\cdot 10^{-2}$. Since tuning all hyperparameters simultaneously is computationally prohibitive, our sweeps are organized into multiple phases, with each phase consisting of analyzing a particular hyperparameter while keeping others fixed. We follow this strategy for all the methods. 
    \item {\bf DiLoCo-AdamW}: We provide the same search strategy for beta hyperparameters as done for AdamW. We additionally sweep the outer learning rate from $[0.35, 0.75, 1.0]$, the outer momentum from $[0.7, 0.75, 0.9, 1.0]$, and the number of inner steps from $[1, 128]$ with powers of 2.
    \item {\bf GPA-AdamW}: Similar to AdamW, we tune the beta hyperparameters for GPA following the same grid search. We additionally sweep $\mu_x$ based on the number of inner steps in DiLoCo (see Section \ref{sec:gpa-diloco}) and $\mu_y$ in $\{0.7, 0.8, 0.9, 0.95\}$. We follow the same search for learning rate values as other baselines.
    \item {\bf Schedule-Free-AdamW}: We use the same search strategy for beta hyperparameters as carried out for GPA. We sweep over the same range of learning rate and $\mu_{y}$ hyperparameters as used for GPA.
\end{itemize}

All runs employ a learning rate schedule with a linear warmup over the first 10\% of training, followed by cosine decay for the remainder of training (with the minimum learning rate factor is set to $0.0$). By default, we use gradient clipping with a clipping factor of $1.0$, except for GPA, where clipping can also be disabled. Weight decay is fixed at $0.1$. A summary of the hyperparameter sweeps is provided in Table~\ref{tab:hyperparameter_sweeps}.

Consistent with our tuning process, we provide a sensitivity analysis for each set of hyperparameters. In Figure~\ref{fig:hp_betas}, we examine the impact of different choices of $(\beta_1, \beta_2)$ for each of method. Figure~\ref{fig:hp_gpa_coeffs} analyzes the interpolation coefficients $\mu_{x}$ and $\mu_{y}$ in GPA. Lastly, in Figure~\ref{fig:hp_diloco}, we analyze the effects of varying the inner and outer learning rates, as well as the global momentum, for DiLoCo.

\begin{figure}[hbt!]
    \centering
    \begin{subfigure}{0.33\textwidth}
        \centering
        \includegraphics[width=\linewidth]{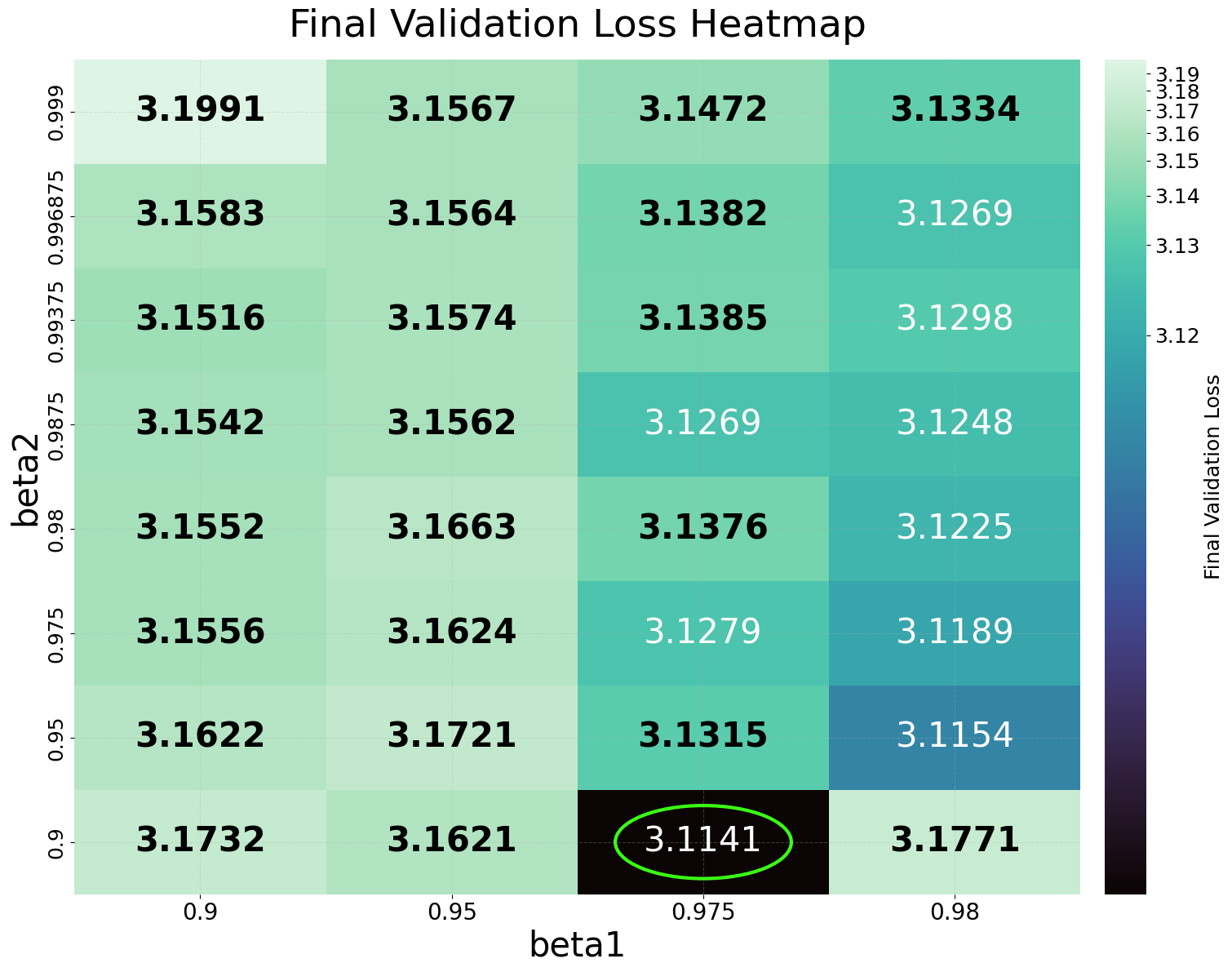}
        \caption{AdamW ($\beta_{1}$ vs $\beta_{2}$)}
    \end{subfigure}\hfill
    \begin{subfigure}{0.33\textwidth}
        \centering
        \includegraphics[width=\linewidth]{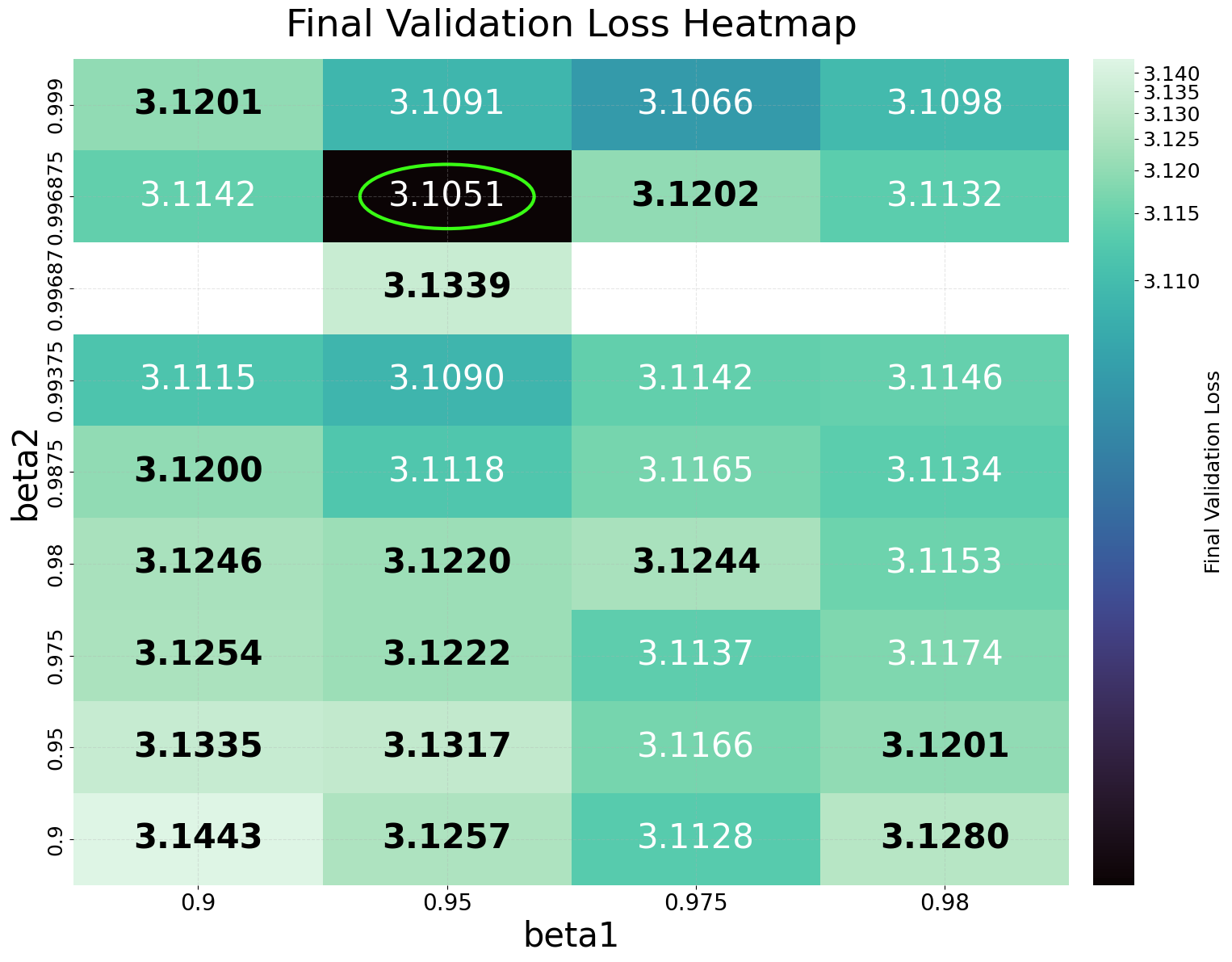}
        \caption{DiLoCo-AdamW ($\beta_{1}$ vs $\beta_{2}$)}
    \end{subfigure}\hfill
    \begin{subfigure}{0.33\textwidth}
        \centering
        \includegraphics[width=\linewidth]{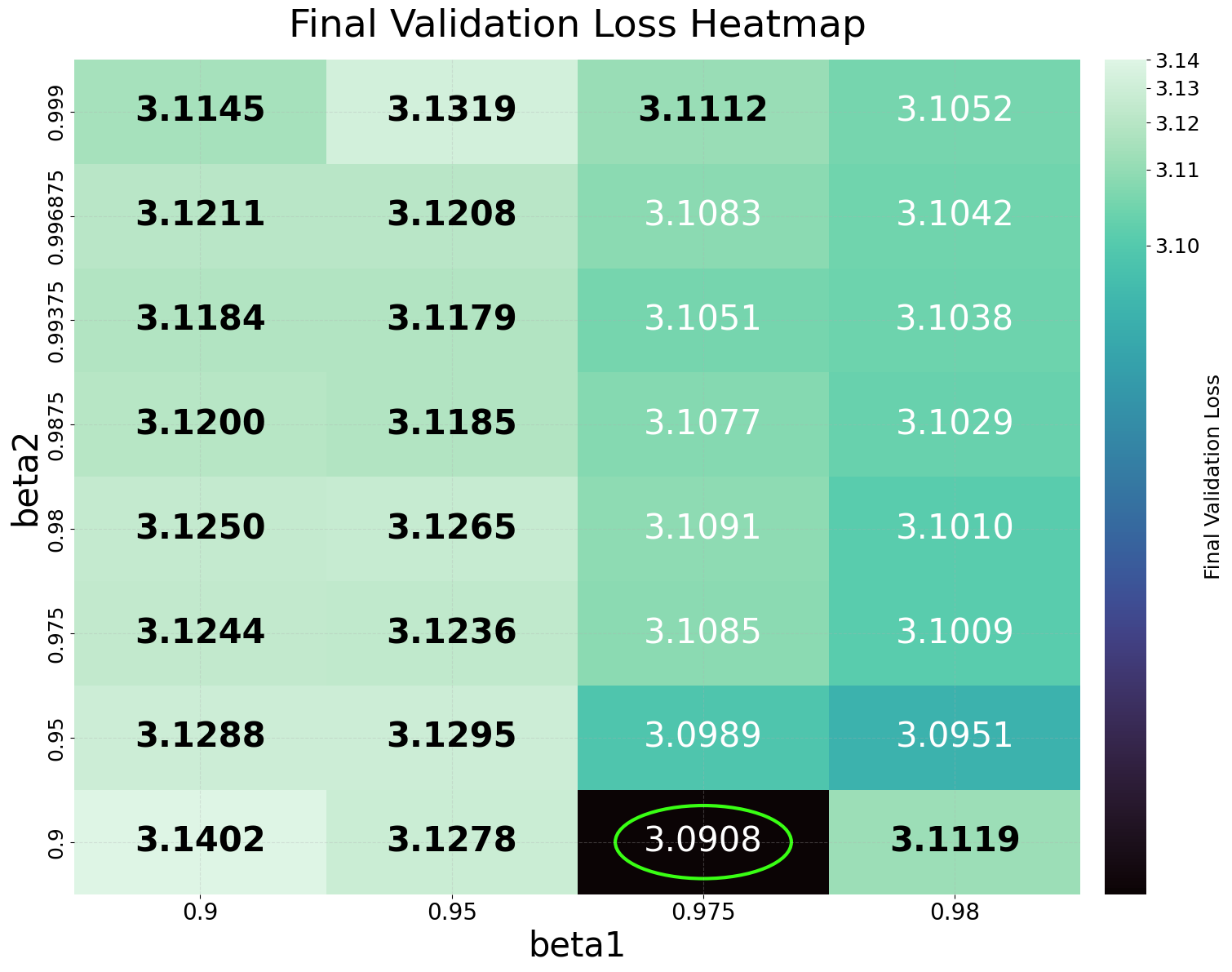}
        \caption{GPA-AdamW ($\beta_{1}$ vs $\beta_{2}$)}
    \end{subfigure}
    \caption{{\bf Comparison of beta hyperparameter sweeps for AdamW, DiLoCo and GPA on Llama-160M model}. The heatmap shows the final validation loss as a function of different values of $\beta_{1}$ and $\beta_{2}$. For clarity, extreme sub-optimal values are shown in black while values within a favorable range are shown in white. The best value is marked with a green circle. During the beta sweeps, the inner learning rates and other hyperparameters are held fixed, so the figures reflect only the correlation between beta hyperparameters. In subsequent phases, we further tune the remaining hyperparameters to determine the final optimal setting for each method.}
    \label{fig:hp_betas}
\end{figure}

\begin{figure}[hbt!]
    \centering
    \begin{subfigure}{0.33\textwidth}
        \centering
        \includegraphics[width=\linewidth]{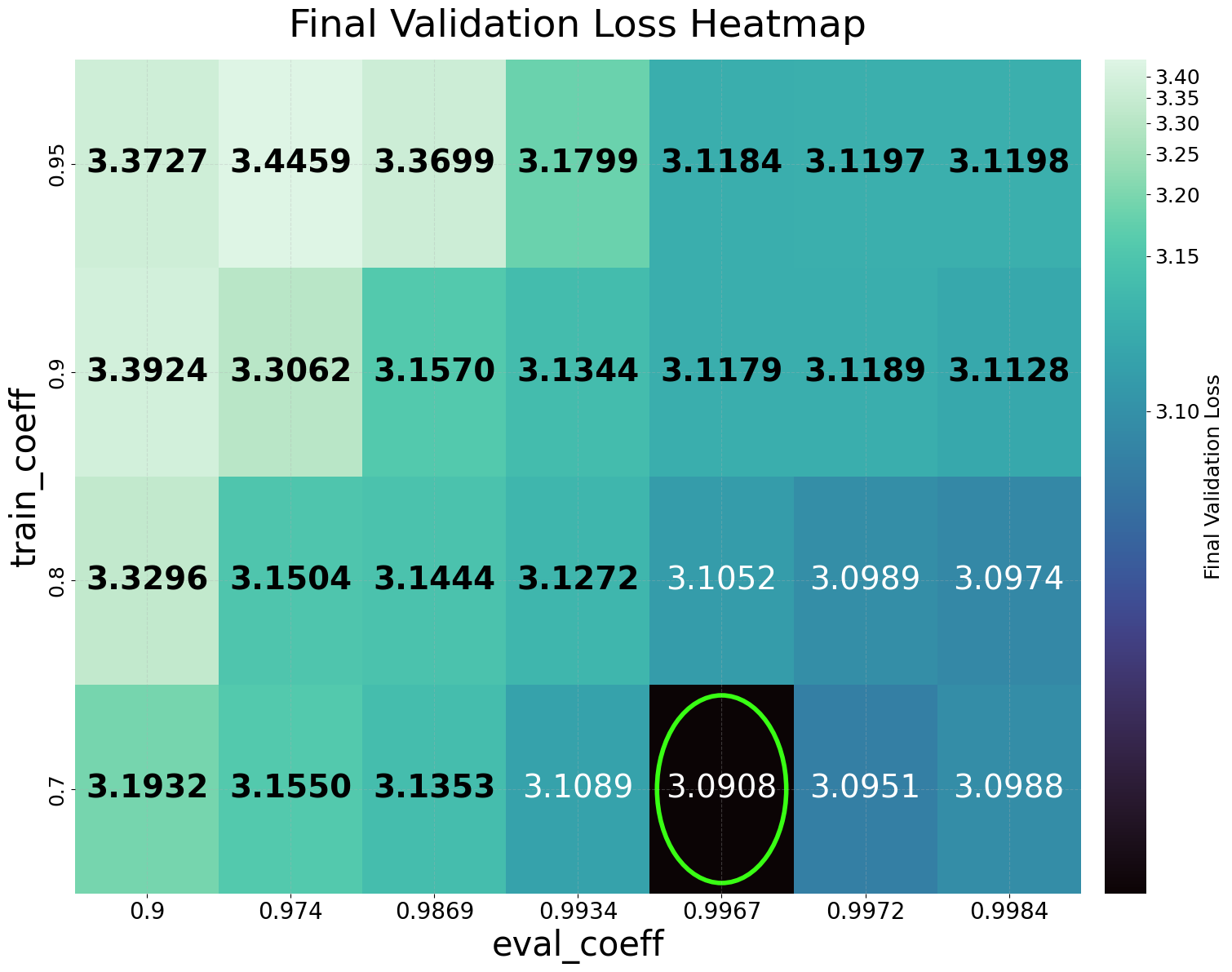}
        \caption{GPA-AdamW ($\mu_{y}$ vs $\mu_{x}$)}
    \end{subfigure}\hfill
    \begin{subfigure}{0.33\textwidth}
        \centering
        \includegraphics[width=\linewidth]{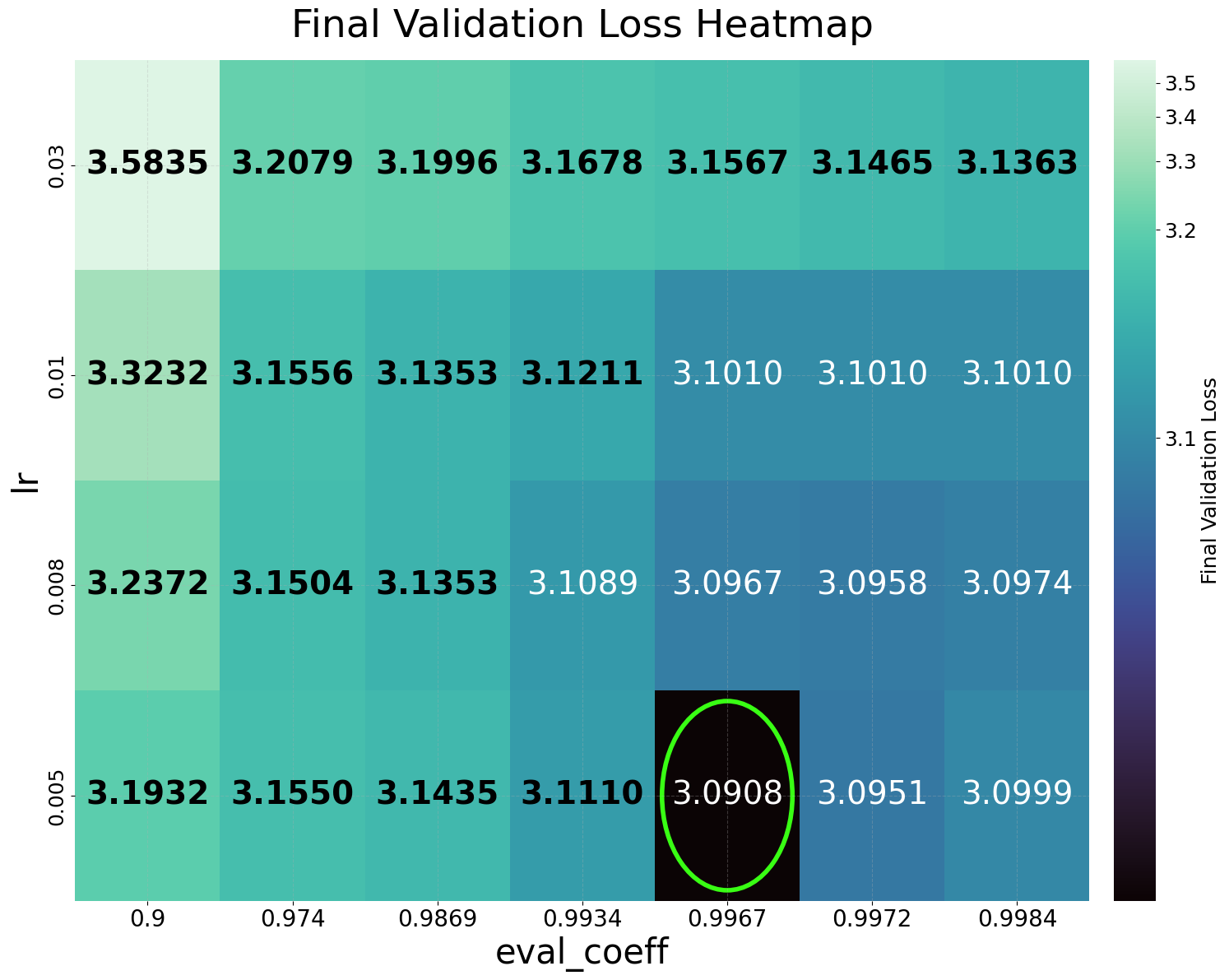}
        \caption{GPA-AdamW ($\gamma$ vs $\mu_{x}$)}
    \end{subfigure}\hfill
    \begin{subfigure}{0.33\textwidth}
        \centering
        \includegraphics[width=\linewidth]{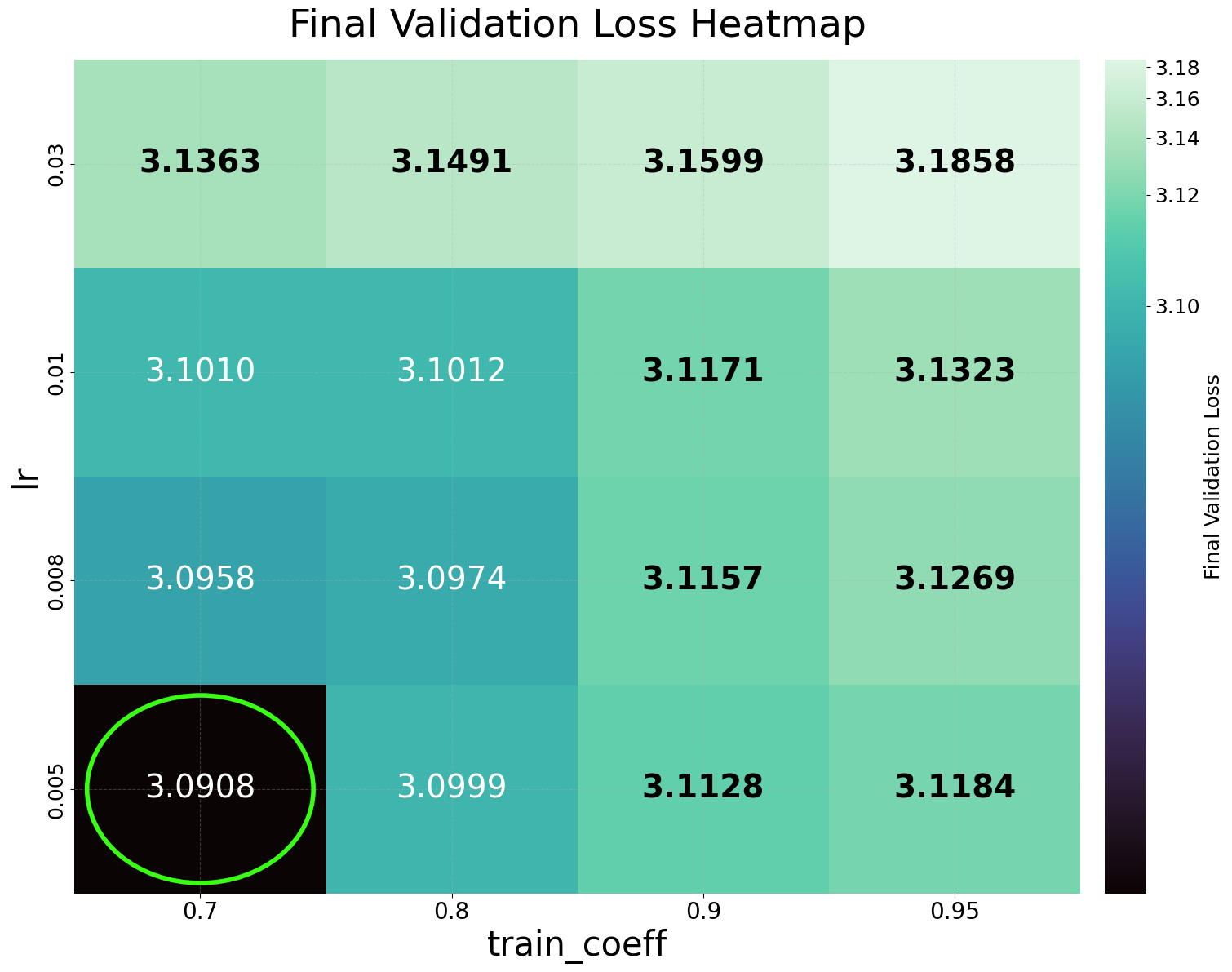}
        \caption{GPA-AdamW ($\gamma$ vs $\mu_{y}$)}
    \end{subfigure}
    \caption{{\bf Analysis of eval and train coefficients $\mu_{x}$ and $\mu_{y}$ for GPA}. The heatmap depicts the final validation loss as a function of different hyperparameters for GPA. Here, \texttt{eval\_coeff} refers to $1 - \mu_x$ and \texttt{train\_coeff} refers to $\mu_y$.}
    \label{fig:hp_gpa_coeffs}
\end{figure}

\begin{figure}[hbt!]
    \centering
    \begin{subfigure}{0.33\textwidth}
        \centering
        \includegraphics[width=\linewidth]{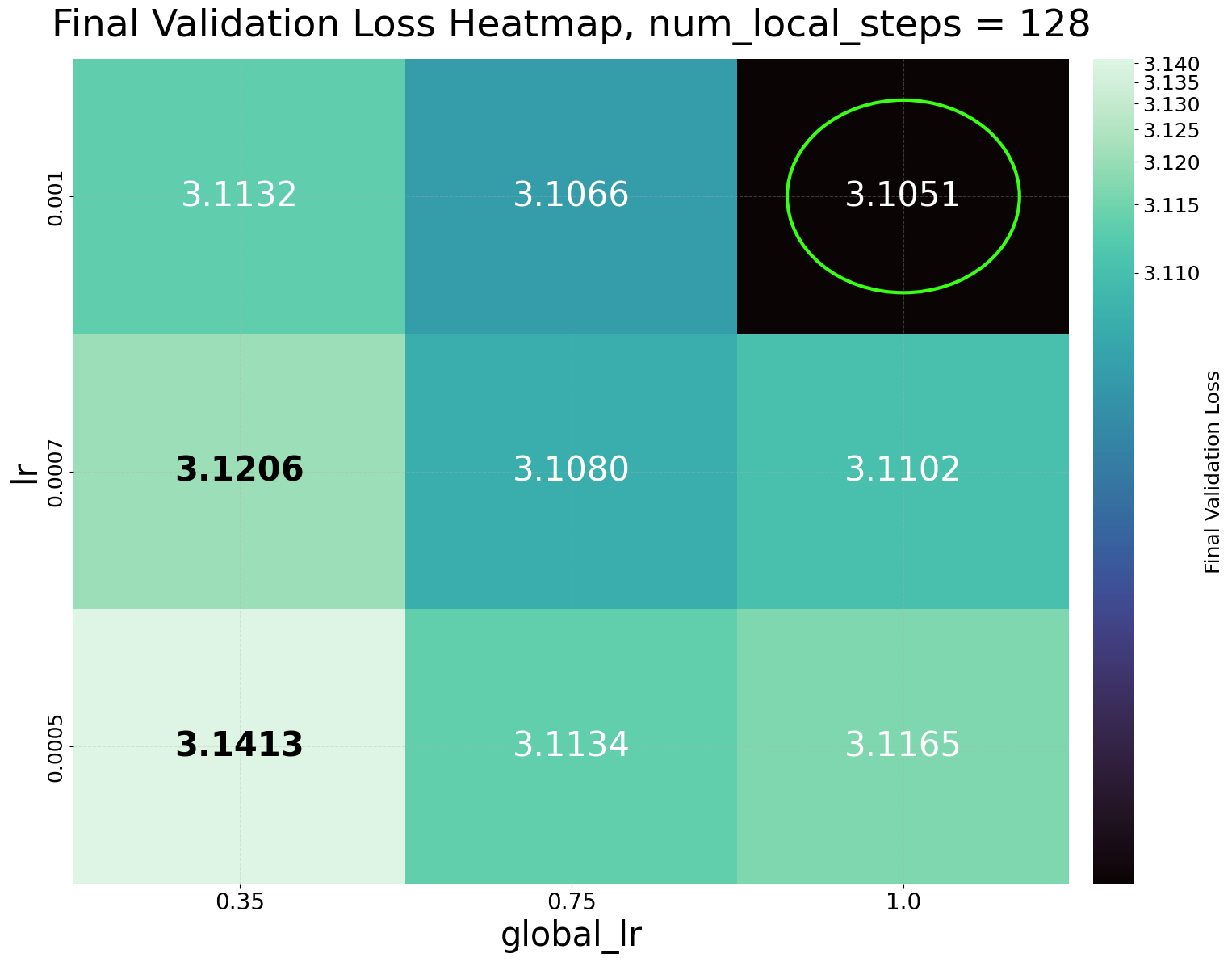}
        \caption{DiLoCo-AdamW ($\gamma$ vs $\tilde{\gamma}$)}
    \end{subfigure}\hfill
    \begin{subfigure}{0.33\textwidth}
        \centering
        \includegraphics[width=\linewidth]{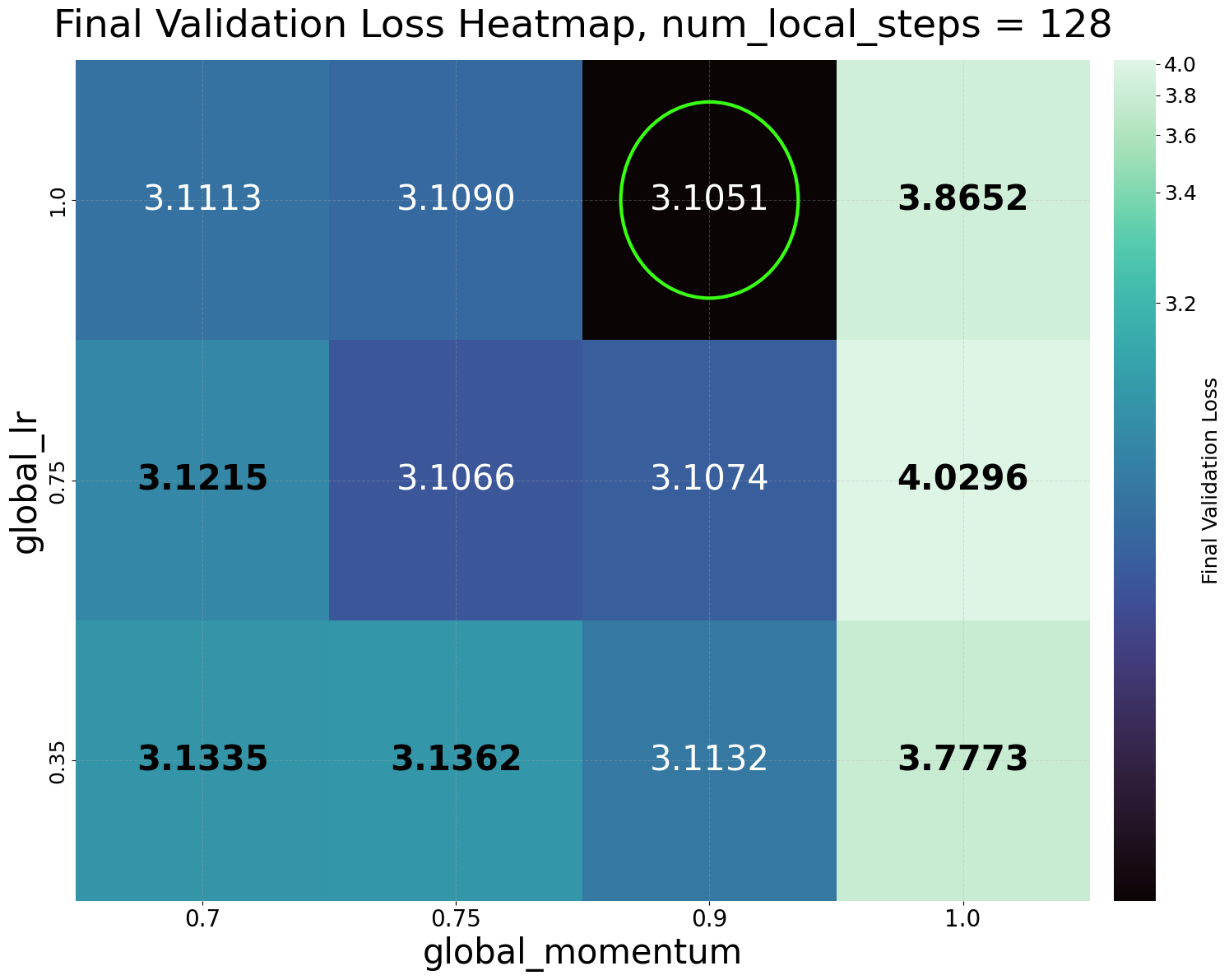}
        \caption{DiLoCo-AdamW ($\tilde{\gamma}$ vs $\mu$)}
    \end{subfigure}\hfill
    \begin{subfigure}{0.33\textwidth}
        \centering
        \includegraphics[width=\linewidth]{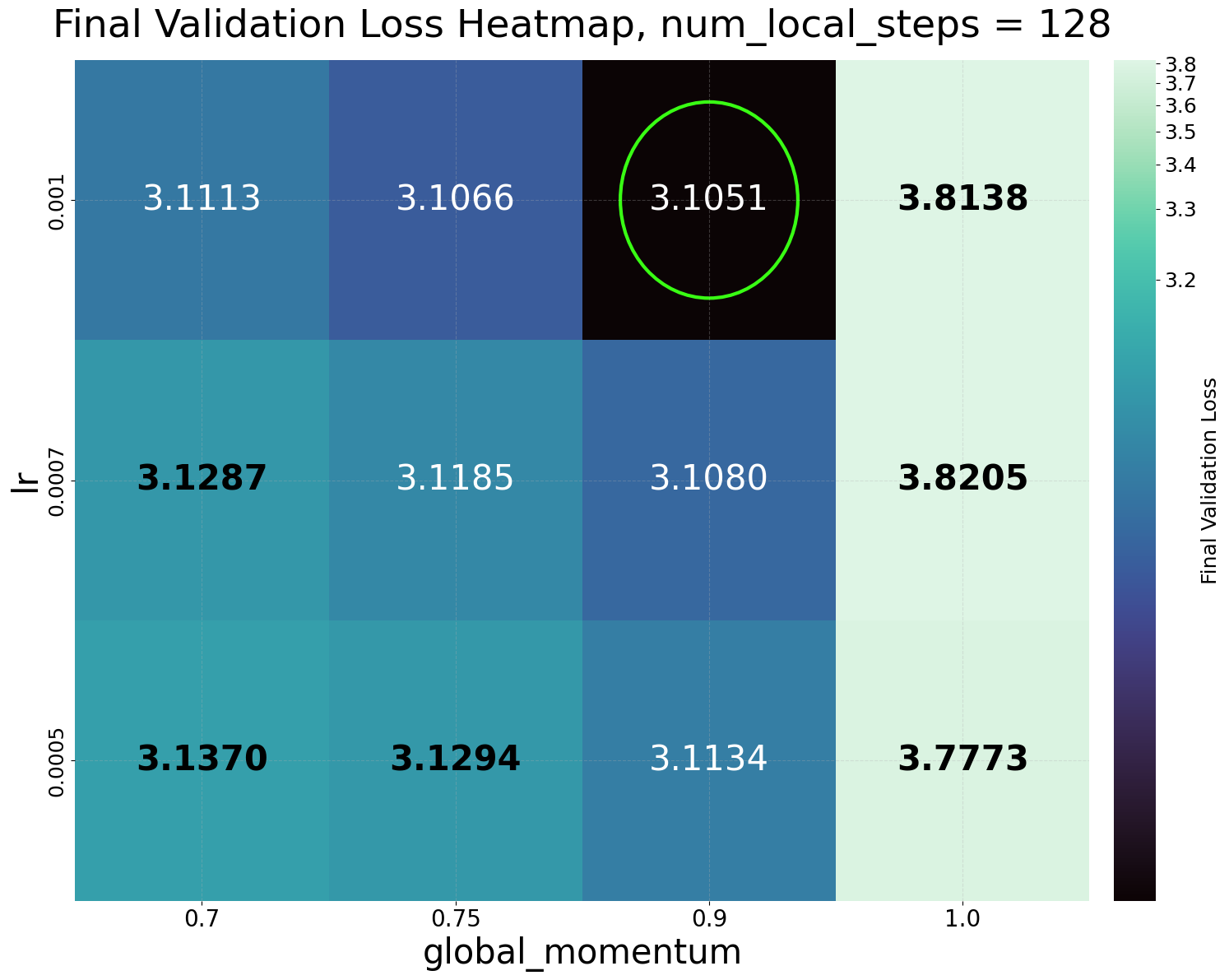}
        \caption{DiLoCo-AdamW ($\gamma$ vs $\mu$)}
    \end{subfigure}
    \caption{{\bf Analysis of DiLoCo's hyperparameters.} The heatmap shows the final validation loss as a function between the inner learning rate $\gamma$, outer learning rate $\tilde{\gamma}$, and momentum hyperparameter $\mu$.}
    \label{fig:hp_diloco}
\end{figure}

{\bf Summary of hyperparameter sweeps.} We summarize the hyperparameter sweeps used in our experiments in Table~\ref{tab:hyperparameter_sweeps}. In Table~\ref{tab:diloco_gpa_mu_x}, we provide a table of conversions from optimal choices of $\mu$ and $H$ in DiLoCo to GPA's choice of $\mu_x$.

\begin{table}[hbt!]
\centering
\caption{Summary of hyperparameter sweeps used in the experiments (Llama-160M). We highlight the best performing hyperparameters in \textbf{bold}.}
\label{tab:hyperparameter_sweeps}

\begin{tabular}{>{\raggedright\arraybackslash}p{3.2cm} 
                >{\columncolor{colorlightgray}}>{\raggedright\arraybackslash}p{3.8cm} 
                >{\columncolor{colorlightgray}}>{\raggedright\arraybackslash}p{3.8cm} 
                >{\columncolor{colorblue}}>{\raggedright\arraybackslash}p{3.8cm}}
\toprule
\textbf{Hyperparameter} & \textbf{AdamW} & \textbf{DiLoCo-AdamW} & \textbf{GPA-AdamW} \\
\midrule
Inner optimizer lr & 7e-4, 3e-3, \textbf{5e-3}, 8e-3, 1e-2 & 5e-4, 7e-4, \textbf{1e-3} & \textbf{5e-3}, 8e-3, 1e-2, 3e-2 \\
Weight decay 
& 0.1  
& 0.1   
& 0.1 \\
Inner Adam $\beta_1$ & 0.9, 0.95, \textbf{0.975}, 0.98 & 0.9, \textbf{0.95}, 0.975, 0.98 & 0.9, 0.95, \textbf{0.975}, 0.98 \\
Inner Adam $\beta_2$ & \textbf{0.9}, 0.95, 0.975, 0.98, 0.9875, 0.99375, 0.996875, 0.999 & 0.9, 0.95, 0.975, 0.98, 0.9875, 0.99375, \textbf{0.996875}, 0.999  & \textbf{0.9}, 0.95, 0.975, 0.98, 0.9875, 0.99375, 0.996875, 0.999 \\
Inner Adam $\epsilon$ & $10^{-8}$ & $10^{-8}$ & $10^{-8}$ \\
Warmup fraction & 10\% & 10\% & 10\% \\
Learning rate schedule & cosine & cosine & cosine \\
Learning rate min fraction & 0.0 & 0.0 & 0.0 \\
GPA coeff $\mu_{y}$ & - & - & \textbf{0.7}, 0.8, 0.9, 0.95 \\
GPA coeff $\mu_{x}$ & - & - & 0.9, 0.9740, 0.9869, 0.9934, 0.9967, 0.9984, 0.9992 \\
Outer optimizer & - & Nesterov & - \\
Outer lr & - & 0.35, \textbf{0.75}, 1.0 & - \\
Outer momentum & - & 0.7. \textbf{0.75}, 0.9, 1.0 & - \\
Communication frequency $H$ & - & 1, 4, 8, 16, 32, 64, 128 & - \\
\bottomrule
\end{tabular}
\end{table}

\begin{table}[hbt!]
    \centering
    \caption{Correspondence between the number of inner steps $H$ and momentum coefficient $\mu_{\diloco}$ in DiLoCo and the momentum coefficient $\mu_x$ in GPA. The values of $\mu_x$ were computed using the expression $\mu_x = \mu_{\diloco}^{1/H}$, with $\mu_{\diloco} = 0.9$ and $H$ as the number of inner steps. For all methods, we use a global batch size of $262$K tokens, sequence length $2048$ and train for $9.6$ billion token for a total of $36622$ steps. We also use a weight decay value of $0.1$.}
    \label{tab:diloco_gpa_mu_x}
    \begin{tabular}{cc}
        \hline
        \textbf{Number of inner steps (DiLoCo)} & \textbf{$\mu_x$ (GPA)} \\
        \hline
        1   & 0.9000 \\
        4   & 0.9740 \\
        8   & 0.9869 \\
        16  & 0.9934 \\
        32  & 0.9967 \\
        64  & 0.9984 \\
        128 & 0.9992 \\
        \hline
    \end{tabular}
\end{table}

\subsection{Hyperparameter Sweeps for Llama-1B}
\label{app:expmt_details_hps_1B}
\noindent {\bf Training setup.} We use the same dataset as in the smaller Llama model, but train longer for 50 billion tokens. To incorporate the larger workload, we utilize four machines (total of 8 GB200 GPUs) for each experiment, with an increased global batch size of 256 sequences with a sequence length of 2048 tokens, resulting in a total batch size of about 524,288 tokens.

{\bf Hyperparameter tuning strategy.} 
\begin{itemize}
    \item For AdamW, we sweep across the top three sets of beta hyperparameters from the smaller model (160M) runs. Because of the model scale, it is computationally prohibitive to carry out a grid search with the same search space used for the 160M model. We set $\epsilon = 10^{-8}$, and sweep the learning rate from $1\cdot 10^{-3}$ through $8\cdot 10^{-3}$.
    \item For DiLoCo-AdamW, we followed the same procedure for beta hyperparameters as for AdamW. We sweep the outer learning rate in $\{0.35, 0.75, 1.0\}$ and the outer momentum in  $\{0.7, 0.75, 0.9, 1.0\}$. We tuned the learning rate in $\{5\cdot 10^{-4}, 1\cdot 10^{-3} \}$. (We found even larger learning rates to be unstable for DiLoCo.) We also sweep through the number of inner steps in $\{16, 32, 64, 128\}$.
    \item For GPA-AdamW, we follow the same guidelines for tuning beta hyperparameters. We sweep $\mu_x$ based on the number of inner steps in DiLoCo (see Table~\ref{tab:diloco_gpa_mu_x}) corresponding to $\{16, 32, 64, 128\}$. We tune $\mu_y$  in $\{0.7, 0.8, 0.9, 0.95\}$ since these were found to be more or less robust values based on several GPA runs. We tuned the learning rate in $\{1\cdot 10^{-3}, 8\cdot 10^{-3}\}$.
\end{itemize}

\subsection{Hyperparameters for Llama-8B}
\label{app:expmt_details_hps_llama_8b}
We use a Llama-3 architecture with 32 layers, 32 attention heads in each, hidden dimension 4K, RoPE positional embeddings, and context length of 8K. The total number of parameters is 8 billion. The learning rate is warmed up over 2,000 iterations and then annealed to $0.001 \times$ of the peak value. We train with a batch size of 4 million tokens over 24,000 steps, with 100 billion total number of tokens processed. The data consists primarily of \texttt{dclm-baseline-1.0} and \texttt{github} code.

Due to the high cost of each run, we did a preliminary hyperparameter search with a version of the model downsized to 250M parameters, and then did a more limited search on the full 8B model. The explored combinations of hyperparameters include learning rates of 0.001, 0.003, 0.005, 0.007, and weight decay values of 0.07, 0.1, 0.15, and 0.2. We used $\beta_1=0.9, \beta_2=0.95$, $\epsilon=10^{-8}$ and gradient norm clipping at value 1. Both best performing variant of AdamW and GPA-AdamW use weight decay 0.1, and learning rate 0.005. GPA-AdamW performed similarly with any $\mu_y\in\{0.7, 0.8, 0.9\}$, giving final loss values 1.8645, 1.8655, 1.8662, which are all better than AdamW's loss value 1.8732.

\subsection{Hyperparameter Sweeps for ViT ImageNet Experiments}
\label{app:expmt_details_hps_vit}
For data augmentations, we use RandAugment with strategy ``rand-m15-n2'', cutmix $\alpha=1$, mixup with probability $0.5$ and $\alpha=0.8$, no dropout, and no label smoothing. This setup has been reported to provide high validation accuracy values. For privacy reasons, we use the version of ImageNet-1k with faces blurred.

We test values of $\mu_y$ from $\{0.3, 0.5, 0.7, 0.8, 0.9\}$. The difference between the resulting curves is less than $0.5\%$ and we found that the values of 0.7, 0.8, and 0.9 gave similar performance close to optimal. We provide the results with identical hyperparameters (weight decay 0.1, $\beta_1=0.8$, $\beta_2=0.999$) but varying values of $\mu_y$ and learning rates in Figure~\ref{fig:compare_muy}, alongside the results for AdamW with weight decay 0.1, $\beta_1=0.9$, $\beta_2=0.999$. All numbers are obtained by averaging over 12 random seeds.

\begin{figure}[hbt!]
    \centering
    \includegraphics[width=0.48\linewidth]{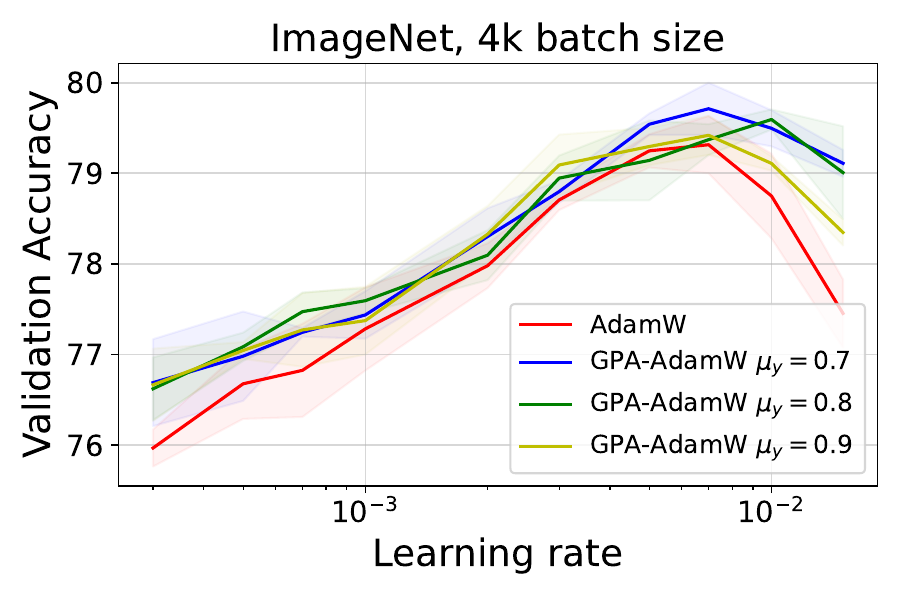}
    \caption{Comparison of GPA-AdamW with varying $\mu_y$ and AdamW on ImageNet ViT-S/16 from \texttt{timm} with data augmentations using a {\bf batch size of 4,096 samples}.}
    \label{fig:compare_muy}
\end{figure}

As a reminder, all methods use linear learning rate warmup over the first 5 epochs, and gradient clipping at norm 1. 

\begin{table}[hbt!]
\centering
\caption{Summary of hyperparameter sweeps used in the ImageNet experiment with \textbf{batch size 4,096}. We highlight the best performing hyperparameters in \textbf{bold}.}
\label{tab:hyperparameter_sweeps}

\begin{tabular}{>{\raggedright\arraybackslash}p{3.15cm} 
                >{\columncolor{colorlightgray}}>{\raggedright\arraybackslash}p{2.95cm} 
                >{\columncolor{colorlightgray}}>{\raggedright\arraybackslash}p{2.95cm} 
                >{\columncolor{colorlightgray}}>{\raggedright\arraybackslash}p{2.95cm}
                >{\columncolor{colorblue}}>{\raggedright\arraybackslash}p{2.95cm}}
\toprule
\textbf{Hyperparameter} & \textbf{AdamW} & \textbf{DiLoCo-AdamW} & \textbf{Schedule-Free-AdamW} & \textbf{GPA-AdamW} \\
\midrule
Inner optimizer lr
& 0.002, 0.003, \textbf{0.005}, 0.007, 0.01 
& 0.0005, 0.0007, 0.001, \textbf{0.0015}, 0.002 
& 0.005, 0.007, \textbf{0.01}, 0.015, 0.02
& 0.003, 0.005, \textbf{0.007}, 0.01, 0.015 \\
Weight decay
& 0.07, \textbf{0.1}, 0.15, 0.2 
& 0.01, 0.02, 0.03, \textbf{0.05}, 0.07, 0.1, 0.15 
& 0.02, 0.03, 0.05, 0.07, \textbf{0.1}, 0.15
& 0.03, 0.05, 0.07, \textbf{0.1}, 0.15, 0.2 \\
Inner Adam $\beta_1$ & 0.8, \textbf{0.9} & \textbf{0.8}, 0.9 & \textbf{0.8}, 0.9 & \textbf{0.8}, 0.9 \\
Inner Adam $\beta_2$ & 0.999 & 0.999  & 0.999 & 0.999 \\
Inner Adam $\epsilon$ & $10^{-8}$ & $10^{-8}$ & $10^{-8}$ & $10^{-8}$ \\
Learning rate schedule & cosine & cosine & no schedule & cosine \\
GPA coeff $\mu_{y}$ / SF $\beta$ & - & - & 0.5, 0.8, 0.9, \textbf{0.95}, 0.98 & 0.3, 0.5, \textbf{0.7}, 0.8, 0.9 \\
GPA coeff $\mu_{x}$ & - & - & - & 0.9934 \\
Outer optimizer & - & Nesterov & - & - \\
Outer lr & - & 0.7 & - & - \\
Outer momentum & - & 0.9 & - & - \\
Communication frequency $H$ & - & 16 & - & - \\
\bottomrule
\end{tabular}
\end{table}

\begin{table}[hbt!]
\centering
\caption{Summary of hyperparameter sweeps used in the ImageNet experiment with \textbf{batch size 16,384}. We highlight the best performing hyperparameters in \textbf{bold}.}
\label{tab:hyperparameter_sweeps}

\begin{tabular}{>{\raggedright\arraybackslash}p{3.15cm} 
                >{\columncolor{colorlightgray}}>{\raggedright\arraybackslash}p{2.95cm} 
                >{\columncolor{colorlightgray}}>{\raggedright\arraybackslash}p{2.95cm} 
                >{\columncolor{colorlightgray}}>{\raggedright\arraybackslash}p{2.95cm} 
                >{\columncolor{colorblue}}>{\raggedright\arraybackslash}p{2.95cm}}
\toprule
\textbf{Hyperparameter} & \textbf{AdamW} & \textbf{DiLoCo-AdamW} & \textbf{Schedule-Free-AdamW} & \textbf{GPA-AdamW} \\
\midrule
Inner optimizer lr 
& 0.005, 0.007, 0.01, \textbf{0.015}, 0.02 
& 0.001, 0.0015, 0.002, \textbf{0.003}, 0.005, 0.007
& 0.01, \textbf{0.015}, 0.02
& 0.005, 0.007, 0.01, 0.015, \textbf{0.02}, 0.03 \\
Weight decay 
& 0.03, 0.05, \textbf{0.07}, 0.1, 0.15, 0.2
& 0.07, \textbf{0.1}, 0.15, 0.2 
& 0.07, \textbf{0.1}, 0.15
& 0.07, \textbf{0.1}, 0.15, 0.2 \\
Inner Adam $\beta_1$ & 0.8, \textbf{0.9} & \textbf{0.8}, 0.9 & \textbf{0.8}, 0.9 & \textbf{0.8}, 0.9 \\
Inner Adam $\beta_2$ & 0.999 & 0.999  & 0.999 & 0.999 \\
Inner Adam $\epsilon$ & $10^{-8}$ & $10^{-8}$ & $10^{-8}$ & $10^{-8}$ \\
Learning rate schedule & cosine & cosine & no schedule & cosine \\
GPA coeff $\mu_{y}$ / SF $\beta$ & - & - & 0.7, \textbf{0.8}, 0.9, 0.95 & 0.5, 0.7, 0.8, \textbf{0.9} \\
GPA coeff $\mu_{x}$ & - & - & - & 0.9934 \\
Outer optimizer & - & Nesterov & - & - \\
Outer lr & - & 0.7 & - & - \\
Outer momentum & - & 0.9 & - & - \\
Communication frequency $H$ & - & 16 & - & - \\
\bottomrule
\end{tabular}
\end{table}

\end{document}